\documentclass{article}

\usepackage{microtype}
\usepackage{graphicx}
\usepackage{subcaption}
\usepackage{booktabs} 
\usepackage{multirow} 
\usepackage[table]{xcolor}
\usepackage{multirow}
\usepackage{booktabs}
\usepackage{graphicx}
\usepackage{amsmath}
\usepackage{fontawesome5}
\usepackage{wrapfig} 

\usepackage{fontawesome5}
\usepackage{hyperref}

\usepackage[preprint]{icml2026}

\usepackage{amsmath}
\usepackage{amssymb}
\usepackage{mathtools}
\usepackage{amsthm}

\usepackage[capitalize,noabbrev]{cleveref}

\theoremstyle{plain}
\newtheorem{theorem}{Theorem}[section]
\newtheorem{proposition}[theorem]{Proposition}
\newtheorem{lemma}[theorem]{Lemma}

\theoremstyle{definition}
\newtheorem{definition}[theorem]{Definition}
\newtheorem{assumption}[theorem]{Assumption}
\theoremstyle{remark}

\usepackage{tcolorbox}
\tcbuselibrary{skins, breakable, theorems}
\usepackage{soul}

\usepackage[textsize=tiny]{todonotes}

\definecolor{myred}{RGB}{156, 39, 33}
\definecolor{myblue}{RGB}{31, 90, 153}

\definecolor{TakeawayBg}{HTML}{F4F5FF} 
\definecolor{TakeawayLabel}{HTML}{666666} 
\definecolor{TakeawayText}{HTML}{000000}  

\sethlcolor{TakeawayBg}
\definecolor{CorrectGreen}{HTML}{2EA121}
\definecolor{WrongRed}{HTML}{D12121}
\tcbset{
  takeaway/.style={
    enhanced,
    colback=TakeawayBg,
    colframe=TakeawayText,
    arc=2mm,
    boxrule=0.8pt,
    left=3mm,right=3mm,
    top=2mm,bottom=2mm,
    fonttitle=\bfseries,
    coltitle=white,
    attach boxed title to top left={
      yshift=-2mm,
      xshift=3mm
    },
    boxed title style={
      colback=TakeawayLabel,
      colframe=TakeawayLabel,
      arc=1mm,            
      left=2mm,right=2mm,
      top=0.1mm,bottom=0.1mm
    },
  }
}
\icmltitlerunning{SFT Conflicts, RL Coexists: A Theoretical and Empirical Analysis of Multi-Task Learning for LLMs}

\begin{document}

\twocolumn[
  \icmltitle{SFT Conflicts, RL Coexists: A Theoretical and Empirical \\ Analysis of Multi-Task Learning Paradigms for LLMs}



  \icmlsetsymbol{equal}{*}





  \begin{icmlauthorlist}
    \icmlauthor{Kejian Zhu}{cas,ucas}
    \icmlauthor{Zhuoran Jin}{cas,ucas}
    \icmlauthor{Shangqing Tu}{thu}
    \icmlauthor{Hongbang Yuan}{cas,ucas}
    \icmlauthor{Yushi Bai}{thu} \\
    \icmlauthor{Kang Liu}{cas,ucas}
    \icmlauthor{Juanzi Li}{thu}
    \icmlauthor{Jun Zhao}{cas,ucas}
\end{icmlauthorlist}

\icmlaffiliation{cas}{The Key Laboratory of Cognition and Decision Intelligence for Complex Systems, Institute of Automation, Chinese Academy of Sciences, Beijing, China}
\icmlaffiliation{ucas}{School of Artificial Intelligence, University of Chinese Academy of Sciences, Beijing, China}
\icmlaffiliation{thu}{Tsinghua University, Beijing, China}

\icmlcorrespondingauthor{Jun Zhao}{jzhao@nlpr.ia.ac.cn}

  \icmlkeywords{Machine Learning, ICML}

  \vskip 0.3in
]



\printAffiliationsAndNotice{}  

\begin{abstract}

Supervised Fine-Tuning (SFT) and Reinforcement Learning (RL) exhibit fundamentally different behaviors in enhancing multi-task reasoning for large language models (LLMs). Our preliminary experiments revealed a phenomenon: SFT suffers from severe task conflicts under multi-stage training, whereas RL enables stable coexistence across diverse tasks. Empirically, we trace this to the parameter level, observing that RL induces sparse and approximately orthogonal updates across tasks. We provide a theoretical explanation for this mechanism by analyzing multi-task gradient interference. Our results reveal a distinction: interference in SFT is norm-limited, scaling with the absolute gradient magnitude, whereas interference in RL is variance-limited, bounded by the gradient variance induced by advantage normalization and on-policy optimization. This small variance bound yields near-orthogonal optimization directions across tasks. Leveraging this insight, we propose Parallel-RL, a paradigm that decouples multi-task training, significantly improving efficiency and flexibility. 
\href{https://github.com/GaryStack/Parallel-RL}
{\faGithub\ \textbf{Code}}

\end{abstract}

\section{Introduction}


Enhancing the reasoning capabilities of large language models (LLMs) is a central research focus, with Supervised Fine-Tuning (SFT) and Reinforcement Learning (RL) as the dominant paradigms~\cite{jaech2024openai,guo2025deepseek}. Recent works have explored their differences in single-task training~\cite{chu2025sft, matsutani2025rl}. But their behaviors in multi-task settings, which is also crucial for achieving Artificial General Intelligence (AGI), remain underexplored. We observe that in most existing works, researchers typically construct mixed datasets from multiple tasks for SFT~\cite{dong2023abilities, park2025instruct}. In contrast, for RL, apart from using mixed-data training, many works commonly adopts multi-stage training, where each stage focuses on a single task~\cite{narvekar2020curriculum, cho2024hard}. To explain this divergence, we conduct a comprehensive \textbf{empirical and theoretical analysis} comparing SFT and RL in multi-task training.

\begin{figure}[t] 
    \centering
  \includegraphics[width=\linewidth]{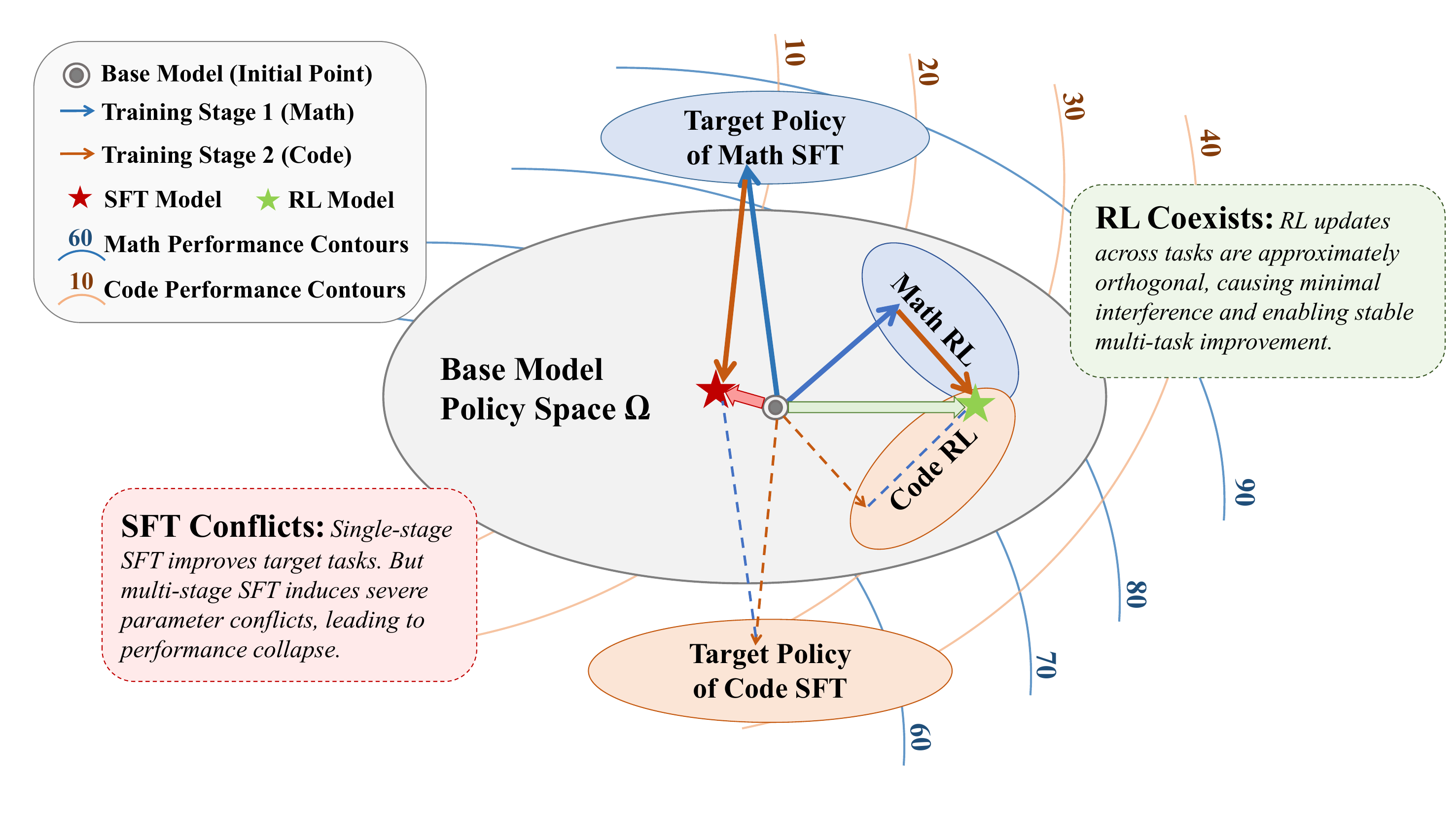}
  \caption{SFT and RL exhibit significant differences during \textbf{multi-stage} training. SFT shows task conflicts, which impairs performance, while RL can steadily improve performance across tasks.}
  \label{fig:intro}
  \vspace{-8pt}
\end{figure}

We first tested the performance differences between SFT and RL when using mixed-data and multi-stage strategy for multi-task training. Table~\ref{tab:dif-strategy} reveals a striking contrast: SFT suffers from performance collapses during multi-stage training, showing an average decline of 23.1\% compared to the base model. Conversely, RL demonstrates robustness improvement (with gains of 24.9\%) in multi-stage settings. Therefore, one of the key questions of our research is this significant difference: \textbf{why SFT collapses under multi-stage training whereas RL achieves stable, cumulative performance growth on different tasks across multi-stage training?} To answer this question, we examined the impact of single-task training on untrained tasks. As shown in Table \ref{tab:dif-task-generalization}, while SFT improves the target task, it causes substantial degradation in others. Conversely, RL enhances the target task while preserving performance on untrained tasks. We term this phenomenon \textbf{SFT Conflicts} and \textbf{RL Coexists}. This distinction in generalization directly answers our inquiry. As illustrated in Figure \ref{fig:intro}, although SFT parameters updates are toward improving performance of each single task during single-stage training, severe conflicts arise across multi-stages, leading to performance collapse in the final multi-stage SFT model. In contrast, parameter updates induced by RL across different tasks are approximately orthogonal and can therefore coexist, enabling multi-stage RL to stably improve performance across tasks over stages.




To understand the mechanisms behind these phenomena, we conduct an \textbf{empirical analysis} of the parameter updates $\Delta W$, as detailed in Section \ref{sec:parameter-level-empirical}. As illustrated in Figure \ref{fig:intro}, we highlight two observations. First, regarding the \textit{magnitude}, RL induces significantly smaller updates, with an average magnitude over two orders of magnitude smaller than that of SFT. Second, regarding the interference between different $\Delta W$, the similarity of $\Delta W$ across distinct tasks is negligible for RL ($\sim 10^{-5}$) but substantial for SFT ($\sim 10^{-1}$).


We then provide a theoretical analysis in Section \ref{sec:theoretical_analysis} to explain these observations by examining their respective gradient descent directions. Specifically, the two key differences between their gradients lie in the \textbf{Advantage Function} and \textbf{Policy Source}. Thus in Sections \ref{sec:sparsity_magnitude} and \ref{sec:orthogonality}, we qualitatively analyze how the advantage function and the on-policy nature of RL drive the observed sparsity of $\Delta W$ and small interference between different tasks. Furthermore, in Section \ref{sec:upper bound}, we derive the upper bound of gradient interference between tasks. Our proofs demonstrate that SFT interference is \textbf{norm-limited} and depends on the absolute gradient magnitude. Conversely, RL interference is \textbf{variance-limited} and is bounded by the variance of multiple rollouts (caused by both advantage function and on-policy). This theorem clarifies the distinct interference levels in multi-task settings and reflects the fundamental objectives of SFT and RL.


Building on the insight that RL optimization directions across tasks exhibit approximate orthogonality, we propose \textbf{Parallel-RL}, a novel paradigm for multi-task RL. In this framework, tasks are trained independently in parallel and their updates are subsequently merged. Experiments show that Parallel-RL matches single-task performance while improving training efficiency, and ablation studies further confirm its ability to effectively decouple task-specific capabilities. Guided by our analysis, we note that Parallel-RL goes beyond model merging and additionally requires techniques for task selection and single-task training.



\vspace{-5pt}
\section{Preliminaries}

In this section, we aim to provide preliminary evidence revealing the fundamental differences between SFT and RL in multi-task training paradigms, and to demonstrate their generalization on other tasks when trained on single task.

\subsection{Experiment Setup}
\label{sec:setup for preliminaries}

We employ DeepSeek-R1-Distill-Qwen-1.5B as our base model. Our experiments cover four representative reasoning domains: math, science, coding, and logic. For RL, we adopt the GRPO algorithm \cite{guo2025deepseek}. We utilized LoRA to facilitate efficient fine-tuning and analysis for both RL and SFT. Here we select MATH500, MMLU, Knights \& Knaves, and LiveCodeBench as benchmarks for Math, Science, Logic, and Code tasks, respectively. Implementation details are provided in Appendix \ref{appendix:experiment details}.

\subsection{Differences of Training Paradigms}

There are two paradigms for multi-task learning: \textbf{Mixed-Data}, which aggregates datasets from diverse tasks for joint training, and \textbf{Multi-Stage}, where the model sequentially learns distinct tasks in separate phases.
For SFT, Mixed-Data training has become the consensus approach to mitigate forgetting~\cite{dong2023abilities, park2025instruct}. In contrast, while also utilizing Mixed-Data, RL often follows a Multi-Stage paradigm~\cite{narvekar2020curriculum, cho2024hard}. To understand this difference, we conducted an experiment comparing two strategies, with results presented in Table~\ref{tab:dif-strategy}.

\begin{table}[!htbp]
\caption{Accuracy (\%) of SFT and RL with different strategies. Superscripts denote performance changes relative to the Base Model.}
\vspace{0.1in}
\scriptsize 
\centering
\renewcommand{\arraystretch}{1.6} 

\resizebox{\linewidth}{!}{
\begin{tabular}{lcccc}
\toprule[1pt]

\scriptsize \textbf{Strategy} & \scriptsize \textbf{Math} & \scriptsize \textbf{Science} & \scriptsize \textbf{Logic} & \scriptsize \textbf{Code} \\ \hline
\noalign{\vspace{-3pt}}
\multicolumn{5}{c}{\emph{Baseline}} \\
\noalign{\vspace{-3pt}}
Base Model  & 
$83.1_{\raisebox{-0.5pt}{\tiny $\pm$1.6}}$ & 
$34.9_{\raisebox{-0.5pt}{\tiny $\pm$1.1}}$ & 
$31.0_{\raisebox{-0.5pt}{\tiny $\pm$2.8}}$ & 
$15.0_{\raisebox{-0.5pt}{\tiny $\pm$2.3}}$ \\ \hline

Mixed Data SFT  & 
$84.6_{\raisebox{-0.5pt}{\tiny $\pm$1.6}}^{\raisebox{0.5pt}{\tiny \textcolor[HTML]{00008B}{$\uparrow$1.5}}}$ & 
$38.9_{\raisebox{-0.5pt}{\tiny $\pm$1.1}}^{\raisebox{0.5pt}{\tiny \textcolor[HTML]{00008B}{$\uparrow$4.0}}}$ & 
$34.0_{\raisebox{-0.5pt}{\tiny $\pm$2.5}}^{\raisebox{0.5pt}{\tiny \textcolor[HTML]{00008B}{$\uparrow$3.0}}}$ & 
$16.0_{\raisebox{-0.5pt}{\tiny $\pm$2.2}}^{\raisebox{0.5pt}{\tiny \textcolor[HTML]{00008B}{$\uparrow$1.0}}}$ \\

Multi Stage SFT & 
$78.2_{\raisebox{-0.5pt}{\tiny $\pm$1.7}}^{\raisebox{0.5pt}{\tiny \textcolor[HTML]{8B0000}{$\downarrow$4.9}}}$ & 
$31.1_{\raisebox{-0.5pt}{\tiny $\pm$1.2}}^{\raisebox{0.5pt}{\tiny \textcolor[HTML]{8B0000}{$\downarrow$3.8}}}$ & 
$9.0_{\raisebox{-0.5pt}{\tiny $\pm$2.1}}^{\raisebox{0.5pt}{\tiny \textcolor[HTML]{8B0000}{$\downarrow$22.0}}}$  & 
$14.3_{\raisebox{-0.5pt}{\tiny $\pm$2.4}}^{\raisebox{0.5pt}{\tiny \textcolor[HTML]{8B0000}{$\downarrow$0.7}}}$ \\ \hline

Mixed Data RL  & 
$85.2_{\raisebox{-0.5pt}{\tiny $\pm$1.5}}^{\raisebox{0.5pt}{\tiny \textcolor[HTML]{00008B}{$\uparrow$2.1}}}$ & 
$43.2_{\raisebox{-0.5pt}{\tiny $\pm$1.1}}^{\raisebox{0.5pt}{\tiny \textcolor[HTML]{00008B}{$\uparrow$8.3}}}$ & 
$37.0_{\raisebox{-0.5pt}{\tiny $\pm$2.4}}^{\raisebox{0.5pt}{\tiny \textcolor[HTML]{00008B}{$\uparrow$6.0}}}$ & 
$15.7_{\raisebox{-0.5pt}{\tiny $\pm$2.3}}^{\raisebox{0.5pt}{\tiny \textcolor[HTML]{00008B}{$\uparrow$0.7}}}$ \\

Multi Stage RL & 
$86.6_{\raisebox{-0.5pt}{\tiny $\pm$1.5}}^{\raisebox{0.5pt}{\tiny \textcolor[HTML]{00008B}{$\uparrow$3.5}}}$ & 
$49.3_{\raisebox{-0.5pt}{\tiny $\pm$1.0}}^{\raisebox{0.5pt}{\tiny \textcolor[HTML]{00008B}{$\uparrow$14.4}}}$ & 
$43.0_{\raisebox{-0.5pt}{\tiny $\pm$2.6}}^{\raisebox{0.5pt}{\tiny \textcolor[HTML]{00008B}{$\uparrow$12.0}}}$ & 
$17.3_{\raisebox{-0.5pt}{\tiny $\pm$2.1}}^{\raisebox{0.5pt}{\tiny \textcolor[HTML]{00008B}{$\uparrow$2.3}}}$ \\ 

\bottomrule[1pt]
\end{tabular}}
\label{tab:dif-strategy}
\end{table}
\vspace{-5pt}
We observe that under multi-stage training, SFT suffers a significant decline across tasks, averaging \textbf{23.1\% below} the base model, whereas mixed-data SFT yields a \textbf{7.4\% gain}. Conversely, RL exhibits robust improvements in both multi-stage and mixed-data settings, achieving average gains of \textbf{24.9\% and 12.6\%}, respectively. The relatively smaller gain in mixed-data RL may stem from factors such as gradient imbalance across tasks~\cite{wu2025imbalancedgradientsrlposttraining}, which is not the key focus of our work. We focus on the more striking disparity: \textbf{why multi-stage SFT leads to performance collapse while RL sustains stable and significant growth?} 

\vspace{-2pt}
\begin{tcolorbox}[takeaway,title={Takeaway 2.2}]
Multi-stage SFT leads to performance collapse, whereas RL enables stable and cumulative performance growth across distinct tasks.
\end{tcolorbox}
\vspace{-5pt}

\subsection{Generalization of Single-Task Training}

\label{sec:preliminary single task generalization}

To investigate the underlying reason for the question above, we analyze the impact of single-task training on the model's performance across other tasks. 

As shown in Table ~\ref{tab:dif-task-generalization}, while SFT improves the target task by an average of 4.0\%, it comes at the cost of degradation in untrained tasks, resulting in an average decline of 5.1\% across other tasks. In contrast, RL achieves a superior average gain of 6.8\% on the target task while simultaneously exerting a positive influence on others, yielding an average improvement of 2.3\% on untrained tasks. 
These observations indicate that SFT and RL differ fundamentally in how they affect other capabilities during learning on a single task. This phenomenon, which we term \textbf{SFT Conflicts} and \textbf{RL Coexists}, directly determines their suitability for different paradigms in multi-task settings.

\begin{table}[!htbp]
\caption{Accuracy (\%) on all tasks after single-task training using RL and SFT. Superscripts: performance changes to Base Model.}
\vspace{0.1in}
\scriptsize
\centering
\renewcommand{\arraystretch}{1.65} 
\resizebox{\linewidth}{!}{
\begin{tabular}{lcccc}
\toprule[1pt]
\scriptsize \textbf{Train-Set} & \scriptsize \textbf{Math} & \scriptsize \textbf{Science} & \scriptsize \textbf{Logic} & \scriptsize \textbf{Code} \\ \hline
\noalign{\vspace{-3pt}}
\multicolumn{5}{c}{\emph{Baseline}} \\ 
\noalign{\vspace{-3pt}}
Base Model  & 
$82.0_{\raisebox{-0.5pt}{\tiny $\pm$1.6}}$ & 
$34.9_{\raisebox{-0.5pt}{\tiny $\pm$1.1}}$ & 
$31.0_{\raisebox{-0.5pt}{\tiny $\pm$2.8}}$ & 
$15.0_{\raisebox{-0.5pt}{\tiny $\pm$2.3}}$ \\ \hline

Math SFT & 
$84.4_{\raisebox{-0.5pt}{\tiny $\pm$1.5}}^{\raisebox{0.5pt}{\tiny \textcolor[HTML]{00008B}{$\uparrow$2.4}}}$ & 
$28.0_{\raisebox{-0.5pt}{\tiny $\pm$1.1}}^{\raisebox{0.5pt}{\tiny \textcolor[HTML]{8B0000}{$\downarrow$6.9}}}$ & 
$15.0_{\raisebox{-0.5pt}{\tiny $\pm$2.3}}^{\raisebox{0.5pt}{\tiny \textcolor[HTML]{8B0000}{$\downarrow$16.0}}}$ & 
$12.4_{\raisebox{-0.5pt}{\tiny $\pm$2.4}}^{\raisebox{0.5pt}{\tiny \textcolor[HTML]{8B0000}{$\downarrow$2.6}}}$ \\

Science SFT    & 
$74.6_{\raisebox{-0.5pt}{\tiny $\pm$1.8}}^{\raisebox{0.5pt}{\tiny \textcolor[HTML]{8B0000}{$\downarrow$7.4}}}$ & 
$43.5_{\raisebox{-0.5pt}{\tiny $\pm$1.2}}^{\raisebox{0.5pt}{\tiny \textcolor[HTML]{00008B}{$\uparrow$8.6}}}$ & 
$10.0_{\raisebox{-0.5pt}{\tiny $\pm$2.3}}^{\raisebox{0.5pt}{\tiny \textcolor[HTML]{8B0000}{$\downarrow$21.0}}}$ & 
$10.5_{\raisebox{-0.5pt}{\tiny $\pm$2.1}}^{\raisebox{0.5pt}{\tiny \textcolor[HTML]{8B0000}{$\downarrow$4.5}}}$ \\

Logic SFT       & 
$78.4_{\raisebox{-0.5pt}{\tiny $\pm$1.7}}^{\raisebox{0.5pt}{\tiny \textcolor[HTML]{8B0000}{$\downarrow$3.6}}}$ & 
$36.0_{\raisebox{-0.5pt}{\tiny $\pm$1.1}}^{\raisebox{0.5pt}{\tiny \textcolor[HTML]{00008B}{$\uparrow$1.1}}}$ & 
$35.0_{\raisebox{-0.5pt}{\tiny $\pm$2.3}}^{\raisebox{0.5pt}{\tiny \textcolor[HTML]{00008B}{$\uparrow$4.0}}}$ & 
$14.1_{\raisebox{-0.5pt}{\tiny $\pm$2.5}}^{\raisebox{0.5pt}{\tiny \textcolor[HTML]{8B0000}{$\downarrow$0.9}}}$ \\

Code SFT        & 
$79.6_{\raisebox{-0.5pt}{\tiny $\pm$1.6}}^{\raisebox{0.5pt}{\tiny \textcolor[HTML]{8B0000}{$\downarrow$2.4}}}$ & 
$38.4_{\raisebox{-0.5pt}{\tiny $\pm$1.1}}^{\raisebox{0.5pt}{\tiny \textcolor[HTML]{00008B}{$\uparrow$3.5}}}$ & 
$31.0_{\raisebox{-0.5pt}{\tiny $\pm$3.3}}^{\raisebox{0.5pt}{\tiny \textcolor[HTML]{00008B}{$\uparrow$0.0}}}$ & 
$15.8_{\raisebox{-0.5pt}{\tiny $\pm$2.2}}^{\raisebox{0.5pt}{\tiny \textcolor[HTML]{00008B}{$\uparrow$0.8}}}$ \\ \hline

Math RL & 
$85.6_{\raisebox{-0.5pt}{\tiny $\pm$1.5}}^{\raisebox{0.5pt}{\tiny \textcolor[HTML]{00008B}{$\uparrow$3.6}}}$ & 
$42.5_{\raisebox{-0.5pt}{\tiny $\pm$1.1}}^{\raisebox{0.5pt}{\tiny \textcolor[HTML]{00008B}{$\uparrow$7.6}}}$ & 
$33.0_{\raisebox{-0.5pt}{\tiny $\pm$2.1}}^{\raisebox{0.5pt}{\tiny \textcolor[HTML]{00008B}{$\uparrow$2.0}}}$ & 
$15.6_{\raisebox{-0.5pt}{\tiny $\pm$2.4}}^{\raisebox{0.5pt}{\tiny \textcolor[HTML]{00008B}{$\uparrow$0.6}}}$ \\

Science RL     & 
$83.2_{\raisebox{-0.5pt}{\tiny $\pm$1.6}}^{\raisebox{0.5pt}{\tiny \textcolor[HTML]{00008B}{$\uparrow$1.2}}}$ & 
$48.5_{\raisebox{-0.5pt}{\tiny $\pm$1.1}}^{\raisebox{0.5pt}{\tiny \textcolor[HTML]{00008B}{$\uparrow$13.6}}}$ & 
$31.0_{\raisebox{-0.5pt}{\tiny $\pm$2.3}}^{\raisebox{0.5pt}{\tiny \textcolor[HTML]{00008B}{$\uparrow$0.0}}}$ & 
$15.9_{\raisebox{-0.5pt}{\tiny $\pm$2.3}}^{\raisebox{0.5pt}{\tiny \textcolor[HTML]{00008B}{$\uparrow$0.9}}}$ \\

Logic RL       & 
$83.8_{\raisebox{-0.5pt}{\tiny $\pm$1.5}}^{\raisebox{0.5pt}{\tiny \textcolor[HTML]{00008B}{$\uparrow$1.8}}}$ & 
$41.0_{\raisebox{-0.5pt}{\tiny $\pm$1.1}}^{\raisebox{0.5pt}{\tiny \textcolor[HTML]{00008B}{$\uparrow$6.1}}}$ & 
$38.0_{\raisebox{-0.5pt}{\tiny $\pm$2.3}}^{\raisebox{0.5pt}{\tiny \textcolor[HTML]{00008B}{$\uparrow$7.0}}}$ & 
$16.1_{\raisebox{-0.5pt}{\tiny $\pm$2.1}}^{\raisebox{0.5pt}{\tiny \textcolor[HTML]{00008B}{$\uparrow$1.1}}}$ \\

Code RL        & 
$81.6_{\raisebox{-0.5pt}{\tiny $\pm$1.7}}^{\raisebox{0.5pt}{\tiny \textcolor[HTML]{8B0000}{$\downarrow$0.4}}}$ & 
$38.7_{\raisebox{-0.5pt}{\tiny $\pm$1.1}}^{\raisebox{0.5pt}{\tiny \textcolor[HTML]{00008B}{$\uparrow$3.8}}}$ & 
$34.0_{\raisebox{-0.5pt}{\tiny $\pm$3.1}}^{\raisebox{0.5pt}{\tiny \textcolor[HTML]{00008B}{$\uparrow$3.0}}}$ & 
$17.8_{\raisebox{-0.5pt}{\tiny $\pm$2.0}}^{\raisebox{0.5pt}{\tiny \textcolor[HTML]{00008B}{$\uparrow$2.8}}}$ \\ 

\bottomrule[1pt]
\end{tabular}}
\label{tab:dif-task-generalization}
\end{table}
\vspace{-2pt}
\begin{tcolorbox}[takeaway,title={Takeaway 2.3}]
SFT suffers from task \textbf{conflicts}, where optimizing one capability severely compromises others; in contrast, RL exhibits task \textbf{coexistence}, allowing the model to learn new tasks while preserving existing ones.
\end{tcolorbox}
\vspace{-5pt}

\section{Parameter-Level Empirical Analysis}
\label{sec:parameter-level-empirical}
\label{sec:exp of para updates}

To investigate the mechanism underlying the contrasting generalization behaviors during single-task training observed in Section~\ref{sec:preliminary single task generalization}, we analyze the parameter update dynamics. For each task $T_i$, we compute the parameter update vector $ \Delta W_i $ and evaluate two key geometric properties: the \textbf{magnitude} ($L_2$ norm) and the \textbf{pairwise cosine similarity} between tasks. As visualized in Figure~\ref{fig:similarity_norm}, we identify two fundamental distinctions between SFT and RL updates.

\textbf{Observation 1: The magnitude of RL updates is minimal and sparse.} The heatmaps reveal a stark difference between $\Delta W_{RL}$ and $\Delta W_{SFT}$ in scale: the average $L_2$ norm of $\Delta W$ is approximately $3 \times 10^{-2}$ for RL, whereas it reaches $7.4$ for SFT, showing a difference of over two orders of magnitude. Furthermore, RL updates exhibit high sparsity; only about 20\% of parameters in RL have magnitudes exceeding $10^{-5}$, compared to 93\% in SFT. These results align with recent findings~\cite{mukherjee2025reinforcement, shenfeld2025rl} that RL tends to make \textbf{minimal adjustments}. The sparse and small parameter changes in the high-dimensional parameter space are a significant factor that makes the training of RL for different tasks less disruptive.


\begin{figure*}
\centering
  \includegraphics[width=0.85\linewidth]{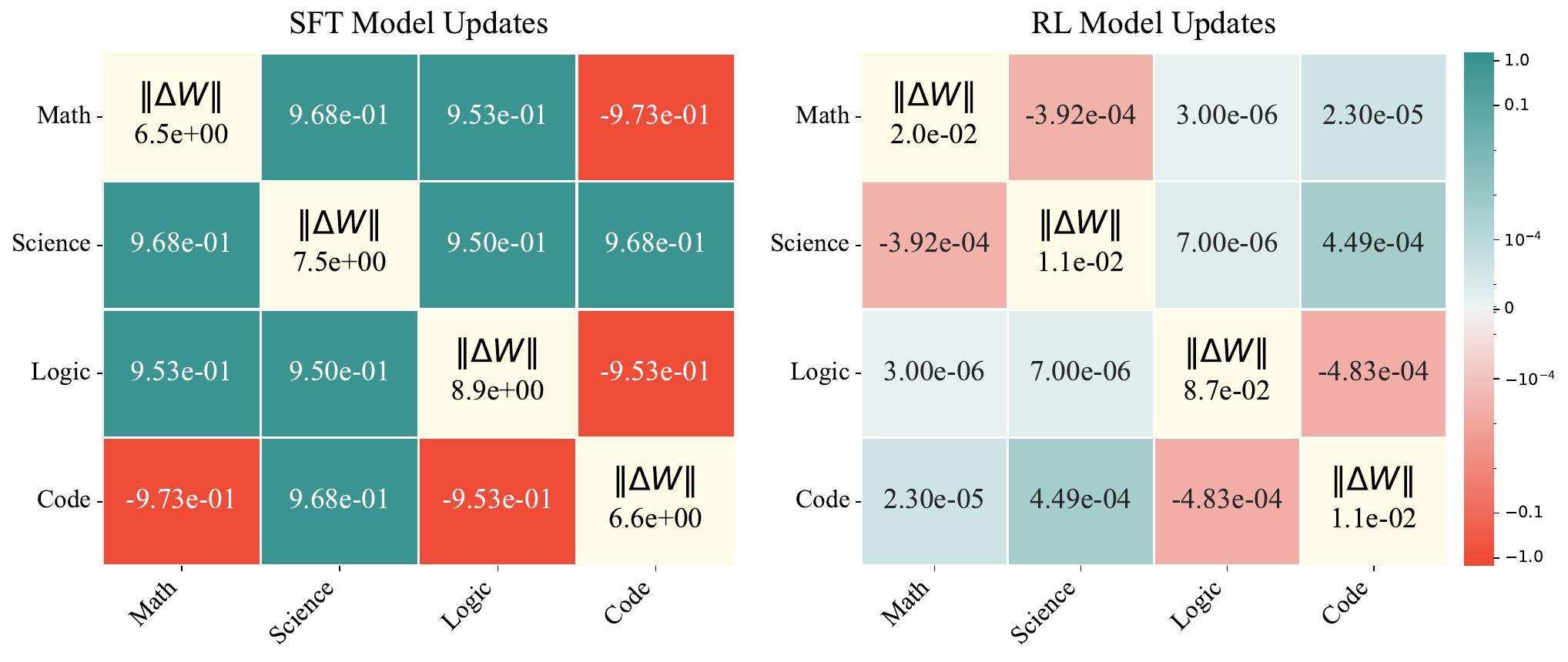}
  \caption{Analysis of Parameter Update Dynamics across SFT and RL. The heatmaps display the pairwise cosine similarity (off-diagonal) and the $L_2$ norm magnitude (diagonal) of parameter update vectors for different tasks. Left: SFT update has a larger absolute similarity value. ($\approx 1.0$). Right: RL updates show near-zero similarity, indicating orthogonal and task-specific parameter adjustments.}
  \vspace{-5pt}
  \label{fig:similarity_norm}
\end{figure*}




\textbf{Observation 2: The optimization directions of RL across different tasks are approximately orthogonal.} We observe that the pairwise cosine similarity of $\Delta W$ between different RL tasks is negligible, averaging around $10^{-5}$. In contrast, SFT exhibits high similarity across tasks (on the order of $10^{-1}$ to $1.0$), with some updates even pointing in opposite directions (e.g., Math vs. Code). Consequently, the $\Delta W_i$ for task $T_i$ obtained from RL training can be considered practically orthogonal, resulting in minimal interference with the optimization landscapes of other tasks. Whereas the $\Delta W_i$ from SFT exhibit overlapping optimization directions, which leads to significant mutual interference and the observed catastrophic forgetting.

Overall, this empirical analysis shows that RL induces \textbf{sparser} and \textbf{nearly orthogonal} parameter updates across tasks, which are two key factors that explain its minimal interference with other tasks.

\vspace{-2pt}
\begin{tcolorbox}[takeaway,title={Takeaway 3}]
Unlike SFT, RL induces \textbf{sparse, minimal updates that are approximately orthogonal across tasks}, naturally decoupling the optimization of different tasks. This parametric mechanism fundamentally explains why RL supports robust multi-stage training.\end{tcolorbox}
\vspace{-5pt}







\section{Theoretical Analysis}
\label{sec:theoretical_analysis}

Our empirical findings reveal two fundamental properties of parameter updates from RL which enable the harmonious coexistence of different task updates within a single model: (1) \textbf{Sparsity and Small Magnitude}, where updates are confined to a small subset of parameters with diminishing norms compared to SFT; and (2) \textbf{Orthogonality}, where gradients for different tasks occupy nearly disjoint subspaces. In this section, we will use theoretical qualitative analysis to explain these two properties. To analyze the distinct parameter updates $\Delta W$ of SFT and RL, we examine their respective gradient descent directions. Let $\pi_\theta$ denote the policy parameterized by $\theta$, $\pi_{\text{expert}}$ be the supervision distribution. The expected gradients can be formalized as follows:

\begin{equation}
\label{eq:gradient comparison}
\begin{aligned}
    & g_{\text{SFT}} = \mathbb{E}_{x \sim \mathcal{D}, \textcolor{myblue}{y \sim \pi_{\text{expert}}}} \left[ \nabla_\theta \log \pi_\theta(y|x) \right] \\
    & g_{\text{RL}} = \mathbb{E}_{x \sim \mathcal{D}, \textcolor{myblue}{y \sim \pi_\theta}} \left[ \textcolor{myred}{A(x, y)} \nabla_\theta \log \pi_\theta(y|x) \right]
\end{aligned}
\end{equation}

There are two fundamental distinctions:
\textcolor{myblue}{(1) Policy Source:} SFT is off-policy where the target response $y$ is sampled from the fixed expert distribution ($\pi_{\text{expert}}$ (ground truth). In contrast, RL is on-policy, where $y$ is sampled from $\pi_\theta$.
\textcolor{myred}{(2) Advantage (or Reward) Function:} The RL gradient incorporates a scalar weighting term $A(x, y)$.

In the following sections, we will conduct a qualitative theoretical analysis centering on these two mechanisms. Furthermore, in Section \ref{sec:quantify}, we will perform a quantitative analysis of the key factors obtained in the theoretical analysis.

\subsection{Sparsity and Small Magnitude of RL Updates}
\label{sec:sparsity_magnitude}

Observation 1 of Section \ref{sec:parameter-level-empirical} highlights the conservative nature of RL updates. Given the high-dimensional parameter space of LLMs, this minimal variation implies that update vectors from different tasks are likely to be orthogonal. This inference is underpinned by the mathematical principle that sparse vectors in high-dimensional spaces tend to be mutually orthogonal with high probability~\citep{cai2013distributionsanglesrandompacking, luisto2025shortsurveyorthogonalvectors}. Recent theoretical advancements have offered numerous explanations for the sparsity of parameter changes in RL. A representative work is RL's Razor~\cite{shenfeld2025rl}, which demonstrates that even in the absence of an explicit KL penalty term, on-policy RL optimization is implicitly biased towards solutions that are closest in KL-divergence to the initial policy. We restate this theory as follows:

\begin{proposition}[Convergence to KL-Minimal Solution, \citealp{shenfeld2025rl}]
\label{prop:kl_minimal}
Let $\mathcal{Y}$ be a finite set of outputs and let $\Pi \subseteq \Delta(\mathcal{Y})$ be a convex family of feasible policies. Let $R : \mathcal{Y} \to \{0,1\}$ be a binary reward function and $\mathcal{P}^* = \{q : \mathbb{E}_{q}[R] = 1\}$ be the set of optimal policies. Then, under suitable regularity conditions, policy gradient converges to:
\begin{equation}
    \pi^{updated} = \mathop{\arg\min}\limits_{\pi \in \mathcal{P}^* \cap \Pi} D_{KL}(\pi \,||\, \pi_0).
\end{equation}
In other words, it selects the optimal policy closest in KL-divergence to the initialization $\pi_0$.
\end{proposition}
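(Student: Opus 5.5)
The natural route is to view policy gradient with a binary reward as an iterative projection procedure, and to show that each step preserves a certain exponential-family structure relative to $\pi_0$. I would work in the tabular (finite $\mathcal{Y}$) setting. In the simplest idealization, one step of on-policy optimization moves from $\pi_t$ to the rejection-sampling distribution
\[
q_t(y) \propto \pi_t(y)R(y),
\]
which is exactly the I-projection of $\pi_t$ onto $\mathcal{P}^*$. That is,
\[
q_t = \arg\min_{q \in \mathcal{P}^*} D_{KL}(q\,\|\,\pi_t),
\]
since conditioning on the event $\{R=1\}$ minimizes KL among distributions supported on that event. The policy then moves to the M-projection of $q_t$ back onto the feasible family $\Pi$:
\[
\pi_{t+1} = \arg\min_{\pi\in\Pi} D_{KL}(q_t\,\|\,\pi).
\]
Under this reading, the policy-gradient dynamics become an alternating-projection scheme between the convex set $\mathcal{P}^*$ and the convex set $\Pi$, both taken in KL geometry.

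The key steps, in order, are as follows.

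First, I would verify that $\mathcal{P}^*$ is convex and closed. It is a face of the simplex: all distributions supported on $R^{-1}(1)$.

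Second, I would establish the Pythagorean inequality for I-projections onto convex sets (Csisz\'ar): for any $p\in\mathcal{P}^*$,
\[
D_{KL}(p\|\pi_t) \ge D_{KL}(p\|q_t) + D_{KL}(q_t\|\pi_t).
\]
I would also establish the analogous three-point inequality for the reverse M-projection onto the convex family $\Pi$.

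Third, I would invoke the Csisz\'ar--Tusn\'ady alternating-minimization theorem. Combining the two inequalities shows that $D_{KL}(q_t\|\pi_t)$ decreases monotonically to $\inf_{q\in\mathcal{P}^*,\pi\in\Pi} D_{KL}(q\|\pi)$, which is $0$ when $\mathcal{P}^*\cap\Pi\neq\emptyset$. Hence the iterates converge to a point $\pi^\infty$ of $\mathcal{P}^*\cap\Pi$.

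Fourth, and crucially, I would identify the limit. The goal is to show that $\pi^\infty$ is the I-projection of $\pi_0$ onto $\mathcal{P}^*\cap\Pi$, not merely some point of the intersection. For this I would telescope the Pythagorean identities. For any $p\in\mathcal{P}^*\cap\Pi$, summing the three-point inequalities over $t$ gives
\[
D_{KL}(p\|\pi_0) \ge D_{KL}(p\|\pi^\infty) + \sum_t \bigl(\text{nonnegative gaps}\bigr).
\]
Taking $p=\pi^\infty$ should then show that $\pi^\infty$ minimizes $D_{KL}(\cdot\|\pi_0)$ over the intersection. This argument is cleanest when $\Pi$ is a linear or exponential family, so that both projections satisfy the Pythagorean relation with equality and the telescoping is exact.

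The main obstacle is this last step: the equality cases needed to pin the limit to the KL-minimizer from $\pi_0$. Convergence to the intersection alone does not identify which point is reached, and for a general convex $\Pi$ the M-projection inequality goes the "wrong" way for exact telescoping. My first attempt would be to absorb this into the "suitable regularity conditions": I would assume $\Pi$ is such that the projections commute with conditioning, for instance when $\Pi$ is closed under reweighting by indicators of $R$, and then verify that $\pi_{t}$ stays of the form $\pi_0\cdot e^{\lambda_t R}/Z_t$. The minimizer $\arg\min_{\mathcal{P}^*\cap\Pi}D_{KL}(\cdot\|\pi_0)=\pi_0(\cdot\mid R=1)$ then follows as the $\lambda_t\to\infty$ limit.

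A secondary issue is that true policy gradient takes small steps rather than full projections. I would handle this either by citing the step-size/regularity assumption of \citet{shenfeld2025rl} directly, or by arguing that the natural-gradient flow in the softmax parameterization keeps $\pi_t\propto\pi_0 e^{\lambda_t R}$ exactly.
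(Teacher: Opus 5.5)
Your I-projection/M-projection decomposition is the same picture the paper relies on. The paper gives no proof of its own: it restates the result from RL's Razor and sketches this alternation. Your step four, however, is a genuine gap, and it cannot be closed by telescoping when $\Pi$ is merely convex.

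Your telescoped inequality $D_{KL}(p\|\pi_0)\ge D_{KL}(p\|\pi^\infty)+\sum_t D_{KL}(q_t\|\pi_t)$ for $p\in\mathcal{P}^*\cap\Pi$ is fine. It follows from the exact three-point identity for the I-step and the four-point property for the M-step. But setting $p=\pi^\infty$ only gives $D_{KL}(\pi^\infty\|\pi_0)\ge\sum_t D_{KL}(q_t\|\pi_t)$, which is the wrong direction. You need equality there, and in general it fails.

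The obstruction is structural. The M-step minimizes the reverse divergence $D_{KL}(q_t\|\pi)$. But for $\pi\in\mathcal{P}^*$ one has $D_{KL}(\pi\|\pi_0)=D_{KL}(\pi\|q_0)+\log(1/\pi_0(R=1))$ with $q_0=\pi_0(\cdot\mid R=1)$, so the target is the forward I-projection of $q_0$ onto $\mathcal{P}^*\cap\Pi$. Whenever $\pi_1=\arg\min_{\pi\in\Pi}D_{KL}(q_0\|\pi)$ already lies in $\mathcal{P}^*$, the iteration freezes at the reverse projection instead.

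For a concrete case, take $\mathcal{Y}=\{a,b,c,d\}$, $R=1$ on $\{a,b,c\}$, $\pi_0=(0.01,0.01,0.01,0.97)$, $u=(1,0,0,0)$, $v=(0,0.01,0.99,0)$ and $\Pi=\mathrm{conv}\{\pi_0,u,v\}$. Then $\mathcal{P}^*\cap\Pi=[u,v]$. The first M-step returns $\pi_1=\tfrac13u+\tfrac23v$, which is a fixed point. The KL-closest point of $[u,v]$ to $\pi_0$, however, is its maximum-entropy point, approximately $0.486\,u+0.514\,v$.

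Your hope that an exponential-family $\Pi$ makes the telescoping exact is also misplaced. The M-projection Pythagorean identity $D_{KL}(q\|\pi)=D_{KL}(q\|\pi^\star)+D_{KL}(\pi^\star\|\pi)$ has its free point in the second slot. The I-step identity has its free point in the first slot, so the two do not chain.

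Your fallback hypothesis does give a true statement, but one in which $\Pi$ plays no role. Suppose $\Pi$ is closed under reweighting by $\mathbf{1}\{R=1\}$, i.e., it contains the $\lambda\to\infty$ limits of the tilts $\pi e^{\lambda R}/Z$; you need these limits because $\mathcal{P}^*$ lies on the boundary of the simplex. Then $q_0\in\Pi$, and the first M-step returns $q_0=\pi_0(\cdot\mid R=1)$. This is already the unconstrained minimizer of $D_{KL}(\cdot\|\pi_0)$ over $\mathcal{P}^*$. Neither Csisz\'ar--Tusn\'ady nor telescoping is needed, and no $\lambda_t\to\infty$ limit arises, because full projections reach $q_0$ in one step.

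The tilt path $\pi_t\propto\pi_0e^{\lambda_tR}$ belongs to a different dynamics: natural policy gradient (mirror descent) in the tabular softmax parametrization. There it does give a clean proof for $\Pi=\Delta(\mathcal{Y})$. Vanilla softmax policy gradient does not follow this path, since its logit increments $\eta\,\pi_t(y)\bigl(R(y)-\mathbb{E}_{\pi_t}[R]\bigr)$ change the ratios among rewarded outputs.

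So your argument establishes the proposition only when the $\Pi$-constraint is inactive. A version in which $\Pi$ actually binds needs explicitly stated hypotheses stronger than convexity, together with a proof that the limit is the KL-minimal point under them. Neither your proposal nor the paper's sketch provides this.
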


During the theoretical derivation, they demonstrated that under binary rewards, each RL update step is equivalent to a two-stage alternating projection process (Information Projection and Moment Projection). Crucially, both stages implicitly satisfy the property of minimizing KL-divergence. And these two stages correspond precisely to the two difference shown in Eq \ref{eq:gradient comparison}. I-Projection: Projects the current policy onto the set of all policies with $Reward=1$, corresponding to \textcolor{myred}{reward function}. M-Projection: Projects the intermediate policy derived from the I-Projection back to the policy space representable by the model's parameterization, reflecting the role of \textcolor{myblue}{on-policy}. In summary, unlike SFT which must bridge a potentially large gap to fit an extrinsic expert distribution, RL implicitly minimizes $D_{KL}(\pi^* || \pi_0)$. This restriction on deviation directly leads to the observed \textbf{sparsity} and \textbf{small magnitude} in parameter updates.

\subsection{Qualitative Analysis of Multi-Task Gradients}
\label{sec:orthogonality}

In this section, we will analyze why gradients from different tasks in RL exhibit near-orthogonality. Similarity, we qualitatively examine this phenomenon from two perspectives: \textcolor{myblue}{policy} and \textcolor{myred}{advantage function}.

We consider two distinct tasks $i$ and $j$ with data distributions $\mathcal{D}_i$ and $\mathcal{D}_j$. Since many recent works on reasoning enhancement use RL algorithms based on GRPO, we will use it as a representative for analysis.

\begin{definition}[GRPO Gradient]
\label{def:grpo_gradient}
Given an input $x$ from task $i$, we generate $G$ group-wise rollouts $\{y_k\}_{k=1}^G$. The empirical gradient estimate is given by:
\begin{equation}
    g_i(x) = \frac{1}{G} \sum_{k=1}^G \hat{A}_{i,k}(x) \nabla_\theta \log \pi_\theta(y_k|x),
\end{equation}
where $\hat{A}_{i,k}(x)$ is the standardized advantage. Let $r_{i,k}(x)$ be the reward for the $k$-th sample, and let $\mu_{r_i}(x)=\frac{1}{G}\sum_{k=1}^G r_{i,k}(x)$ and $\sigma_{r_i}(x)=\sqrt{\frac{1}{G}\sum_{k=1}^G (r_{i,k}(x)-\mu_{r_i}(x))^2 }$ be the group mean and standard deviation, respectively. The advantage is defined as $\hat{A}_{i,k}(x) = (r_{i,k} - \mu_{r_i}(x)) / \sigma_{r_i}(x)$. 

\textbf{Definition of Score Function.} For notational convenience, we will denote the log-likelihood gradient of the $k$-th rollout by the \textbf{score function} $S_{i,k}(x) = \nabla_\theta \log \pi_\theta(y_k|x)$ below.

According to the definition, the advantages for GRPO satisfy the \textbf{zero-sum property}. 

\begin{equation}
\label{zero-sum property}
    \sum_{k=1}^G \hat{A}_{i,k}(x) = 0.
\end{equation}
\end{definition}

We contrast the inner products of gradient directions between tasks $i$ and $j$ in RL with those in SFT. We refer to this metric as the \textbf{gradient interference} term, which is formalized as follows:

\textbf{Gradient Interference in SFT.}
Let $\pi^*$ denote the expert policy (i.e., the supervision distribution). For a task $i$, the target response is $y_i^* \sim \pi^*(\cdot|x)$. Like RL, we define the \textbf{expert score function} as $S_i^*(x) \coloneqq \nabla_\theta \log \pi_\theta(y_i^*|x)$. Thus, the expected gradient interference in SFT is as follows:
\begin{equation}
\label{eq:sft_inner_product}
    \mathcal{I}_{\text{SFT}}(i, j) \coloneqq \mathbb{E}_{x, x'} \left[ \langle S_i^*(x), S_j^*(x') \rangle \right].
\end{equation}

\textbf{Gradient Interference in RL.}
For RL (GRPO), the expected inner product is given by:
{\small 
\begin{equation}
\label{eq:rl_raw_inner_product}
\begin{aligned} 
    &\mathcal{I}_{\text{RL}}(i, j) \coloneqq \mathbb{E}_{x, x'} \\ 
    &\quad \left[ \left\langle \frac{1}{G}\sum_{k=1}^G \hat{A}_{i,k} S_{i,k}(x), \frac{1}{G}\sum_{l=1}^G \hat{A}_{j,l} S_{j,l}(x') \right\rangle \right].
\end{aligned}
\end{equation}
}

Unlike SFT, we can decompose this raw form using the \textbf{zero-sum property} of advantages shown in Eq.\eqref{zero-sum property}.

\begin{lemma}[Gradient Inner Product Decomposition]
\label{lemma:inner_product_decomp}
Let $\bar{S}_i(x) = \frac{1}{G} \sum_{k=1}^G S_{i,k}(x)$ be the mean score function for input $x$, and let $\delta S_{i,k}(x) = S_{i,k}(x) - \bar{S}_i(x)$ be the residual score function. The expected inner product between the gradients of task $i$ and task $j$ satisfies:
{\small
\begin{equation}
\label{eq:rl_gradient_inner_product}
\begin{aligned}
    &\mathcal{I}_{\text{RL}}(i, j) = \mathbb{E}_{x, x'}  \\
    &\left[ \frac{1}{G^2} \sum_{k=1}^G \sum_{l=1}^G \textcolor{myred}{\hat{A}_{i,k}(x) \hat{A}_{j,l}(x')} \langle \textcolor{myblue}{\delta S_{i,k}(x), \delta S_{j,l}(x')} \rangle \right],
\end{aligned}
\end{equation}
}
which we prove in Appendix \ref{appendix:proof}.
\end{lemma}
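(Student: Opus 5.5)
The identity holds pointwise for every fixed pair $(x,x')$ and every realization of the rollouts. Expectation plays no role beyond linearity, so the plan is to prove the identity inside the expectation in Eq.~\eqref{eq:rl_raw_inner_product} and then take $\mathbb{E}_{x,x'}$ on both sides. Fix $x$ and $x'$ and abbreviate $\hat{A}_{i,k}=\hat{A}_{i,k}(x)$, $\hat{A}_{j,l}=\hat{A}_{j,l}(x')$. Throughout we assume $\sigma_{r_i}(x),\sigma_{r_j}(x')>0$, so that the standardized advantages are well defined. This is the implicit convention of Definition~\ref{def:grpo_gradient}; degenerate groups, where all rewards are equal, contribute zero gradient under the usual GRPO convention, and then both sides of the identity are zero.

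\textbf{Step 1: recenter each task's gradient.} Write $S_{i,k}(x)=\bar{S}_i(x)+\delta S_{i,k}(x)$. The per-task gradient then becomes
\begin{equation*}
g_i(x)=\frac{1}{G}\sum_{k=1}^G \hat{A}_{i,k}\,\delta S_{i,k}(x)+\Big(\frac{1}{G}\sum_{k=1}^G \hat{A}_{i,k}\Big)\bar{S}_i(x).
\end{equation*}
The scalar in the second term vanishes by the zero-sum property~\eqref{zero-sum property}, so $g_i(x)=\frac{1}{G}\sum_k \hat{A}_{i,k}\,\delta S_{i,k}(x)$. The same argument applied to task $j$ at $x'$ gives $g_j(x')=\frac{1}{G}\sum_l \hat{A}_{j,l}\,\delta S_{j,l}(x')$.

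\textbf{Step 2: expand the inner product.} By bilinearity of $\langle\cdot,\cdot\rangle$,
\begin{equation*}
\langle g_i(x),g_j(x')\rangle=\frac{1}{G^2}\sum_{k=1}^G\sum_{l=1}^G \hat{A}_{i,k}\hat{A}_{j,l}\,\langle \delta S_{i,k}(x),\delta S_{j,l}(x')\rangle .
\end{equation*}
Taking $\mathbb{E}_{x,x'}$ of both sides yields Eq.~\eqref{eq:rl_gradient_inner_product}.

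As a cross-check, one can instead expand the raw product fully into four groups of terms: one mean--mean, two mean--residual, and one residual--residual. Each of the first three carries a factor $\sum_k\hat{A}_{i,k}$ or $\sum_l\hat{A}_{j,l}$, so each vanishes, leaving only the residual--residual term.

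There is no substantive obstacle. The only point needing care is the well-definedness of $\hat{A}$ when $\sigma_r=0$, handled by the convention above. One should also note that the expectation of the identity requires only integrability of the inner products, which is assumed implicitly whenever $\mathcal{I}_{\text{RL}}$ is defined.
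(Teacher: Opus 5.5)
Your proposal is correct and follows the same route as the paper's proof in Appendix~\ref{appendix:proof}: substitute $S_{i,k}=\bar S_i+\delta S_{i,k}$ pointwise, use the zero-sum property of the standardized advantages to eliminate the $\bar S_i$ term, expand the inner product by bilinearity, and then take $\mathbb{E}_{x,x'}$. Your remark on the degenerate case $\sigma_r=0$ is a point of care the paper leaves implicit; it is not needed elsewhere in the argument.
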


Based on the derivations above, we analyze the gradient interference in SFT and RL from two mechanisms.

\textbf{\textcolor{myred}{Mechanism of Advantages.}}
Based on Lemma \ref{lemma:inner_product_decomp}, the \textbf{zero-sum property} of advantage function \textit{algebraically removes the contribution of the mean gradient direction $\bar{S}_i(x)$}. This transforms the inner product of the score functions (i.e., log-likelihood gradients) $\langle S_{i,k}, S_{j,l} \rangle$ into the inner product of residuals $\langle \delta S_{i,k}(x), \delta S_{j,l}(x') \rangle$, diminishing the gradient interference in multi-task RL. Conversely, the SFT inner product in Eq.\eqref{eq:sft_inner_product} depends directly on $\langle S_i^*(x), S_j^*(x') \rangle$. Consequently, the gradient interference in multi-task SFT is significantly larger than RL.

\textbf{\textcolor{myblue}{Mechanism of Policy Source.}}
The second critical distinction is that SFT is \textbf{off-policy}, relying on an external expert distribution, whereas RL is \textbf{on-policy}.

First, according to Lemma \ref{lemma:inner_product_decomp}, the RL inner product \ref{eq:rl_gradient_inner_product} is governed by residual score function, $\delta S_{i,k}(x)$. \textbf{Intuitively}, the residual $\delta S(x)$ captures the \textbf{intra-group policy divergence} across $G$ rollouts for the same input $x$. Since both the input and the model parameters remain fixed during generation, this divergence is inherently limited. (We quantify the extent of this variation in Section \ref{sec:quantify}.) 



Second, for two distinct tasks $i$ and $j$, the data distributions $\mathcal{D}_i$ and $\mathcal{D}_j$ are independent. Consequently, the small vectors $\delta S_i$ and $\delta S_j$, generated from independent sampling processes, are statistically \textbf{independent zero-mean} vectors.
In the context of high-dimensional parameter space of LLM $\mathbb{R}^d$, the Concentration of Measure phenomenon~\cite{vershynin2018high} dictates that the \textbf{independent, zero-mean and sparse} vectors are approximately orthogonal with high probability. Formally, the probability that their inner product deviates from zero decays exponentially with the dimension $d$: $ \mathbb{P}\left( |\langle \delta S_i, \delta S_j \rangle| \ge t \right) \le 2 \exp(-ct^2 d) $
In contrast, SFT is off-policy, targeting an external distribution. This requires \textbf{dense}, high-magnitude updates that significantly \textbf{overlap} across tasks, directly resulting in a large interference term $\langle S_i^*(x), S_j^*(x') \rangle$.
This intrinsic distinction between on-policy and off-policy directly accounts for the observed differences in gradient inner products. Quantitative analyses supporting are provided in Section \ref{sec:quantify}.

\textbf{Remark.} The advantage function acts as a filter that removes the dense mean gradient $\bar{S}$, leaving only the intra-group residuals. The on-policy nature further ensures these residuals $\delta S$ remain minimal and non-interfering.

\subsection{Upper Bound of Gradient Orthogonality}
\label{sec:upper bound}


We now derive an \textbf{upper bound} on the gradient interference in formula \ref{eq:sft_inner_product} and \ref{eq:rl_gradient_inner_product}. This derivation allows us to identify the key factors contributing to the approximate orthogonality of RL observed in Section \ref{sec:parameter-level-empirical}. To formally compare the interference magnitude, we introduce bounds characterizing the \textit{absolute magnitude} of the score functions for SFT and the \textit{intra-group variance} for RL.




\begin{assumption}[Bounded Expected Norm and Variance]
\label{ass:bounds}
For any task $i$ with data distribution $\mathcal{D}_i$:

\textbf{SFT Score Function Norm:} The expected squared $L_2$ norm of the expert score function is bounded by a constant $M_i^2$. This characterizes the average energy of the updates:
    \begin{equation*}
        \mathbb{E}_{x \sim \mathcal{D}_i} \left[ \| S^*_i(x) \|_2^2 \right] \le M_i^2.
    \end{equation*}

\textbf{RL Score Function Variance:} The expected intra-group variance of the residual score function (averaged over $G$ rollouts) is bounded by a constant $V_i^2$: 
    \begin{equation*}
        \mathbb{E}_{x \sim \mathcal{D}_i} \left[ \frac{1}{G} \sum_{k=1}^G \| \delta S_{i,k}(x) \|_2^2 \right] \le V_i^2.
    \end{equation*}
\end{assumption}
Based on this assumption, we can give upper bounds for the gradient inner product of SFT and RL for different tasks.

\begin{theorem}[Upper Bound on Gradient Interference]
\label{thm:upper_bounds}
Under Assumption \ref{ass:bounds}, the expected gradient inner product for SFT and RL (GRPO) satisfies the following upper bounds:

\textbf{1. SFT Upper Bound (Norm-Limited):}
\begin{equation}
\label{UB-SFT}
    | \mathcal{I}_{\text{SFT}}(i, j) | \le M_i \cdot M_j.
    \tag{UB-SFT}
\end{equation}

\textbf{2. RL Upper Bound (Variance-Limited):}
\begin{equation}
\label{UB-RL}
    | \mathcal{I}_{\text{RL}}(i, j) | \le V_i \cdot V_j.
    \tag{UB-RL}
\end{equation}
\end{theorem}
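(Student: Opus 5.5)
The plan is to prove both bounds the same way: reduce each interference term to an inner product of two vectors, one depending only on $x$ and the other only on $x'$. We then apply Cauchy--Schwarz pointwise, and Cauchy--Schwarz (or Jensen) once more at the level of expectations. Throughout, $x \sim \mathcal{D}_i$ and $x' \sim \mathcal{D}_j$ are taken to be drawn independently, as implicit in Eq.~\eqref{eq:sft_inner_product} and Eq.~\eqref{eq:rl_raw_inner_product}.

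\textbf{SFT bound.} Starting from Eq.~\eqref{eq:sft_inner_product}, we move the absolute value inside the expectation and apply Cauchy--Schwarz pointwise:
\[
|\mathcal{I}_{\text{SFT}}(i,j)| \le \mathbb{E}_{x,x'}\big[\|S_i^*(x)\|_2\,\|S_j^*(x')\|_2\big].
\]
By independence of $x$ and $x'$ the right-hand side factors as $\mathbb{E}_x\|S_i^*(x)\|_2 \cdot \mathbb{E}_{x'}\|S_j^*(x')\|_2$. Jensen's inequality gives $\mathbb{E}\|S\|_2 \le (\mathbb{E}\|S\|_2^2)^{1/2}$, and Assumption~\ref{ass:bounds} then yields $M_i M_j$. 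Even without independence, Cauchy--Schwarz for the $L^2$ inner product of random variables gives the same bound, so this step is routine.

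\textbf{RL bound.} We begin from Lemma~\ref{lemma:inner_product_decomp} and observe that the double sum factors. Define the centered, advantage-weighted vectors
\[
u_i(x) := \frac{1}{G}\sum_{k=1}^G \hat{A}_{i,k}(x)\,\delta S_{i,k}(x), \qquad u_j(x') := \frac{1}{G}\sum_{l=1}^G \hat{A}_{j,l}(x')\,\delta S_{j,l}(x').
\]
Then $\mathcal{I}_{\text{RL}}(i,j) = \mathbb{E}_{x,x'}\langle u_i(x), u_j(x')\rangle$. The key step is a pointwise norm bound on $u_i(x)$. The triangle inequality followed by Cauchy--Schwarz over the index $k$ gives
\[
\|u_i(x)\|_2 \le \frac{1}{G}\Big(\sum_k \hat{A}_{i,k}^2\Big)^{1/2}\Big(\sum_k \|\delta S_{i,k}\|_2^2\Big)^{1/2}.
\]
Because $\sigma_{r_i}$ is defined with the $1/G$ normalization, the standardized advantages satisfy $\sum_k \hat{A}_{i,k}^2 = G\sigma_{r_i}^2/\sigma_{r_i}^2 = G$. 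This gives
\[
\|u_i(x)\|_2 \le \Big(\tfrac{1}{G}\sum_k\|\delta S_{i,k}(x)\|_2^2\Big)^{1/2}.
\]
Substituting into the SFT-style argument (pointwise Cauchy--Schwarz, independence, Jensen), we obtain $|\mathcal{I}_{\text{RL}}| \le \big(\mathbb{E}\,\tfrac1G\sum_k\|\delta S_{i,k}\|^2\big)^{1/2}\big(\mathbb{E}\,\tfrac1G\sum_l\|\delta S_{j,l}\|^2\big)^{1/2} \le V_iV_j$ by Assumption~\ref{ass:bounds}.

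The main obstacle is the advantage-normalization step, specifically the degenerate groups where all rewards are equal and $\sigma_{r_i}(x)=0$. In that case $\hat{A}$ is undefined. We adopt the standard GRPO convention that such groups have $\hat{A}_{i,k}=0$, so their contribution to $u_i(x)$ vanishes. Then $\sum_k\hat{A}_{i,k}^2 \le G$ holds in all cases, and the inequality is all the argument needs. A secondary point is to check that the factorization of the double sum into $\langle u_i, u_j\rangle$ is valid, i.e.\ that Lemma~\ref{lemma:inner_product_decomp} is applied with the same fixed $\theta$ for both tasks. This is bilinearity of the inner product.
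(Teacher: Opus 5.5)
Your proposal is correct and follows the paper's proof essentially step for step. For SFT both arguments use $|\mathbb{E}Z|\le\mathbb{E}|Z|$, pointwise Cauchy--Schwarz, independence and $\mathbb{E}\|S\|_2\le(\mathbb{E}\|S\|_2^2)^{1/2}$; for RL both use Lemma~\ref{lemma:inner_product_decomp}, scalar Cauchy--Schwarz with $\sum_k \hat{A}_{i,k}^2=G$, and Jensen for the square root. The differences are cosmetic (you bound $\|u_i(x)\|_2$ before applying Cauchy--Schwarz to the outer inner product, rather than to each $\langle \delta S_{i,k},\delta S_{j,l}\rangle$), plus your explicit handling of degenerate groups with $\sigma_{r_i}(x)=0$, which the paper does not address.
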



Theorem \ref{thm:upper_bounds} reveals a fundamental mechanistic difference between SFT and RL gradient interference upper bounds. The SFT bound depends on $M_i$, the absolute norm of the score function. As we have in our qualitative analysis in Section \ref{sec:theoretical_analysis}, the magnitude of SFT score function $M_i$ is large. However, the situation is completely different for RL. The RL bound is controlled exclusively by $V_i$, the magnitude of the \textit{residuals} $\delta S$. This is actually the variance with a group of rollouts generated by the model for the same input $x$. As we analyzed in Section \ref{sec:orthogonality}, due to the on-policy nature of RL, the intra-group variation will be small. Furthermore, $\delta S$ will continue to decay as training converges. The proof of this upper bound is provided in Appendix \ref{appendix:proof_upper_bound}. This theoretical distinction mirrors the fundamental essence of the two paradigms: SFT forces the model to fit fixed expert trajectories, whereas RL optimizes based on intra-group relative differences, reinforcing good trajectory while suppressing bad ones. We note that most properties also hold for other RL algorithms, as discussed in Appendix \ref{appendix: other_rl_proof}.

\vspace{-3pt}
\begin{tcolorbox}[takeaway,title={Takeaway 4.3}]
Gradient interference in SFT is norm-limited, governed by the absolute magnitude of parameter updates. Conversely, RL interference is variance-limited, strictly bounded by the diversity of intra-group rollouts, which filters the common-mode conflicts.
\end{tcolorbox}
\vspace{-5pt}

\subsection{Quantification and Visualization}
\label{sec:quantify}

\begin{figure}
\centering
  \includegraphics[width=\linewidth]{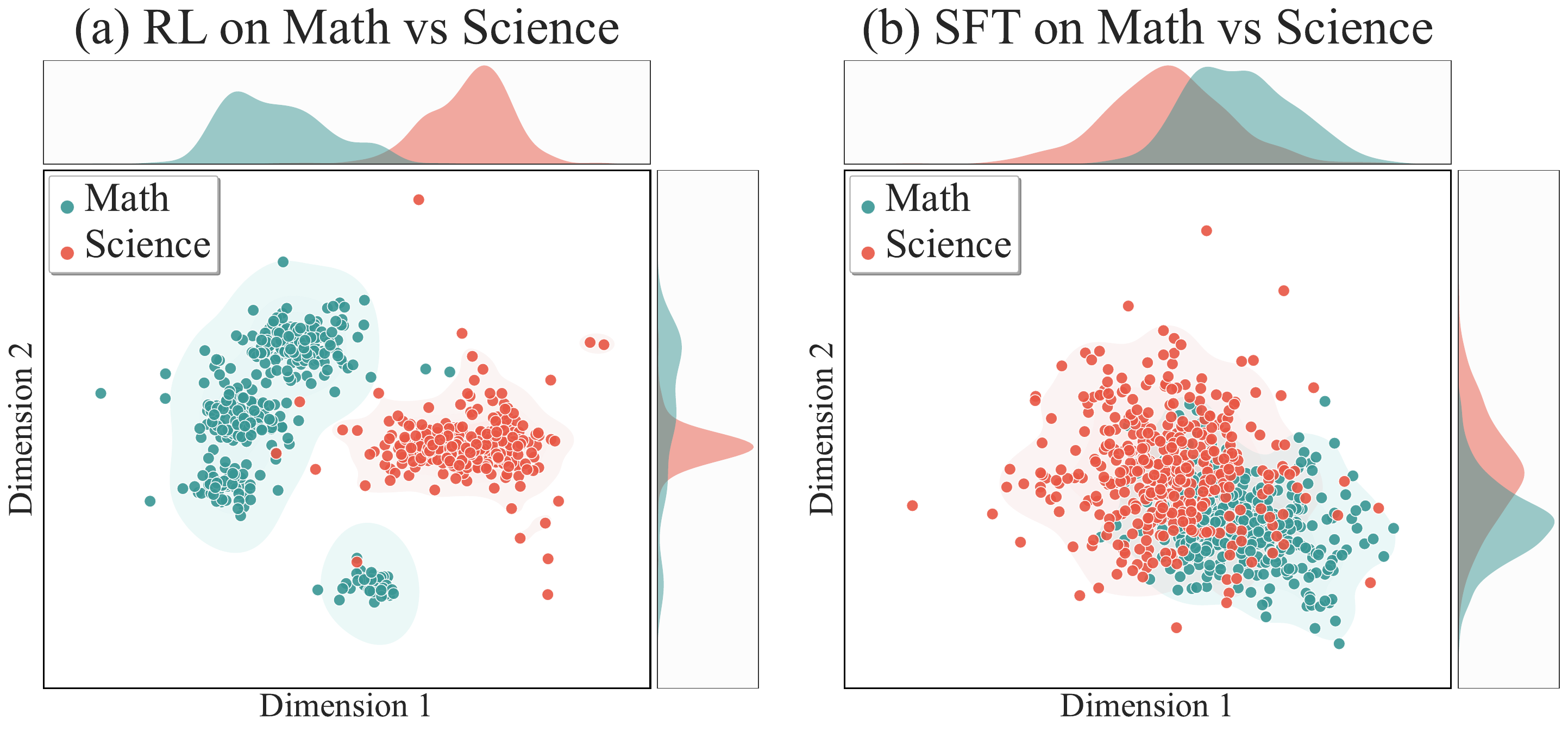}
  \vspace{-8pt}
  \caption {Distribution of the score function $S$ sampled during training by RL and SFT on different tasks (Math and Science). We use t-SNE to reduce the dimensionality for visualization.}
  \label{fig:score_func_distribution}
\end{figure}

To quantitatively validate our theoretical analysis, we examine three key metrics related to the score function $S$: the magnitude $||S||_2$, the residual $\delta S$ (derived in Lemma \ref{lemma:inner_product_decomp}), and the cosine similarity. We follow the setup in Section \ref{sec:setup for preliminaries}, using 200 sampled instances per task. The comparative results are summarized in Table \ref{tab:indicators}.

\begin{table}[h]
    \centering
    \small
    \vspace{-3pt} 
    \caption{Quantitative comparison of score function ($S$) indicators between SFT and RL. RL exhibits significantly smaller magnitude and inter-task similarity, alongside a bounded residual $\delta$.}
    \label{tab:indicators}
    \setlength{\tabcolsep}{6mm} 
    \begin{tabular}{lccc}
        \toprule
        \textbf{Indicator} & \textbf{SFT} & \textbf{RL} \\
        \midrule
        $||S||_2$ & $\sim 7.1$ & $\sim 10^{-1}$ \\
        $||\delta S||_2$ & - & $\sim 10^{-2}$ \\
        $\text{CosSim}(S_i, S_j)$ & $\sim 10^{-1}$ & $\sim 10^{-3}$ \\
        \bottomrule
    \end{tabular}
    \vspace{-3pt} 
\end{table}

\textbf{Verification of Interference Upper Bounds.} 
We first evaluate the factors governing the gradient interference bounds derived in Theorem \ref{thm:upper_bounds}. 
For SFT, the interference is \textit{norm-limited}. Table \ref{tab:indicators} shows that $||S||_2$ for SFT is substantial ($\sim 7.1$), confirming that fitting external experts leading to a high gradient interference. 
In contrast, RL interference is \textit{variance-limited}, governed strictly by the intra-group residual $\delta S$. Empirically, the $||\delta S||_2$ is observed to be small ($\sim 10^{-2}$). This confirms that filtering effect of the advantage function restricts interference to this minimal variance term, explaining the significantly lower interference in RL.

\begin{table*}[!htbp]
\vspace{-6pt}
\caption{The accuracy (\%) on different tasks. Superscripts denote accuracy change vs. \textbf{corresponding single-task baseline}. $\Delta$ Base shows the average performance improvement over the Base Model.  Retention (\%) represents the \textbf{percentage of performance retained} compared to the Single-Task Training Model before merging for Parallel Training method. \textbf{Bold} represents the best performance.} 
\vspace{0.1in}
\centering
\small
\resizebox{1.0 \linewidth}{!}{
\begin{tabular}{clcccccccc}
\toprule[1pt]
\multirow{2}{*}{\normalsize \textbf{Base Model}} & \multicolumn{1}{l}{\multirow{2}{*}{\normalsize \textbf{Method}}} & \multicolumn{2}{c}{\normalsize \textbf{Math}} & \multicolumn{2}{c}{\normalsize \textbf{Science}} & \multicolumn{1}{c}{\normalsize \textbf{Logic}} & \multicolumn{1}{c}{\normalsize \textbf{Code}} & \multicolumn{2}{c}{\normalsize \textbf{Average Statistic}} \\ \cline{3-10} 
    & \multicolumn{1}{c}{}                        & MATH500    & AIME2025   & MMLU     & GPQA     & KK    & LiveCodeBench & $\Delta$ Base & Retention (\%) \\ 
    \hline

\multirow{17}{*}{\shortstack{DeepSeek-R1- \\ Distill-Qwen-1.5B}}  
& \multicolumn{9}{c}{\emph{Baselines (Full Parameter GRPO)}} \\
&Base Model & 82.0 & 26.9 & 34.9 & 32.3 & 31.0 & 15.0 & - & - \\
&Single-Task SFT & 85.8 & 31.5 & 49.1 & 35.4 & 37.0 & 16.3 & +5.5 & - \\
&Single-Task RL & 87.4 & 32.7 & 51.8 & 39.9 & 44.0 & 21.6 & +9.3 & - \\
 \cline{2-10}

& \multicolumn{9}{c}{\emph{Previous Multi Task Training Paradigms}} \\
& Mixed Data SFT  & $85.6^{\textcolor[HTML]{8B0000}{-0.2}}$ & $30.1^{\textcolor[HTML]{8B0000}{-1.4}}$ & $48.5^{\textcolor[HTML]{8B0000}{-0.6}}$ & $34.3^{\textcolor[HTML]{8B0000}{-1.1}}$ & $38.0^{\textcolor[HTML]{00008B}{+1.0}}$ & $17.0^{\textcolor[HTML]{00008B}{+0.7}}$ & +5.2 & - \\
& Multi Stage SFT & $71.0^{\textcolor[HTML]{8B0000}{-14.8}}$ & $21.8^{\textcolor[HTML]{8B0000}{-9.7}}$ & $33.2^{\textcolor[HTML]{8B0000}{-15.9}}$ & $23.2^{\textcolor[HTML]{8B0000}{-12.2}}$ & $11.0^{\textcolor[HTML]{8B0000}{-26.0}}$ & $12.1^{\textcolor[HTML]{8B0000}{-4.2}}$ & -8.3 & - \\ 
& Mixed Data RL & $87.0^{\textcolor[HTML]{8B0000}{-0.4}}$ & $32.9^{\textcolor[HTML]{00008B}{+0.2}}$ & $50.9^{\textcolor[HTML]{8B0000}{-0.9}}$ & $38.9^{\textcolor[HTML]{8B0000}{-1.0}}$ & $45.0^{\textcolor[HTML]{00008B}{+1.0}}$ & $20.7^{\textcolor[HTML]{8B0000}{-0.9}}$ & +8.9 & - \\
& Multi Stage RL & $87.8^{\textcolor[HTML]{00008B}{+0.4}}$ & $33.1^{\textcolor[HTML]{00008B}{+0.4}}$ & $\textbf{52.8}^{\textcolor[HTML]{00008B}{+1.0}}$ & $40.4^{\textcolor[HTML]{00008B}{+0.5}}$ & $48.0^{\textcolor[HTML]{00008B}{+4.0}}$ & $21.0^{\textcolor[HTML]{8B0000}{-0.6}}$ & +10.2 & - \\ 
 \cline{2-10}

& \multicolumn{9}{c}{\emph{Parallel Training}} \\
&Naive Parallel-SFT \textit{(mean)} & $63.2^{\textcolor[HTML]{8B0000}{-22.6}}$ & $19.2^{\textcolor[HTML]{8B0000}{-12.3}}$ & $26.9^{\textcolor[HTML]{8B0000}{-22.2}}$ & $25.3^{\textcolor[HTML]{8B0000}{-10.1}}$ & $26.0^{\textcolor[HTML]{8B0000}{-11.0}}$ & $9.3^{\textcolor[HTML]{8B0000}{-7.0}}$ & -8.7 & 66.6 \\
&Naive Parallel-SFT \textit{(sum)} & $61.2^{\textcolor[HTML]{8B0000}{-24.6}}$ & $20.0^{\textcolor[HTML]{8B0000}{-11.5}}$ & $25.9^{\textcolor[HTML]{8B0000}{-23.2}}$ & $26.8^{\textcolor[HTML]{8B0000}{-8.6}}$ & $23.0^{\textcolor[HTML]{8B0000}{-14.0}}$ & $10.1^{\textcolor[HTML]{8B0000}{-6.2}}$ & -9.2 & 65.4 \\
&Naive Parallel-RL \textit{(mean)} & $85.6^{\textcolor[HTML]{8B0000}{-1.8}}$ & $31.5^{\textcolor[HTML]{8B0000}{-1.2}}$ & $47.3^{\textcolor[HTML]{8B0000}{-4.5}}$ & $36.4^{\textcolor[HTML]{8B0000}{-3.5}}$ & $39.0^{\textcolor[HTML]{8B0000}{-5.0}}$ & $19.0^{\textcolor[HTML]{8B0000}{-2.6}}$ & +6.1 & 93.3 \\
&Naive Parallel-RL \textit{(sum)} & $86.4^{\textcolor[HTML]{8B0000}{-1.0}}$ & $32.3^{\textcolor[HTML]{8B0000}{-0.4}}$ & $48.0^{\textcolor[HTML]{8B0000}{-3.8}}$ & $37.4^{\textcolor[HTML]{8B0000}{-2.5}}$ & $39.0^{\textcolor[HTML]{8B0000}{-5.0}}$ & $18.4^{\textcolor[HTML]{8B0000}{-3.2}}$ & +6.6 & 94.2 \\
&TIES Parallel-RL & $87.6^{\textcolor[HTML]{00008B}{+0.2}}$ & $32.5^{\textcolor[HTML]{8B0000}{-0.2}}$ & $49.2^{\textcolor[HTML]{8B0000}{-2.6}}$ & $38.4^{\textcolor[HTML]{8B0000}{-1.5}}$ & $43.0^{\textcolor[HTML]{8B0000}{-1.0}}$ & $19.7^{\textcolor[HTML]{8B0000}{-1.9}}$ & +8.0 & 97.4 \\
&SVD Parallel-RL & $87.2^{\textcolor[HTML]{8B0000}{-0.2}}$ & $31.9^{\textcolor[HTML]{8B0000}{-0.8}}$ & $48.3^{\textcolor[HTML]{8B0000}{-3.5}}$ & $36.9^{\textcolor[HTML]{8B0000}{-3.0}}$ & $42.0^{\textcolor[HTML]{8B0000}{-2.0}}$ & $15.9^{\textcolor[HTML]{8B0000}{-5.7}}$ & +6.7 & 94.6 \\
&Adapted Parallel-RL & $\textbf{88.6}^{\textcolor[HTML]{00008B}{+1.2}}$ & $\textbf{33.8}^{\textcolor[HTML]{00008B}{+1.1}}$ & $50.9^{\textcolor[HTML]{8B0000}{-0.9}}$ & $\textbf{41.4}^{\textcolor[HTML]{00008B}{+1.5}}$ & $\textbf{49.0}^{\textcolor[HTML]{00008B}{+5.0}}$ & $\textbf{22.5}^{\textcolor[HTML]{00008B}{+0.9}}$ & +10.7 & \textbf{103.2} \\

    \hline

\multirow{17}{*}{\shortstack{DeepSeek-R1- \\ Distill-Qwen-7B}}  
& \multicolumn{9}{c}{\emph{Baselines (Full Parameter GRPO)}} \\
&Base Model    & 91.8 & 49.2 & 51.8 & 48.5 & 62.0 & 36.5 & - & - \\
&Single-Task SFT & 93.8 & 54.4 & 59.1 & 51.0 & 65.0 & 39.1 & +3.8 & - \\
&Single-Task RL & 94.6 & 57.1 & 63.5 & 53.0 & 68.0 & 42.3 & +6.5 & - \\
 \cline{2-10}
& \multicolumn{9}{c}{\emph{Previous Multi Task Training Paradigms}} \\
& Mixed Data SFT  & $94.6^{\textcolor[HTML]{00008B}{+0.8}}$ & $53.1^{\textcolor[HTML]{8B0000}{-1.3}}$ & $61.7^{\textcolor[HTML]{00008B}{+2.6}}$ & $52.5^{\textcolor[HTML]{00008B}{+1.5}}$ & $66.0^{\textcolor[HTML]{00008B}{+1.0}}$ & $40.0^{\textcolor[HTML]{00008B}{+0.9}}$ & +4.7 & - \\
& Multi Stage SFT & $81.4^{\textcolor[HTML]{8B0000}{-12.4}}$ & $38.6^{\textcolor[HTML]{8B0000}{-15.8}}$ & $35.3^{\textcolor[HTML]{8B0000}{-23.8}}$ & $39.1^{\textcolor[HTML]{8B0000}{-11.9}}$ & $46.0^{\textcolor[HTML]{8B0000}{-19.0}}$ & $29.5^{\textcolor[HTML]{8B0000}{-9.6}}$ & -11.7 & - \\ 
& Mixed Data RL & $95.0^{\textcolor[HTML]{00008B}{+0.4}}$ & $56.5^{\textcolor[HTML]{8B0000}{-0.6}}$ & $62.2^{\textcolor[HTML]{8B0000}{-1.3}}$ & $52.8^{\textcolor[HTML]{8B0000}{-0.2}}$ & $67.0^{\textcolor[HTML]{8B0000}{-1.0}}$ & $39.9^{\textcolor[HTML]{8B0000}{-2.4}}$ & +5.6 & - \\
& Multi Stage RL & $95.0^{\textcolor[HTML]{00008B}{+0.4}}$ & $58.2^{\textcolor[HTML]{00008B}{+1.1}}$ & $65.9^{\textcolor[HTML]{00008B}{+2.4}}$ & $\textbf{55.6}^{\textcolor[HTML]{00008B}{+2.6}}$ & $66.0^{\textcolor[HTML]{8B0000}{-2.0}}$ & $\textbf{44.2}^{\textcolor[HTML]{00008B}{+1.9}}$ & +7.5 & - \\ 
 \cline{2-10}

& \multicolumn{9}{c}{\emph{Parallel Training}} \\
&Naive Parallel-SFT \textit{(mean)} & $61.0^{\textcolor[HTML]{8B0000}{-32.8}}$ & $35.8^{\textcolor[HTML]{8B0000}{-18.6}}$ & $33.1^{\textcolor[HTML]{8B0000}{-26.0}}$ & $38.9^{\textcolor[HTML]{8B0000}{-12.1}}$ & $48.0^{\textcolor[HTML]{8B0000}{-17.0}}$ & $27.0^{\textcolor[HTML]{8B0000}{-12.1}}$ & -16.0 & 67.3 \\
&Naive Parallel-SFT \textit{(sum)} & $59.2^{\textcolor[HTML]{8B0000}{-34.6}}$ & $35.6^{\textcolor[HTML]{8B0000}{-18.8}}$ & $32.4^{\textcolor[HTML]{8B0000}{-26.7}}$ & $37.8^{\textcolor[HTML]{8B0000}{-13.2}}$ & $50.0^{\textcolor[HTML]{8B0000}{-15.0}}$ & $26.5^{\textcolor[HTML]{8B0000}{-12.6}}$ & -16.4 & 66.6 \\
&Naive Parallel-RL \textit{(mean)} & $93.2^{\textcolor[HTML]{8B0000}{-1.4}}$ & $52.3^{\textcolor[HTML]{8B0000}{-4.8}}$ & $59.0^{\textcolor[HTML]{8B0000}{-4.5}}$ & $51.0^{\textcolor[HTML]{8B0000}{-2.0}}$ & $61.0^{\textcolor[HTML]{8B0000}{-7.0}}$ & $38.3^{\textcolor[HTML]{8B0000}{-4.0}}$ & +2.5 & 93.7 \\
&Naive Parallel-RL \textit{(sum)} & $93.6^{\textcolor[HTML]{8B0000}{-1.0}}$ & $53.1^{\textcolor[HTML]{8B0000}{-4.0}}$ & $59.6^{\textcolor[HTML]{8B0000}{-3.9}}$ & $52.5^{\textcolor[HTML]{8B0000}{-0.5}}$ & $63.0^{\textcolor[HTML]{8B0000}{-5.0}}$ & $38.0^{\textcolor[HTML]{8B0000}{-4.3}}$ & +3.3 & 95.1 \\
&TIES Parallel-RL & $94.8^{\textcolor[HTML]{00008B}{+0.2}}$ & $56.5^{\textcolor[HTML]{8B0000}{-0.6}}$ & $62.5^{\textcolor[HTML]{8B0000}{-1.0}}$ & $52.0^{\textcolor[HTML]{8B0000}{-1.0}}$ & $64.0^{\textcolor[HTML]{8B0000}{-4.0}}$ & $40.5^{\textcolor[HTML]{8B0000}{-1.8}}$ & +5.1 & 97.8 \\
&SVD Parallel-RL & $93.2^{\textcolor[HTML]{8B0000}{-1.4}}$ & $55.2^{\textcolor[HTML]{8B0000}{-1.9}}$ & $61.4^{\textcolor[HTML]{8B0000}{-2.1}}$ & $51.5^{\textcolor[HTML]{8B0000}{-1.5}}$ & $65.0^{\textcolor[HTML]{8B0000}{-3.0}}$ & $39.5^{\textcolor[HTML]{8B0000}{-2.8}}$ & +4.3 & 96.7 \\
&Adapted Parallel-RL & $\textbf{96.2}^{\textcolor[HTML]{00008B}{+1.6}}$ & $\textbf{59.0}^{\textcolor[HTML]{00008B}{+1.9}}$ & $\textbf{66.8}^{\textcolor[HTML]{00008B}{+3.3}}$ & $54.0^{\textcolor[HTML]{00008B}{+1.0}}$ & $\textbf{70.0}^{\textcolor[HTML]{00008B}{+2.0}}$ & $41.5^{\textcolor[HTML]{8B0000}{-0.8}}$ & +8.0 & \textbf{102.4} \\

  \bottomrule[1pt]
\vspace{-34pt}
\end{tabular}
}
\label{tab:main_result}
\end{table*}

\textbf{Verification of Task Separation.} 
Next, we validate the mechanism discussed in Section \ref{sec:orthogonality}, which posits that on-policy RL naturally leads to low overlap between updates from different tasks, while SFT updates show significant overlap. 
Table \ref{tab:indicators} reveals that the cosine similarity for RL ($\sim 10^{-3}$) is much lower than that of SFT ($\sim 10^{-1}$), indicating that RL updates occupy nearly orthogonal subspaces. We also employ t-SNE to plot the distribution of score functions. As shown in Figure \ref{fig:score_func_distribution}, RL score functions form distinct, separable clusters for different tasks. Conversely, SFT distributions exhibit heavy overlap. Together, these results corroborate that SFT suffers from dense, conflicting updates, whereas RL naturally decouples task optimization. More visualizations are shown in Appendix \ref{appendix:more visualizations}.

\section{Application: Parallel-RL Framework}

\subsection{Methodology}

Our preceding empirical and theoretical analyses establish a fundamental geometric property of multi-task RL: parameter updates for distinct reasoning tasks occupy \textbf{approximately orthogonal} subspaces (Section \ref{sec:theoretical_analysis}). This finding implies that the interference term $\langle \Delta W_i, \Delta W_j \rangle$ is negligible. Considering optimization directions are irrelevant, the sum of parameter updates obtained from parallel training of different tasks should theoretically approximate the result of multi-stage training of tasks.

Motivated by this insight, we propose \textbf{Parallel-RL}, a decoupled post-training paradigm. In Parallel-RL, we can optimize each task independently.
Formally, let $\mathcal{T} = \{T_1, \dots, T_N\}$ be a set of diverse reasoning tasks. We launch $N$ parallel RL training processes. Each process $i$ produces a task-specific update $\Delta W_i$. The final model parameter $W_{final}$ is obtained by merging these independent updates:
$$ W_{final} = W_{base} + \mathcal{M}(\Delta W_1, \dots, \Delta W_N) $$
where $\mathcal{M}$ is a merging function (e.g., linear averaging or SVD, discussed in Section \ref{sec:parallel_exp}). We emphasize that Parallel-RL is not merely a model merging technique, but a complete post-training paradigm. Beyond the merging stage, it also involves determining which tasks can be trained in parallel, as well as adopting appropriate tricks for each single task training, which are preliminarily discussed in Appendix \ref{appendix:parallel_rl}.

\subsection{Main Experiment}
\label{sec:parallel_exp}
\textbf{Setup.} We adopt DeepSeek-R1-Distill-Qwen-1.5B and 7B as the base models for full parameter training. The configuration follows the settings described in Appendix \ref{appendix:experiment details}. Here we choose GRPO as the RL algorithm. Results for other RL algorithms can be found in Appendix \ref{appendix:more_ablation}.

\textbf{Parallel-RL Method.} We explore several basic strategies for implementing and merging on Parallel-RL. \textbf{(1) Naive Parallel-RL.}We employ two strategies: (a) \textit{sum}: directly summing the $\Delta W_i$, (b) \textit{mean}: averaging the $\Delta W_i$. \textbf{(2) Sparse Parallel.}
Two sparsification strategies are considered here. First, we adopt the standard TIES method ~\cite{yadav2023ties}. Second, motivated by evidence that the rank-1 component of $\Delta W_i$ encapsulates most RL training effects ~\cite{cai2025predictability}, we perform SVD on each update and retain only the rank-1 directions for merging. \textbf{(3) Adapted Parallel-RL.}
Since different RL models enhance the sampling probability of high-reward trajectory for their respective tasks, the merged model can benefit from light post-merge adaptation. After Naive Parallel RL \textit{(sum)}, we perform rapid adaptation using a small subset of samples (5\% of the original train-set size) to refine the model.

\subsection{Main Results}

As shown in Table \ref{tab:main_result}, we observe that even Naive Parallel-RL \textit{(sum)} allows different task updates ($\Delta W_i$) to coexist, largely preserving the performance gains (95\%) from single task training and achieving an average improvement of 5.0\% over the base model. In contrast, Parallel-SFT \textit{(sum)} suffers from severe task conflicts, with only 66\% of the performance of the corresponding single-task SFT retained. 
Furthermore, incorporating sparsification strategies further enhances the compatibility of various $\Delta W_i$ for RL. TIES and SVD Parallel-RL respectively retained 98\% and 96\% of the performance of the corresponding single-task RL. Notably, Adapted Parallel-RL achieves the best performance across most tasks. It delivers a 9.4\% gain over the base model and even surpasses individual task-specific models, retaining an average of 102.8\% of the Single-Task RL performance. Despite these gains, this paradigm only needs an additional 5\% samples adaptation time compared to single-task training, demonstrating both the \textbf{efficiency and superior efficacy} of our approach. More experimental results are available in Appendix \ref{appendix:more_main_results}. We also performed Pass@K analysis to further verify the stability of Parallel RL, as shown in Appendix \ref{appendix:pass_at_k}.

\subsection{Ablation Study}

\label{sec:ablation}


To intuitively assess the orthogonality and decoupling of the learned task representations, we conduct an ablation study by selectively excluding each task-specific update ($\Delta W_i$) during the merging process. As reported in Table \ref{tab:ablation}, removing a specific $\Delta W_i$ leads to a significant performance drop in its corresponding task, with an average drop ($\Delta_{Target}$) of 7.1\%. However, In contrast, the performance on the remaining tasks stays remarkably robust, even exhibiting a slight average improvement ($\Delta_{Others}$) of 0.6\%. This result demonstrates that Parallel-RL effectively decouples capabilities across tasks. Unlike multi-stage or mixed-data RL, this paradigm allows for the flexible composition of models, enabling modular adaptation to diverse task scenarios. More ablation results are shown in Appendix \ref{appendix:more_ablation}

\begin{table}[!htbp]
\vspace{-3pt}
\caption{Performance (\%) ablation study of Naive Parallel-RL. $\Delta_{Target}$ and $\Delta_{Others}$ highlight the change in accuracy on the corresponding task and other tasks after removing a certain module.}
\vspace{0.1in}
\centering
\resizebox{0.95\linewidth}{!}{
\begin{tabular}{lcccccc}
\toprule[1pt]
Method & MATH500 & MMLU & KK & LCB & $\Delta_{Target}$ & $\Delta_{Others}$ \\ 
\hline
\multicolumn{7}{c}{\emph{Baselines}} \\
Single-Task RL & 87.4 & 51.8 & 44.0 & 21.6 & - & - \\
Base Model & 82.0 & 34.9 & 31.0 & 15.0 & - & - \\ \hline
\multicolumn{7}{c}{\emph{Ablation of Parallel-RL (1.5B GRPO)}} \\
Parallel-RL & 86.4 & 48.0 & 39.0 & 18.4 & - & - \\ \hline
- w/o Math & 82.8 & 47.8 & 41.0 & 19.2 & -3.6 & +0.9 \\
- w/o Science & 86.0 & 37.5 & 39.0 & 18.4 & -10.5 & -0.1 \\
- w/o Logic & 85.6 & 49.6 & 30.0 & 20.3 & -9.0 & +0.9 \\
- w/o Code & 88.2 & 46.9 & 40.0 & 13.1 & -5.3 & +0.6 \\ \bottomrule[1pt]
\end{tabular}
}
\vspace{-5pt}
\label{tab:ablation}
\end{table}

\section{Related Work}

We summarize research aligned with our core investigation and provide a comprehensive discussion in Appendix \ref{appendix:related works}.
Enhancing the reasoning capabilities of LLMs has become a focal point of recent research~\cite{guo2025deepseek, jaech2024openai}. The dominant training strategies remain SFT~\cite{xu2025redstar, openr1} and RL~\cite{zheng2025group, ouyang2022training}. 
Recently, significant efforts have been made to distinguish the mechanisms of SFT and RL in \textit{single-task} settings. \citet{chu2025sft} argue that SFT memorizes, RL generalizes, suggesting SFT focuses on learning new paths, while RL enhances generalization. Similar works include ~\cite{matsutani2025rl,jin2025rl}, etc. There is also work~\cite{mukherjee2025reinforcement} pointing out that compared to SFT, RL training only fine-tunes the small subnetwork. Other works have explored the combination of these two paradigms ~\cite{zhang2025policy, fu2025srft}.
Multi-task learning is also essential for scaling towards AGI~\cite{zeng2025glm, li2025omni}. However, the divergence between multi-task SFT and RL remains underexplored. 
Some works have explored related aspects. RL's Razor \cite{shenfeld2025rl} theoretically explains that RL minimizes parameter deviation due to implicit KL constraints, thereby reducing forgetting. Similar analysis can be found in The Path Not Taken \cite{zhu2025path}. However, their analysis targets forgetting rather than multi-task learning and the fundamental differences in the \textit{training paradigms}. In contrast, our work is the first to systematically investigate the divergence between SFT and RL in multi-task training with theoretical and empirical analysis.

\section{Conclusion}

This work systematically analyzes the divergence between SFT and RL in multi-task training. We observe that in multi-stage training, SFT exhibits significant task conflicts, whereas RL demonstrates a coexistence property, allowing for stable improvements. Through mechanistic analysis, we find that the optimization directions of RL across different tasks share significantly lower similarity than SFT, appearing approximately orthogonal. We further substantiate these empirical findings with a in-depth theoretical analysis, elucidating how the advantage function and on-policy nature of RL drive this mechanism. Building on the findings, we introduce Parallel-RL to decouple multi-task training, enabling efficient, modular training that achieves superior performance with minimal adaptation.

\section*{Impact Statements}

This paper aims to advance the development of multi-task learning for LLMs. By analyzing and improving the efficiency and effectiveness of model learning across diverse task scenarios, we hope to facilitate progress toward the vision of AGI. However, we emphasize that while striving for efficient multi-task training, it is also crucial to maintain the safety and reliability of each individual task to ensure responsible deployment. Finally, we believe that this methodological study does not introduce distinct ethical risks or societal impacts that warrant detailed discussion here.



\bibliography{example_paper}
\bibliographystyle{icml2026}

\newpage
\appendix
\onecolumn

\section{Experiment Details}
\label{appendix:experiment details}

\subsection{Task and Dataset Description}
\label{appendix:task description}

In this study, we select four representative tasks widely utilized in reasoning enhancement research for multi-task learning: Math, Science, Code, and Logic. In the following sections, we detail the training and evaluation datasets employed for each of these tasks.

\textbf{Training Datasets.} 
We use different datasets that are more suitable for SFT and RL training, respectively. \textbf{For SFT}, we utilize a subset of \textit{OpenR1-Math-220k~\cite{openr1}} as the math training set, a subset of \textit{AM-DeepSeek-Distilled-40M~\cite{tian2025deepdistill}} for code generation, and a subset of \textit{AM-Thinking-v1-Distilled~\cite{tian2025correctanswersequaldistillation}} for science. For the logical reasoning task, we employ the \textit{knights-and-knaves~\cite{xie2024memorization}} dataset. Since the original data lacks CoTs, we leverage the DeepSeek-R1 API to distill LongCoT data and ensure the correctness of the distilled answers using the rule-based verification logic from \textit{Logic-RL~\cite{xie2025logic}}. \textbf{Regarding RL}, our training data consists of a subset of the \textit{DeepScaleR-Preview-Dataset~\cite{meng2023deepscaler}} for math and a subset of the \textit{DeepCoder-Preview-Dataset~\cite{luo2025deepcoder}} for code. For science, we use a subset of the science portion from \textit{AM-Thinking-v1-Distilled}. Finally, following the protocol of \textit{Logic-RL}, a subset of the \textit{knights-and-knaves} dataset is used for the logical task. 

\textbf{Evaluation Benchmarks.} 
We evaluate our models using six mainstream and widely recognized benchmarks. 
For Mathematics, we select \textit{MATH500~\cite{lightman2023lets}} and \textit{AIME2025}. It is worth noting that the number of questions for AIME2025 is relatively small. To avoid randomness, we report the evaluation metric as avg@16, which is the average accuracy of 16 tests on the dataset. For Science, we employ \textit{MMLU~\cite{hendrycks2021measuringmassivemultitasklanguage}} and \textit{GPQA-Diamond~\cite{rein2023gpqagraduatelevelgoogleproofqa}}. Considering the massive scale of MMLU and the fact that some of its tasks are not directly related to scientific reasoning, we specifically focus on the following subjects: \textit{high\_school\_physics, high\_school\_chemistry, college\_chemistry, college\_biology, astronomy,} and \textit{professional\_medicine}. 
For Code Generation, we primarily use \textit{LiveCodeBench~\cite{jain2024livecodebenchholisticcontaminationfree}} for evaluation. 
Lastly, for Logical Reasoning, the \textit{knights-and-knaves} dataset is used as the evaluation benchmark.

\subsection{Training Details}
\label{appendix:training_details}

We employ \textit{DeepSeek-R1-Distill-Qwen-1.5B} and \textit{7B} as base models. All experiments are conducted on a machine of eight NVIDIA A100 (80GB) GPUs. To facilitate efficient parameter analysis, some experiments are performed using LoRA~\cite{hu2021loralowrankadaptationlarge}. For all LoRA experiments, we set the rank to $r = 64$ and the scaling factor to $\alpha = 32$. For multi-stage training, we chose the following task order: Math, Science, Code, Logic. For TIES merging~\cite{yadav2023ties}, we follow the default settings and set the sensitivity parameter to 0.8.

\textbf{SFT.} SFT is implemented using the \textit{LLaMA-Factory~\cite{zheng2024llamafactory}} framework. We fine-tune the models on teacher CoT trajectories. The learning rate is set to $1 \times 10^{-5}$, and the sequence cutoff length is configured at 8K tokens.

\textbf{RL.} We utilize the \textit{VeRL~\cite{sheng2024hybridflow}} framework for reinforcement learning training, specifically employing the GRPO algorithm. The RL learning rate is fixed at $3 \times 10^{-6}$. Following the official DeepSeek configuration, we set the sampling temperature to $0.6$ and $top\text{-}p$ to $0.95$. For each prompt, we generate $G=16$ rollouts to perform group-based relative reward estimation. The maximum output length is set to 8K tokens. Notably, we disable the KL divergence penalty in the loss function for our specific analysis. 

We acknowledge the exceptional contributions of \textit{LLaMA-Factory} and \textit{VeRL} teams. Their extensible and user-friendly frameworks, which support diverse architectures and advanced algorithms, significantly facilitated our training implementation. 

\subsection{Evaluation Details}
\label{appendix:evaluation_details}

For the majority of the tasks, we conduct evaluation using the \textit{lighteval~\cite{lighteval}} toolkit provided by Hugging Face. Consistent with the training configuration described above, we employ a sampling temperature of $0.6$ and a $top\text{-}p$ value of $0.95$ for all model generations. For the logical reasoning tasks, we utilize the specialized evaluation suite from \textit{Logic-RL}. We express our gratitude to the developers of these tools for their significant contributions to the open-source community, which ensured the reliability and comparability of our evaluation results. All evaluation experiments are performed on NVIDIA A100 (80GB) GPUs to maintain consistency in the computational environment.

\section{More Related Works}
\label{appendix:related works}

In this section, we provide a comprehensive review of the literature concerning post-training paradigms, the comparative analysis of SFT~\cite{wei2021finetuned} and RL~\cite{ziegler2019fine}, and multi-task learning.

\textbf{Post-Training for Large Language Models.} The mainstream post-training paradigms mainly include SFT and RL. In the era of reasoning enhancement, SFT has evolved beyond simple instruction following to incorporate Chain-of-Thought (CoT) methodologies~\citep{wei2022chain,jin2026looklightthinkheavy, zhu2026mmrvwhatsleftunsaid,li2026mmrlifepiecingreallifescenes}, which encourage models to generate step-by-step reasoning paths~\citep{li2025thinking, longpre2023flan}. Recent advancements have further refined this approach through specialized reasoning fine-tuning, significantly boosting capabilities in mathematical and logical tasks~\citep{xu2025redstar, openr1, tan2026pyxpyinvestigatingreinforcement, tan2026bottomuppolicyoptimizationlanguage}. At the same time RL has shifted focus from traditional human preference alignment~\citep{ouyang2022training, jin2025omni, li2026fixingbrokencompassdiagnosing, men2025agentrewardbenchunifiedbenchmarkreward} to outcome-driven reasoning optimization. By leveraging verifiable rewards in domains like mathematics and coding~\cite{hao2026multisteptoolusereinforcementlearning}, methods exemplified by DeepSeek-R1~\citep{guo2025deepseek} and other recent frameworks~\citep{yu2025dapo, team2025kimi, zeng2025glm} have greatly improved the model reasoning capabilities~\citep{li2026agenticenvironmentengineeringlarge,men2026physicsmultiturnlonghorizonplanning}. 

\textbf{Comparative Analysis of SFT and RL in Single-Task Learning.} 
A representative comparative study~\cite{chu2025sft} characterize the distinction as ``SFT memorizes, RL generalizes'', observing that while SFT is responsible for fitting supervised training paths, RL focuses on generalization. Mechanistically, this divergence is often attributed to search space dynamics. 
Other work~\cite{matsutani2025rl} propose that ``RL squeezes, SFT expands'', indicating that RL contracts the policy towards high-reward regions, whereas SFT broadens the sampling space. Furthermore, \citet{jin2025rl} find that RL can recover generalization capabilities lost during aggressive SFT. Other research has also compared the differences between the two in terms of out-of-distribution (OOD) task generalization~\citep{huan2025does}. There is also work\cite{mukherjee2025reinforcement} pointing out that compared to SFT, RL training only fine-tunes the small subnetwork. 
Recognizing the distinct properties of RL and SFT, recent hybrid approaches integrate on-policy RL with off-policy SFT to leverage the benefits of both paradigms~\citep{fu2025srft, zhang2025policy}.


\textbf{Multi-Task Learning.} To enable deployment in complex real-world scenarios, enhancing the multi-task reasoning capabilities of LLMs has become a priority~\citep{gemini3, gpt5, yang2025qwen3}. For SFT, the common practice is to train on mixed datasets spanning diverse tasks~\citep{openr1}. Existing studies suggest that conflicts in multi-task SFT are often severe and evolve throughout training, motivating adaptive data selection, curriculum design, and conflict-aware reweighting instead of static data mixing alone~\citep{wu2024mixture, liang2025boosting}. In contrast, RL paradigms exhibit more diverse strategies. Many works explore multi-stage training~\citep{zeng2025glm}, or employ curriculum learning strategies~\cite{cho2024hard, li2025omni} to progressively build capabilities. Other studies investigate mixed-data RL by alleviating gradient imbalance~\cite{wu2025imbalanced} or scaling RL on heterogeneous reasoning data~\cite{liu2025deepseek}. Recent multi-task RL studies further show that cross-domain transfer is highly dependent on tasks: while some domains are mutually beneficial, others induce negative transfer or capacity competition~\citep{li2025can, cheng2025revisiting}. Complementary optimization-based methods further seek to mitigate such conflicts by improving gradient compatibility across domains~\citep{liang2026boosting}. Despite these advances, the paradigmatic divergence between SFT and RL in multi-task settings remains under-analyzed. Recent studies demonstrate that RL exhibits significantly less catastrophic forgetting than SFT~\citep{shenfeld2025rl, lai2025reinforcementfinetuningnaturallymitigates, zhu2025path}. In contrast, our work is the first to systematically investigate this divergence through both theoretical and empirical analysis. As the field evolves towards Agentic RL and the number of reasoning tasks continues to grow~\citep{zeng2025rlve, men2026empoweringguiagentsautonomous}, understanding these fundamental differences is increasingly important for scalable AGI development.

\textbf{Model Merging Techniques.} The paradigm of model merging has emerged as a highly efficient strategy to integrate the capabilities of multiple expert models without the prohibitive costs of retraining. Foundational works in this domain demonstrated that models can be fused via simple weight averaging~\citep{wortsman2022model} or through Task Arithmetic~\citep{ilharco2022editing}, which operates on task vectors to add or subtract specific model behaviors. However, directly combining multiple task vectors frequently leads to severe performance degradation due to task conflicts. To mitigate this, subsequent research introduced subspace and sparsification strategies. Most notably, TIES-Merging~\citep{yadav2023ties} resolves interference by pruning task vectors based on parameter magnitudes and electing a dominant sign direction to eliminate sign conflicts during aggregation. Following a similar sparsification philosophy, DARE~\citep{yu2024language} randomly drops parameters and rescales the remaining weights to seamlessly assimilate homologous models. Beyond sparsification, other advanced methodologies calculate precise merging coefficients by evaluating parameter importance, utilizing tools such as the Fisher Information Matrix~\citep{matena2022merging} or closed-form statistical solutions for linear layers~\citep{jin2022dataless}.

\clearpage
\section{Theory and Proof}

\label{appendix:proof}
\subsection{RL Inner Product Decomposition}
We present the proof of Lemma \ref{lemma:inner_product_decomp} in the main text. This lemma states that for GRPO, the inner product of training gradients for different tasks actually depends on the difference between rollouts within a group rather than the rollout itself.
\begin{proof}
We expand the inner product of the gradients defined in Definition \ref{def:grpo_gradient}:
\begin{equation*}
    \langle g_i(x), g_j(x') \rangle = \left\langle \frac{1}{G} \sum_{k=1}^G \hat{A}_{i,k} S_{i,k}, \frac{1}{G} \sum_{l=1}^G \hat{A}_{j,l} S_{j,l} \right\rangle.
\end{equation*}
Substituting the decomposition $S_{i,k} = \bar{S}_i + \delta S_{i,k}$, the term for task $i$ becomes:
\begin{equation*}
\begin{aligned}
  \sum_{k=1}^G \hat{A}_{i,k} (\bar{S}_i + \delta S_{i,k}) 
  &= \bar{S}_i \underbrace{\sum_{k=1}^G \hat{A}_{i,k}}_{=0} + \sum_{k=1}^G \hat{A}_{i,k} \delta S_{i,k} \\
  &= \sum_{k=1}^G \hat{A}_{i,k} \delta S_{i,k}.
\end{aligned}
\end{equation*}
The mean score direction $\bar{S}_i$ is eliminated due to the zero-sum property of the standardized advantages. Substituting this back into the inner product yields:
\begin{equation*}
    \langle g_i(x), g_j(x') \rangle = \frac{1}{G^2} \sum_{k,l} \hat{A}_{i,k} \hat{A}_{j,l} \langle \delta S_{i,k}, \delta S_{j,l} \rangle.
\end{equation*}
Taking the expectation over $x \sim \mathcal{D}_i$ and $x' \sim \mathcal{D}_j$ completes the proof.
\end{proof}





\subsection{Proof of Theorem \ref{thm:upper_bounds} (Upper Bound on Gradient Interference)}
\label{appendix:proof_upper_bound}
In this section, we derive the upper bounds for the gradient inner product for SFT and RL, as presented in Theorem \ref{thm:upper_bounds}.

\subsubsection{Proof for SFT (Norm-Limited)}
\begin{proof}
As defined in Eq \ref{eq:sft_inner_product}, the gradient interference term is defined as the expected inner product of the expert score functions over independent data distributions $\mathcal{D}_i$ and $\mathcal{D}_j$ of different task $i$ and $j$ respectively:
\begin{equation*}
    \mathcal{I}_{\text{SFT}}(i, j) = \mathbb{E}_{x \sim \mathcal{D}_i, x' \sim \mathcal{D}_j} \left[ \langle S^*_i(x), S^*_j(x') \rangle \right].
\end{equation*}

First, we apply the property $| \mathbb{E}[Z] | \le \mathbb{E}[|Z|]$ (which follows from \textbf{Jensen's inequality}) together with the \textbf{Cauchy-Schwarz inequality} for the inner product ($|\langle \mathbf{u}, \mathbf{v} \rangle| \le \|\mathbf{u}\|_2 \|\mathbf{v}\|_2$):

\begin{equation*}
\begin{aligned}
    | \mathcal{I}_{\text{SFT}}(i, j) | & = \left| \mathbb{E}_{x, x'} \left[ \langle S^*_i(x), S^*_j(x') \rangle \right] \right| \\
    &\le \mathbb{E}_{x, x'} \left[ | \langle S^*_i(x), S^*_j(x') \rangle | \right] \\
    &\le \mathbb{E}_{x, x'} \left[ \| S^*_i(x) \|_2 \cdot \| S^*_j(x') \|_2 \right].
\end{aligned}
\end{equation*}

Since the samples $x$ and $x'$ are drawn independently from $\mathcal{D}_i$ and $\mathcal{D}_j$, the expectation of the product can be factored into the product of expectations:
\begin{equation*}
    \mathbb{E}_{x, x'} \left[ \| S^*_i(x) \|_2 \cdot \| S^*_j(x') \|_2 \right] = \mathbb{E}_{x} \left[ \| S^*_i(x) \|_2 \right] \cdot \mathbb{E}_{x'} \left[ \| S^*_j(x') \|_2 \right].
\end{equation*}

Next, we apply the property: $(\mathbb{E}[Z])^2 \le \mathbb{E}[Z^2]$, implying $\mathbb{E}[Z] \le \sqrt{\mathbb{E}[Z^2]}$. This can also be derived from \textbf{Jensen's inequality}.
\begin{equation*}
\begin{aligned}
    \mathbb{E}_{x} \left[ \| S^*_i(x) \|_2 \right] &\le \sqrt{\mathbb{E}_{x} \left[ \| S^*_i(x) \|_2^2 \right]} \\
    \mathbb{E}_{x'} \left[ \| S^*_j(x') \|_2 \right] &\le \sqrt{\mathbb{E}_{x'} \left[ \| S^*_j(x') \|_2^2 \right]}.
\end{aligned}
\end{equation*}

Finally, substituting this back into the expression and applying the bounds $M_i$ and $M_j$ defined in Assumption \ref{ass:bounds} (where $\mathbb{E}_{x \sim \mathcal{D}_i} \left[ \| S^*_i(x) \|_2^2 \right] \le M_i^2$), we obtain:
\begin{equation*}
    | \mathcal{I}_{\text{SFT}}(i, j) | \le M_i \cdot M_j.
\end{equation*}
This confirms the bound in Eq.\eqref{UB-SFT}.
\end{proof}

\subsubsection{Proof for RL (Variance-Limited)}

\begin{proof}
For RL (GRPO series, i.e., RL algorithms that normalize the advantage function within group), we start with the decomposition of the expected gradient inner product derived in Lemma \ref{lemma:inner_product_decomp}:
\begin{equation*}
    \mathcal{I}_{\text{RL}}(i, j) = \mathbb{E}_{x, x'} \left[ \frac{1}{G^2} \sum_{k=1}^G \sum_{l=1}^G \hat{A}_{i,k} \hat{A}_{j,l} \langle \delta S_{i,k}, \delta S_{j,l} \rangle \right].
\end{equation*}

To derive the upper bound, we first apply the property $| \mathbb{E}[Z] | \le \mathbb{E}[|Z|]$ (\textbf{Jensen's inequality}) and the Triangle Inequality to the sum inside the expectation:
\begin{equation*}
\begin{aligned}
    | \mathcal{I}_{\text{RL}}(i, j) | 
    &\le \mathbb{E}_{x, x'} \left[ \left| \frac{1}{G^2} \sum_{k=1}^G \sum_{l=1}^G \hat{A}_{i,k} \hat{A}_{j,l} \langle \delta S_{i,k}, \delta S_{j,l} \rangle \right| \right] \\
    &\le \mathbb{E}_{x, x'} \left[ \frac{1}{G^2} \sum_{k=1}^G \sum_{l=1}^G | \hat{A}_{i,k} | | \hat{A}_{j,l} | | \langle \delta S_{i,k}, \delta S_{j,l} \rangle | \right].
\end{aligned}
\end{equation*}

Next, we apply the \textbf{Cauchy-Schwarz inequality} ($|\langle \mathbf{u}, \mathbf{v} \rangle| \le \|\mathbf{u}\|_2 \|\mathbf{v}\|_2$) to the inner product: $|\langle \delta S_{i,k}, \delta S_{j,l} \rangle| \le \| \delta S_{i,k} \|_2 \| \delta S_{j,l} \|_2$. Thus the expression can be factored into the product of two independent sums:
\begin{equation*}
    | \mathcal{I}_{\text{RL}}(i, j) | \le \mathbb{E}_{x, x'} \left[ \frac{1}{G^2} \left( \sum_{k=1}^G | \hat{A}_{i,k} | \| \delta S_{i,k} \|_2 \right) \left( \sum_{l=1}^G | \hat{A}_{j,l} | \| \delta S_{j,l} \|_2 \right) \right].
\end{equation*}

Since we consider two distinct tasks, the data sampling and rollout generation processes for task $i$ and task $j$ are independent. Consequently, the expectation of the product factors into the product of expectations:

\begin{equation*}
\label{eq:appendix_proof_it_rl}
    | \mathcal{I}_{\text{RL}}(i, j) | \le \frac{1}{G^2} \underbrace{\mathbb{E}_{x} \left[ \sum_{k=1}^G | \hat{A}_{i,k} | \| \delta S_{i,k} \|_2 \right]}_{Term_i} \cdot \underbrace{\mathbb{E}_{x'} \left[ \sum_{l=1}^G | \hat{A}_{j,l} | \| \delta S_{j,l} \|_2 \right]}_{Term_j}.
\end{equation*}

We now bound the term for task $i$ ($Term_i$). First, we apply the \textbf{Cauchy-Schwarz inequality} (for scalars) to the sum inside the expectation:
\begin{equation}
\label{ieq:appendix_for_it_rl_mid}
    \sum_{k=1}^G | \hat{A}_{i,k} | \| \delta S_{i,k} \|_2 \le \sqrt{\sum_{k=1}^G \hat{A}_{i,k}^2} \cdot \sqrt{\sum_{k=1}^G \| \delta S_{i,k} \|_2^2}.
\end{equation}

For RL algorithms of GRPO series, the advantage estimates are standardized within each group to stabilize training. Specifically, for a group of rewards $\{r_{i,1}, \dots, r_{i,G}\}$, the standardized advantage is computed as:
\begin{equation*}
    \hat{A}_{i,k} = \frac{r_{i,k} - \text{mean}(\{r_i\})}{\text{std}(\{r_i\})}.
\end{equation*}
By the definition of standardization, the mean of their squares is exactly 1:
\begin{equation*}
    \frac{1}{G} \sum_{k=1}^G \hat{A}_{i,k}^2 = 1 \quad \implies \quad \sum_{k=1}^G \hat{A}_{i,k}^2 = G.
\end{equation*}

Substituting this property into the inequality \ref{ieq:appendix_for_it_rl_mid} simplifies the bound to:
\begin{equation*}
    \sum_{k=1}^G | \hat{A}_{i,k} | \| \delta S_{i,k} \|_2 \le \sqrt{G} \cdot \sqrt{\sum_{k=1}^G \| \delta S_{i,k} \|_2^2}.
\end{equation*}

Taking the expectation on both sides, we use \textbf{Jensen's inequality} for the concave function $f(z) = \sqrt{z}$ (implying $\mathbb{E}[\sqrt{Z}] \le \sqrt{\mathbb{E}[Z]}$):
\begin{equation*}
\begin{aligned}
    Term_i = \mathbb{E}_{x} \left[ \sum_{k=1}^G | \hat{A}_{i,k} | \| \delta S_{i,k} \|_2 \right] 
    &\le \mathbb{E}_{x} \left[ \sqrt{G} \cdot \sqrt{\sum_{k=1}^G \| \delta S_{i,k} \|_2^2} \right] \\
    &\le \sqrt{G} \cdot \sqrt{ \mathbb{E}_{x} \left[ \sum_{k=1}^G \| \delta S_{i,k} \|_2^2 \right] }.
\end{aligned}
\end{equation*}

According to Assumption \ref{ass:bounds}, the expected intra-group variance is bounded: $\mathbb{E}_{x} \left[ \frac{1}{G} \sum_{k=1}^G \| \delta S_{i,k} \|_2^2 \right] \le V_i^2$, which implies $\mathbb{E}_{x} \left[ \sum_{k=1}^G \| \delta S_{i,k} \|_2^2 \right] \le G V_i^2$. Substituting this into the inequality:
\begin{equation*}
    Term_i \le \sqrt{G} \cdot \sqrt{ G \cdot V_i^2 } = G V_i.
\end{equation*}

Applying the same bound for task $j$ ($Term_j \le G V_j$) and substituting back into the expression \ref{eq:appendix_proof_it_rl}:
\begin{equation*}
    | \mathcal{I}_{\text{RL}}(i, j) | \le \frac{1}{G^2} \cdot (G V_i) \cdot (G V_j) = V_i \cdot V_j.
\end{equation*}

This confirms the bound in Eq.\eqref{UB-RL}.
\end{proof}

\subsection{Generalization to Other RL Algorithm}
\label{appendix: other_rl_proof}
The variance-limited interference mechanism is not exclusive to GRPO. First, in the context of the reasoning enhancement or RLVR \cite{guo2025deepseek} focused on in this paper, most recent reasoning-oriented RL algorithms (e.g., GSPO~\cite{zheng2025group}, DAPO~\cite{yu2025dapo}) generally follow the GRPO paradigm by generating a group of rollouts for a single prompt and performing group-wise advantage normalization. Consequently, these algorithms also satisfy the derivation presented above. Second, regarding classical RL algorithms, this property also holds approximately. In fact, as long as there is normalization for the advantage and the on-policy property, our derivation can hold. In this section, we provide a derivation using PPO~\cite{schulman2017proximal} as an example.


Unlike GRPO which normalizes advantages within a group of outputs for the same prompt, PPO (as implemented in VeRL~\cite{sheng2024hybridflow}) typically employs \textbf{token-level GAE} and performs \textbf{Batch-Level Advantage Normalization}. We show that our theoretical framework seamlessly generalizes to this setting.

\textbf{Definition (PPO Gradient with Batch Normalization).} 
Consider a training batch $\mathcal{B}$ consisting of multiple trajectories. Let $\mathcal{K}$ be the set of indices for all valid tokens in this batch (flattening batch and time dimensions), with total count $N = |\mathcal{K}|$. For each token $k \in \mathcal{K}$, let $S_k = \nabla_\theta \log \pi_\theta(a_k | s_k)$ be the score function and $\hat{A}_k$ be the normalized advantage.
The empirical gradient estimate for task $i$ is given by:
\begin{equation*}
    g_i^{\text{PPO}} = \frac{1}{N} \sum_{k=1}^N \hat{A}_{k} S_{k}.
\end{equation*}
Crucially, the \textbf{Batch-Level Advantage Normalization} ensures that the advantages also satisfy the property that the sum of advantage values is 0 and the variance is 1 over the batch:
\begin{equation*}
    \sum_{k=1}^N \hat{A}_k = 0, \quad \text{and} \quad \sum_{k=1}^N \hat{A}_k^2 = N.
\end{equation*}

\textbf{Lemma (PPO Gradient Decomposition).}
Let $\bar{S}_{\mathcal{B}} = \frac{1}{N} \sum_{k=1}^N S_k$ be the \textbf{batch mean score function}, representing the common optimization direction of the entire batch. Let $\delta S_k = S_k - \bar{S}_{\mathcal{B}}$ be the \textbf{residual score} for the $k$-th token. The PPO gradient can be decomposed as:
\begin{equation*}
    g_i^{\text{PPO}} = \frac{1}{N} \sum_{k=1}^N \hat{A}_k \delta S_k.
\end{equation*}

\textit{Proof.}
Substituting $S_k = \bar{S}_{\mathcal{B}} + \delta S_k$ into the gradient definition:
\begin{equation*}
    g_i^{\text{PPO}} = \frac{1}{N} \sum_{k=1}^N \hat{A}_k (\bar{S}_{\mathcal{B}} + \delta S_k) = \frac{1}{N} \underbrace{\left( \sum_{k=1}^N \hat{A}_k \right)}_{=0} \bar{S}_{\mathcal{B}} + \frac{1}{N} \sum_{k=1}^N \hat{A}_k \delta S_k = \frac{1}{N} \sum_{k=1}^N \hat{A}_k \delta S_k.
\end{equation*}
This confirms that PPO also filters out the common mean component $\bar{S}_{\mathcal{B}}$, relying solely on the residuals over batch.

\textbf{Theorem (PPO Interference Upper Bound).}
Let Assumption \ref{ass:bounds} be adapted to PPO such that the expected squared norm of the residual scores over a batch of size $N$ is bounded:
\begin{equation*}
    \mathbb{E}_{\mathcal{B} \sim \mathcal{D}_i} \left[ \frac{1}{N} \sum_{k=1}^N \| \delta S_{i,k} \|_2^2 \right] \le V_i^2.
\end{equation*}
Then, the expected gradient inner product between task $i$ and task $j$ (trained with PPO) satisfies:
\begin{equation*}
    | \mathcal{I}_{\text{PPO}}(i, j) | \le V_i \cdot V_j.
\end{equation*}


The derivation steps are the same as in Section \ref{appendix:proof_upper_bound}.

\textbf{Discussion on the Differences with GRPO Series.}
While the upper bound for PPO is algebraically identical to GRPO in Eq \ref{UB-RL}, the interpretation of $\delta S$ differs slightly. 
In RL algorithms utilizing group-wise advantage normalization (e.g. GRPO), $\delta S$ captures the intra-group policy divergence corresponding to a single prompt. In contrast, within PPO, this term represents the deviation across the prompts of entire batch. Nevertheless, the core mechanism remains the same: the advantage term $\hat{A}$ acts as a high-pass filter that removes the high-magnitude common direction ($\bar{S}$). 
In GRPO, the filtered $\bar{S}$ represents the common characteristics of the model's responses to a specific input. This common direction exhibits a large magnitude because, given a fixed model and input, the variance in model activations is inherently limited. Conversely, in PPO, the filtering applies to the common direction across all samples in a batch. Intuitively, this common direction is weaker than that derived from a single input. This results in PPO being more perturbative than GRPO during parallel training. Supporting evidence can be found in Appendix \ref{appendix:more_main_results}. However, since samples within a batch typically belong to the same task domain (e.g., mathematics or coding), this process still effectively filters out significant commonalities associated with task-specific features and linguistic priors. Consequently, the upper bound of interference in PPO is determined by the policy fluctuations within the specific task. Drawing from this analysis, we propose a potential strategy to optimize PPO in future work: by generating a group of rollouts for a single input and constraining the scope of advantage normalization to these intra-group trajectories, PPO may potentially achieve orthogonality effects comparable to those of GRPO.

\subsection{Supplementary Analysis: Cosine Similarity of Multi-Task Gradients}
\label{app:cosine_similarity_multitask}

This subsection provides a complementary analysis to the derivation in the main text. In Section \ref{sec:upper bound}, we upper bounded the \emph{gradient inner product} between tasks, which characterizes practical interference. Here, we further study the problem from a directional perspective by factoring out scale and directly examining the interference induced by different tasks under SFT and RL. Specifically, we analyze the \emph{cosine similarity} between task gradients for the two training paradigms.

To make the directional comparison explicit, we introduce an additional high-dimensional decomposition assumption on the score function. Under this assumption, we show that SFT admits a positive lower bound on cross-task cosine similarity due to the persistence of a shared dominant direction, whereas RL admits an upper bound controlled only by residual coupling, because the shared direction is exactly canceled by the zero-sum advantage weights.

\paragraph{Setup.}
Consider two independent tasks $i$ and $j$. For task $i$, given prompt $x_i$, the model generates $G$ rollouts $\{y_{i,k}\}_{k=1}^G$; similarly, task $j$ generates $\{y_{j,m}\}_{m=1}^G$.

We assume that the score function of each rollout admits the decomposition
\[
S_{i,k}(x_i) = \mu + \epsilon_{i,k},
\qquad
S_{j,m}(x_j) = \mu + \epsilon_{j,m},
\]
where $\mu$ denotes a task-shared dominant direction, and $\epsilon$ is a task- and sample-specific residual component.

\paragraph{High-dimensional geometric condition.}
We assume that, on a high-probability event $\mathcal{E}$,
\[
\|\mu\| = M_\mu,
\qquad
\|\epsilon_{i,k}\| = \|\epsilon_{j,m}\| = \sigma = \eta M_\mu,
\qquad
\eta \ll 1.
\]
That is, the residual norm is concentrated and much smaller than the shared dominant direction.

We further define the \emph{task residual coupling factor} $\gamma_{i,j}\in[0,1]$ as
\[
\max_{k,m}
\frac{|\langle \epsilon_{i,k}, \epsilon_{j,m}\rangle|}{\sigma^2}
\le \gamma_{i,j}.
\]
Similarly, for distinct rollouts within the same task, we assume an intra-task coupling factor $\gamma_{\mathrm{in}}\ll 1$ such that
\[
\max_{k\neq m}
\frac{|\langle \epsilon_{i,k}, \epsilon_{i,m}\rangle|}{\sigma^2}
\le \gamma_{\mathrm{in}}.
\]

\paragraph{SFT gradients preserve the shared dominant direction.}
Under SFT, the gradient is the uniform average of score functions:
\[
g_i^{\mathrm{SFT}}
=
\frac{1}{G}\sum_{k=1}^G S_{i,k}(x_i)
=
\mu + \bar{\epsilon}_i,
\qquad
\bar{\epsilon}_i := \frac{1}{G}\sum_{k=1}^G \epsilon_{i,k}.
\]
By the triangle inequality,
\[
\|\bar{\epsilon}_i\| \le \sigma = \eta M_\mu.
\]

We first lower bound the numerator:
\begin{align*}
\langle g_i^{\mathrm{SFT}}, g_j^{\mathrm{SFT}}\rangle
&=
\langle \mu+\bar{\epsilon}_i,\mu+\bar{\epsilon}_j\rangle \\
&=
\|\mu\|^2
+
\langle \mu,\bar{\epsilon}_i\rangle
+
\langle \mu,\bar{\epsilon}_j\rangle
+
\langle \bar{\epsilon}_i,\bar{\epsilon}_j\rangle \\
&\ge
M_\mu^2
-
M_\mu\|\bar{\epsilon}_i\|
-
M_\mu\|\bar{\epsilon}_j\|
-
\|\bar{\epsilon}_i\|\,\|\bar{\epsilon}_j\| \\
&\ge
M_\mu^2(1-2\eta-\eta^2).
\end{align*}
Next, we upper bound the denominator:
\begin{align*}
\|g_i^{\mathrm{SFT}}\|\,\|g_j^{\mathrm{SFT}}\|
&\le
(\|\mu\|+\|\bar{\epsilon}_i\|)(\|\mu\|+\|\bar{\epsilon}_j\|) \\
&\le
M_\mu^2(1+\eta)^2.
\end{align*}
Therefore,
\begin{equation}
|\cos(g_i^{\mathrm{SFT}},g_j^{\mathrm{SFT}})|
\ge
\frac{1-2\eta-\eta^2}{(1+\eta)^2}.
\label{eq:sft_cosine_lower_bound}
\end{equation}
When $\eta\ll 1$, the right-hand side remains strictly positive and close to $1$. Hence, even for distinct tasks, SFT gradients remain directionally aligned because the shared component $\mu$ cannot be removed by uniform averaging.

\paragraph{RL gradients cancel the shared dominant direction.}
For RL algorithms such as GRPO, the gradient is weighted by normalized advantages $\hat{A}_{i,k}$ satisfying
\[
\sum_{k=1}^G \hat{A}_{i,k}=0,
\qquad
\sum_{k=1}^G \hat{A}_{i,k}^2 = G.
\]
Thus,
\begin{align*}
g_i^{\mathrm{RL}}
&=
\frac{1}{G}\sum_{k=1}^G \hat{A}_{i,k}(\mu+\epsilon_{i,k}) \\
&=
\mu\left(\frac{1}{G}\sum_{k=1}^G \hat{A}_{i,k}\right)
+
\frac{1}{G}\sum_{k=1}^G \hat{A}_{i,k}\epsilon_{i,k} \\
&=
\frac{1}{G}\sum_{k=1}^G \hat{A}_{i,k}\epsilon_{i,k}.
\end{align*}
Similarly,
\[
g_j^{\mathrm{RL}}
=
\frac{1}{G}\sum_{m=1}^G \hat{A}_{j,m}\epsilon_{j,m}.
\]

We first upper bound the numerator:
\begin{align*}
|\langle g_i^{\mathrm{RL}}, g_j^{\mathrm{RL}}\rangle|
&=
\left|
\frac{1}{G^2}
\sum_{k=1}^G\sum_{m=1}^G
\hat{A}_{i,k}\hat{A}_{j,m}
\langle \epsilon_{i,k},\epsilon_{j,m}\rangle
\right| \\
&\le
\frac{1}{G^2}
\sum_{k=1}^G\sum_{m=1}^G
|\hat{A}_{i,k}|\,|\hat{A}_{j,m}|\,
|\langle \epsilon_{i,k},\epsilon_{j,m}\rangle| \\
&\le
\frac{\gamma_{i,j}\sigma^2}{G^2}
\left(\sum_{k=1}^G |\hat{A}_{i,k}|\right)
\left(\sum_{m=1}^G |\hat{A}_{j,m}|\right).
\end{align*}
By Cauchy--Schwarz,
\[
\sum_{k=1}^G |\hat{A}_{i,k}|
\le
\sqrt{G}\left(\sum_{k=1}^G \hat{A}_{i,k}^2\right)^{1/2}
=
G,
\]
and similarly for task $j$, yielding
\[
|\langle g_i^{\mathrm{RL}}, g_j^{\mathrm{RL}}\rangle|
\le
\gamma_{i,j}\sigma^2.
\]

We now lower bound the denominator. For one task,
\begin{align*}
\|g_i^{\mathrm{RL}}\|^2
&=
\frac{1}{G^2}
\sum_{k=1}^G\sum_{m=1}^G
\hat{A}_{i,k}\hat{A}_{i,m}
\langle \epsilon_{i,k},\epsilon_{i,m}\rangle \\
&=
\frac{1}{G^2}
\left(
\sum_{k=1}^G \hat{A}_{i,k}^2 \|\epsilon_{i,k}\|^2
+
\sum_{k\neq m}\hat{A}_{i,k}\hat{A}_{i,m}
\langle \epsilon_{i,k},\epsilon_{i,m}\rangle
\right).
\end{align*}
Using $\|\epsilon_{i,k}\|^2=\sigma^2$ and the intra-task coupling bound,
\begin{align*}
\|g_i^{\mathrm{RL}}\|^2
&\ge
\frac{1}{G^2}
\left(
G\sigma^2
-
\gamma_{\mathrm{in}}\sigma^2
\sum_{k\neq m} |\hat{A}_{i,k}|\,|\hat{A}_{i,m}|
\right).
\end{align*}
Since
\[
\sum_{k\neq m} |\hat{A}_{i,k}|\,|\hat{A}_{i,m}|
\le
\left(\sum_{k=1}^G |\hat{A}_{i,k}|\right)^2
\le
G^2,
\]
we obtain
\[
\|g_i^{\mathrm{RL}}\|^2
\ge
\frac{\sigma^2}{G}(1-\gamma_{\mathrm{in}}G).
\]
The same bound holds for task $j$. Combining the bounds above,
\begin{equation}
|\cos(g_i^{\mathrm{RL}},g_j^{\mathrm{RL}})|
\le
\frac{\gamma_{i,j}G}{1-\gamma_{\mathrm{in}}G}.
\label{eq:rl_cosine_upper_bound}
\end{equation}
In the high-dimensional regime where $\gamma_{\mathrm{in}}\to 0$,
\[
|\cos(g_i^{\mathrm{RL}},g_j^{\mathrm{RL}})|
\lesssim
\mathcal{O}(\gamma_{i,j}G).
\]

\paragraph{Interpretation.}
Eq.~\eqref{eq:sft_cosine_lower_bound} and Eq.~\eqref{eq:rl_cosine_upper_bound} reveal a directional contrast between SFT and RL. In SFT, averaging preserves the shared dominant direction $\mu$, which induces a strictly positive lower bound on cross-task cosine similarity. In RL, the zero-sum advantage weighting removes the shared direction algebraically, so the cosine similarity is controlled only by the residual coupling factor $\gamma_{i,j}$. Therefore, under weak residual coupling, RL gradients from different tasks become nearly orthogonal, which provides an additional explanation for the observed task coexistence. We emphasize, however, that this conclusion does not imply that all task pairs in RL are non-interfering. In particular, the residual coupling factor $\gamma_{i,j}$ need not always be very small. We discuss how to assess such cases in detail in Appendix~\ref{appendix:judge_for_diff}.

\section{Parallel RL}
\label{appendix:parallel_rl}

\subsection{Method for Judging Different Tasks}
\label{appendix:judge_for_diff}
As discussed in Section \ref{sec:ablation} of the main text, the efficacy of Parallel-RL is sensitive to the selection of parallel tasks. Not all task pairs qualify as "coexist"; certain tasks may exhibit great interference and are thus unsuitable for parallel training. In this section, building upon the analysis in Section \ref{sec:theoretical_analysis}, we conduct a preliminary exploration into defining whether two tasks are independent and can be trained in parallel.

Based on the qualitative analysis in Section \ref{sec:orthogonality}, we found that in two different tasks, high-reward trajectories activate disparate neural circuits and patterns. Thus an intuitive method to verify this is to analyze the distributions of the score functions $S$ for the two tasks. Specifically, we visualize these distributions using t-SNE. Given that t-SNE is a dimensionality reduction technique, the separation of score function distributions in the t-SNE latent space serves as a practical heuristic for determining task coexistence. As illustrated in Figure \ref{fig:score_func_distribution} of the main text and Figure \ref{fig:appendix_score_func_distribution} in the Appendix \ref{appendix:more visualizations}, the training data for the selected tasks in main experiment exhibit relative separation for RL, suggesting their suitability for parallel training. Next, we present a counter-example involving a task where the training data likely interferes with others. We selected SynLogic~\cite{liu2025synlogic}, a recent influential work that synthesizes data for logical reasoning games (e.g., Sudoku, Mazes), which we refer to as the \textbf{Game} task. We trained DeepSeek-R1-Distill-Qwen-1.5B on this game task following the same protocol as the main experiments and subsequently merged it with the \textbf{Math} and \textbf{Code} models trained in Section \ref{sec:parallel_exp}, respectively.
The results reveal significant interference between the Game task and both the Math and Code tasks. The results are shown in Figure \ref{fig:appendix_game_analysis_combined} (c) and (d). Consistent with Table \ref{tab:main_result} in the main text, we primarily compare the performance of the merged Parallel-RL model against the corresponding task-specific model. For instance, on the math task, we evaluate the performance change of Parallel-RL relative to the single-task math model. We found that Parallel-RL of Game and Math resulted in a performance decline of 4.0\% on math and 6.0\% on game. Similarly, a decline of 3.3\% and 4.9\% was observed in Parallel-RL of Game and Code. These results demonstrate significant interference. Consistent with our methodology, we generated score functions for the SynLogic data and visualized them using t-SNE. As depicted in Figure \ref{fig:appendix_game_analysis_combined} (a) and (b), there is a marked overlap between the distributions of the Game task and those of the Math and Code tasks. This observation preliminarily validates the effectiveness of our proposed criterion for task compatibility.

\begin{figure*}[t]
\centering
  \includegraphics[width=0.85\linewidth]{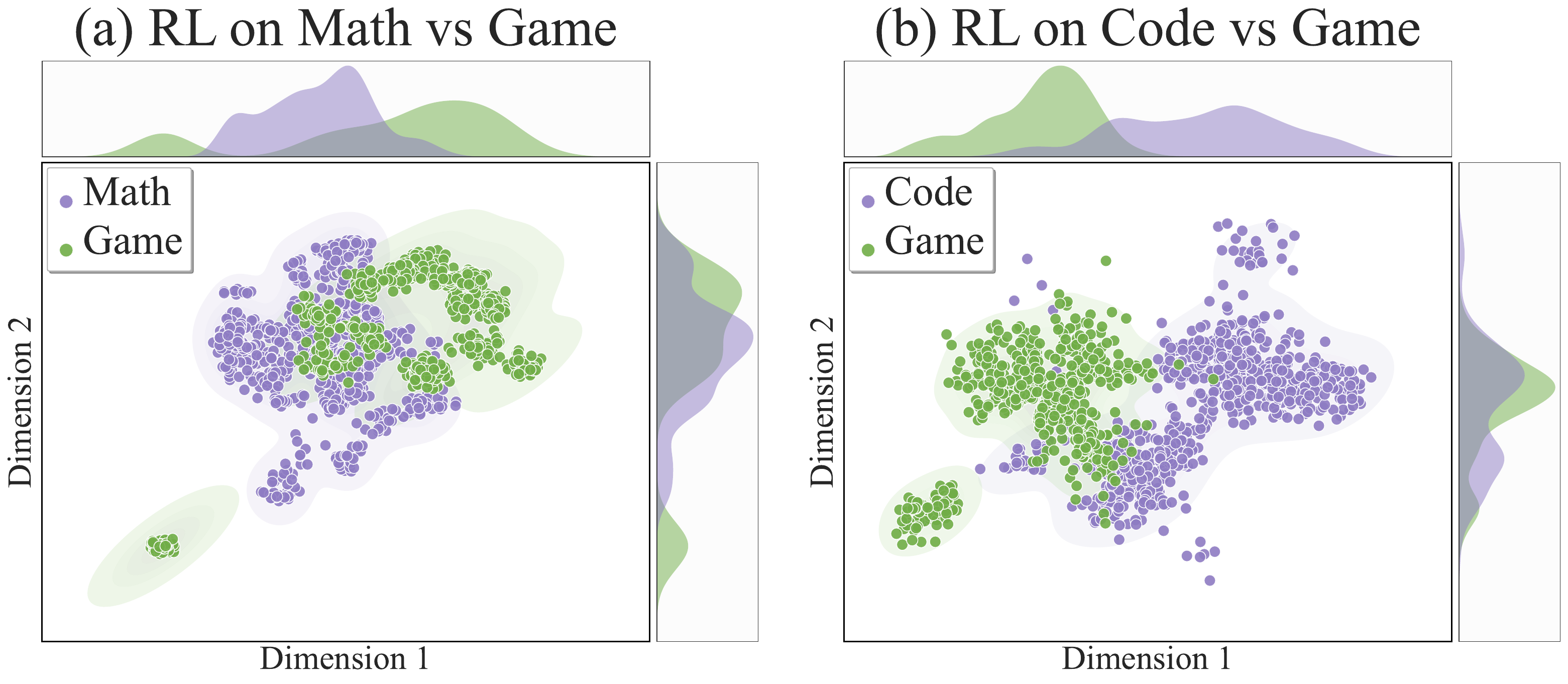}
  
  \vspace{1em} 
  
  \includegraphics[width=0.85\linewidth]{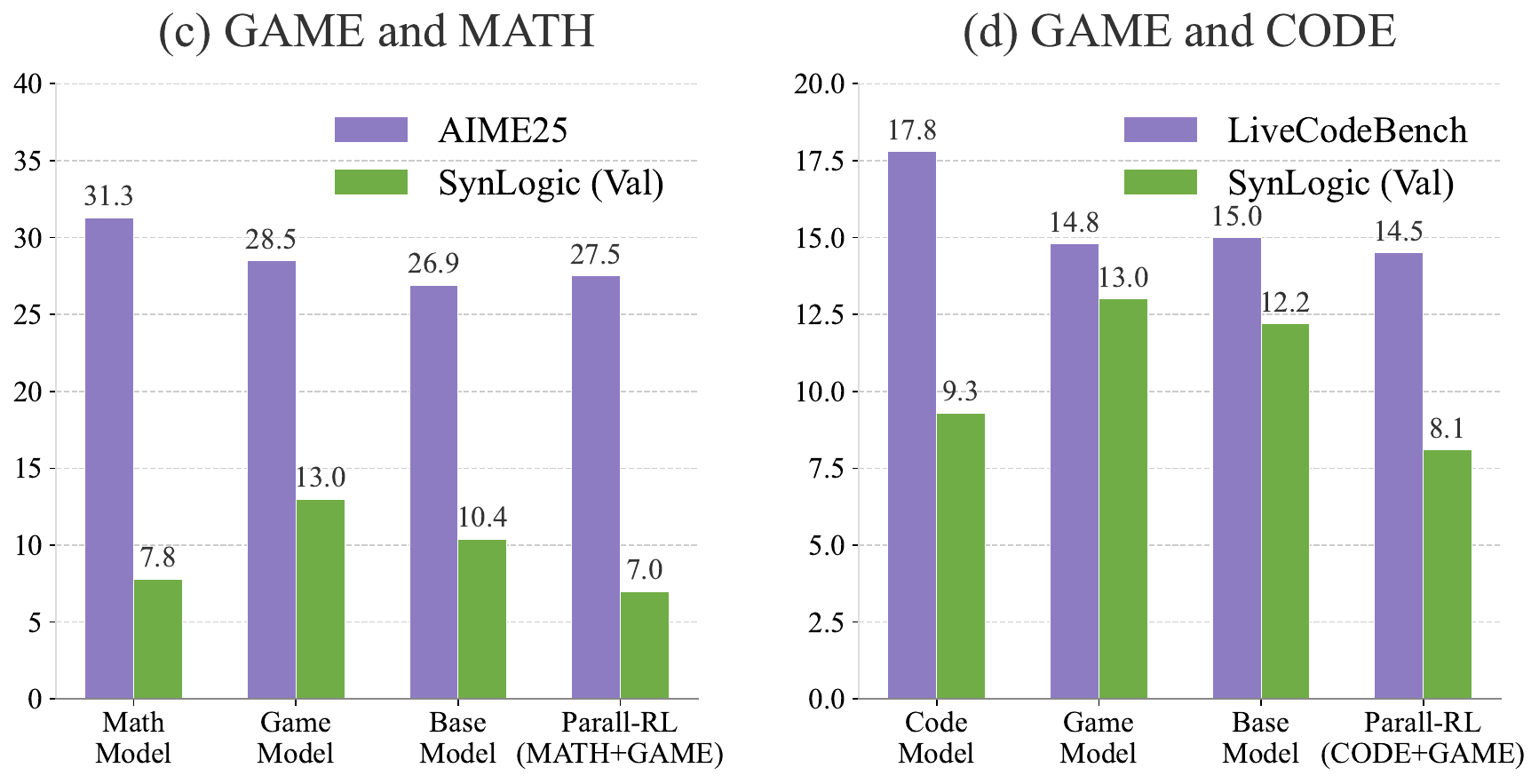}
  
  \caption{\textbf{Analysis of RL training on interfering tasks.} 
  \textbf{Top:} Distribution of the score function $S$ sampled during RL training on math, code and game. 
  \textbf{Bottom:} Corresponding performance comparison on Math (AIME25) and Code (LiveCodeBench) tasks. Training game models in parallel with math or code models can lead to significant performance losses.}
  \label{fig:appendix_game_analysis_combined}
\end{figure*}

\clearpage

\subsection{Single-Task Training Techniques: Trade-off in Parallel RL}

\begin{wrapfigure}{r}{0.6\textwidth}
\vspace{-10pt}
  \centering
  \includegraphics[width=0.6\textwidth]{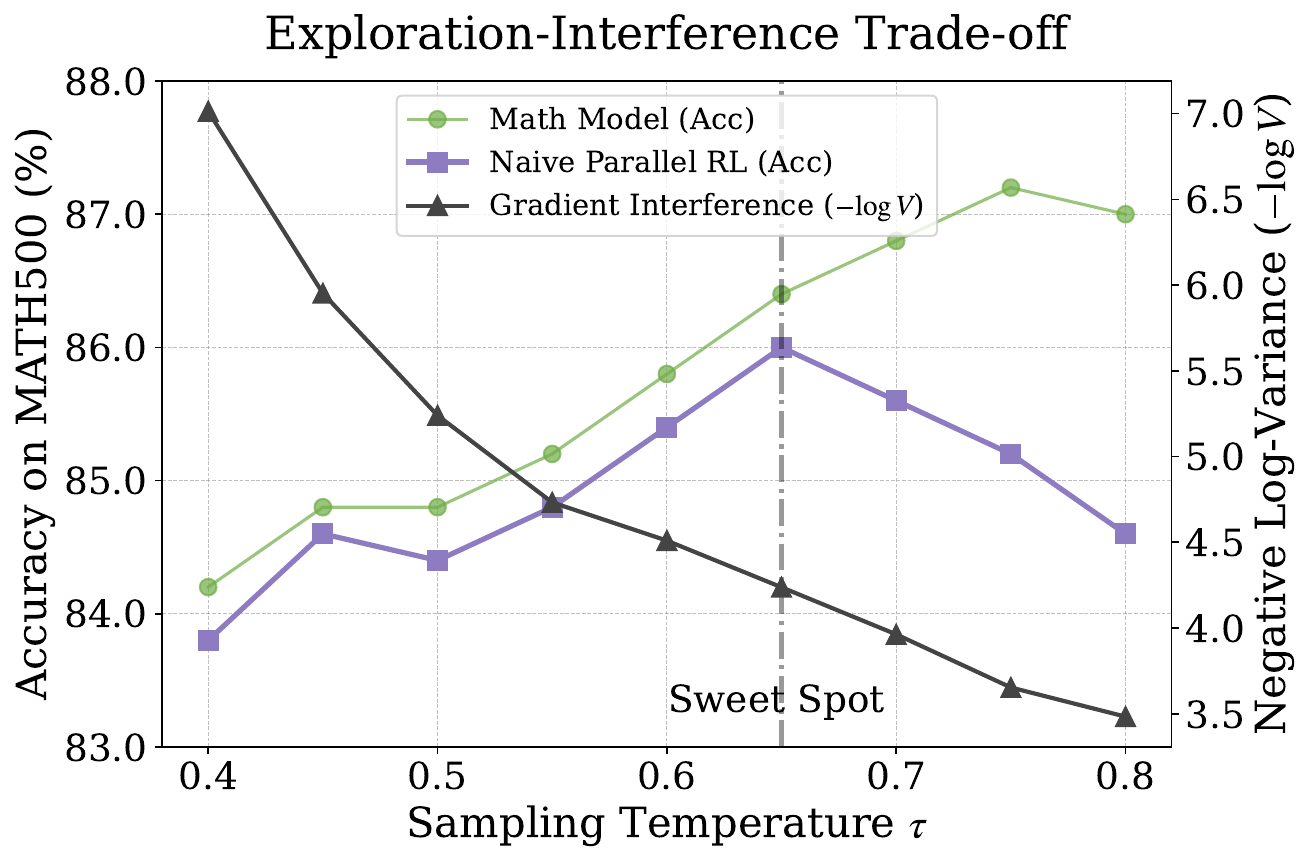}
  \caption{The Trade-off between Single-Task Exploration and Multi-Task Coexistence. When rollout temperature $\tau$ grows, we calculates the performance of Math Model, intra-group gradient variance (measured by $-\log V$), and Naive Parallel-RL accuracy on MATH500. Increasing $\tau$ improves single-task performance but also increases gradient variance, leading to higher gradient interference. As a result, the non-monotonic trend of Parallel-RL performance reveals a sweet spot that balances exploration benefits and interference costs.}
  \label{fig:trade_off}
  \vspace{-15pt}
\end{wrapfigure}

Based on the upper bound derived in Section \ref{sec:upper bound}, the gradient interference of RL between different tasks is limited by the intra-group variance of rollouts, specifically $|\mathcal{I}_{\text{RL}}(i, j)| \le V_i \cdot V_j$. A deeper analysis of this theory reveals an interesting trade-off in Parallel-RL training: \textit{the trade-off between single-task exploration and multi-task interference.}.

\begin{itemize}
    \item \textbf{Exploration Benefit:} In single-task RL, increasing exploration and output diversity helps the model discover low-probability but high-reward reasoning trajectory~\cite{guo2025deepseek}. As a result, higher output diversity often leads to improved convergence and performance in single-task RL training.
    
    \item \textbf{Interference Cost:} However, increased exploration also amplifies the diversity among the $G$ rollout trajectories generated for the same prompt, thereby enlarging the intra-group variance $V$. According to Theorem \ref{thm:upper_bounds}, the interference between tasks rises accordingly. While high exploration benefits single-task training, it increases the gradient interference. This may cause greater conflicts and performance loss during the merging phase.
\end{itemize}

\textbf{Empirical Analysis.} 

To verify this, we conducted an ablation study using sampling temperature $\tau$ to control exploration. Since $\tau$ directly affects the entropy of the policy $\pi_\theta$, it serves as an effective control variable. Based on preliminary tests, we set the temperature range to $\tau \in [0.4, 0.8]$. We varied the rollout temperature for the Math Model training while keeping other tasks constant. We evaluated the accuracy of the Naive Parallel RL model on MATH500 and measured the negative log-variance ($-\log V$) of the intra-group variance $V$. Due to resource constraints, we performed this analysis using LoRA on a data subset.

As shown in Figure \ref{fig:trade_off}, the performance of Math Model improves as $\tau$ increases from 0.4 to 0.8. At the same time $-\log V$ consistently decreases. This indicates a rise in intra-group variance and a corresponding increase in the interference upper bound. Crucially, the Naive Parallel-RL performance exhibits a non-monotonic trend: accuracy first increases and then degrades as $\tau$ grows. At low temperatures, interference is small but single-task learning is insufficient; at high temperatures, single-task performance improves but excessive inter-task interference degrades the merged model. This curve clearly illustrates the exploration–interference trade-off.

Notably, around $\tau \approx 0.65$, Parallel-RL achieves the best performance, achieving an optimal balance between single-task capability and multi-task interference. This result highlights that, under Parallel-RL, optimizing single-task exploration alone is insufficient; controlling multi-task interference is equally crucial for effective task merging with low loss.

\clearpage

\section{More Experimental Results}


In Section \ref{sec:parallel_exp} of the main text, building upon our comprehensive empirical and theoretical analyses, we proposed the \textbf{Parallel-RL} training paradigm. In this appendix, we present an extensive set of additional experiments to further substantiate the efficacy and robustness of this approach. Specifically, Section \ref{appendix:more_main_results} provides the complete results corresponding to the main experiments discussed in Table \ref{tab:main_result}. Section \ref{appendix:more_ablation} presents expanded ablation studies (complementing Table \ref{tab:ablation}) to demonstrate how Parallel-RL effectively decouples capabilities across diverse tasks. Section \ref{appendix:more visualizations} visualizes the distributions of score functions across tasks as referenced in Section \ref{sec:orthogonality}. Finally, in Section \ref{appendix:pass_at_k}, we conduct a Pass@$k$ evaluation of the Parallel-RL model to provide a more rigorous assessment of its reasoning boundaries.

\subsection{More Main Experiment Results}
\label{appendix:more_main_results}

In this section, we present the results of Parallel-RL models derived from two additional training settings on the 1.5B base model: (1) GRPO training using LoRA, and (2) full-parameter training using PPO. 

We found that when using LoRA for training, the performance loss caused by parameter interference is minimal. Even direct summation of $\Delta W_i$ (Naive Parallel-RL (sum)) maintains 98\% of the accuracy of the corresponding single-task models. Adapted Parallel-RL even outperforms Single Task RL, achieving an average performance of 103\% of Single Task RL. This indicates that the reasoning capability of task-specific LoRA modules is largely preserved during the merging process.

For PPO, Parallel-RL is equally effective. Even Naive Parallel-RL model achieves an average performance improvement of 5.3\% compared to the base model. Adapted Parallel-RL showed an average improvement of 9.1\%. This demonstrates the effectiveness of the Parallel RL paradigm. However, the performance of the single-task model that Naive Parallel (sum) of PPO retained (92.9\%) is lower than that of GRPO shown in Table \ref{tab:main_result} (94.2\%). This aligns with our theoretical analysis in Appendix \ref{appendix: other_rl_proof}. GRPO-based parallel training exhibits lower interference compared to PPO.

\begin{table*}[!htbp]
\caption{The accuracy (\%) of several multi task training methods. Superscripts denote the performance gap relative to the \textbf{corresponding Single-Task Baseline}. $\Delta$ Base shows the average performance improvement over the Base Model. Retention (\%) represents the \textbf{percentage of performance retained} compared to the Single-Task Baseline.}
\vspace{0.1in}
\centering
\small
\resizebox{1.0 \linewidth}{!}{
\begin{tabular}{clcccccccc}
\toprule[1pt]
\multirow{2}{*}{\normalsize \textbf{Base Model}} & \multicolumn{1}{c}{\multirow{2}{*}{\normalsize \textbf{Method}}} & \multicolumn{2}{c}{\normalsize \textbf{Math}} & \multicolumn{2}{c}{\normalsize \textbf{Science}} & \multicolumn{1}{c}{\normalsize \textbf{Logic}} & \multicolumn{1}{c}{\normalsize \textbf{Code}} & \multicolumn{2}{c}{\normalsize \textbf{Average Statistic}} \\ \cline{3-10} 
    & \multicolumn{1}{c}{}                        & MATH500    & AIME2025   & MMLU     & GPQA     & KK    & LiveCodeBench & $\Delta$ Base & Retention (\%) \\ 
    \hline

\multirow{18}{*}{\shortstack{DeepSeek-R1- \\ Distill-Qwen- \\ 1.5B (LoRA)}} 
& \multicolumn{9}{c}{\emph{Baselines (LoRA GRPO)}} \\
& Base Model & 82.0 & 26.9 & 34.9 & 32.3 & 31.0 & 15.0 & - & - \\
&Single-Task SFT & 84.4 & 28.3 & 43.5 & 34.8 & 35.0 & 15.8 & +3.3 & - \\
&Single-Task RL & 85.6 & 31.3 & 48.5 & 35.9 & 38.0 & 17.8 & +5.8 & - \\
 \cline{2-10}

& \multicolumn{9}{c}{\emph{Previous Multi Task Training Paradigms}} \\
& Mixed Data SFT  & $84.6^{\textcolor[HTML]{00008B}{+0.2}}$ & $27.1^{\textcolor[HTML]{8B0000}{-1.2}}$ & $38.9^{\textcolor[HTML]{8B0000}{-4.6}}$ & $32.3^{\textcolor[HTML]{8B0000}{-2.5}}$ & $34.0^{\textcolor[HTML]{8B0000}{-1.0}}$ & $16.0^{\textcolor[HTML]{00008B}{+0.2}}$ & +1.8 & - \\
& Multi Stage SFT & $78.2^{\textcolor[HTML]{8B0000}{-6.2}}$ & $19.2^{\textcolor[HTML]{8B0000}{-9.1}}$ & $31.1^{\textcolor[HTML]{8B0000}{-12.4}}$ & $25.3^{\textcolor[HTML]{8B0000}{-9.5}}$ & $9.0^{\textcolor[HTML]{8B0000}{-26.0}}$ & $14.3^{\textcolor[HTML]{8B0000}{-1.5}}$ & -7.5 & - \\ 
& Mixed Data RL & $85.2^{\textcolor[HTML]{8B0000}{-0.4}}$ & $30.5^{\textcolor[HTML]{8B0000}{-0.8}}$ & $43.2^{\textcolor[HTML]{8B0000}{-5.3}}$ & $34.8^{\textcolor[HTML]{8B0000}{-1.1}}$ & $37.0^{\textcolor[HTML]{8B0000}{-1.0}}$ & $15.7^{\textcolor[HTML]{8B0000}{-2.1}}$ & +4.1 & - \\
& Multi Stage RL & $86.6^{\textcolor[HTML]{00008B}{+1.0}}$ & $31.5^{\textcolor[HTML]{00008B}{+0.2}}$ & $49.3^{\textcolor[HTML]{00008B}{+0.8}}$ & $\textbf{36.4}^{\textcolor[HTML]{00008B}{+0.5}}$ & $43.0^{\textcolor[HTML]{00008B}{+5.0}}$ & $\textbf{17.3}^{\textcolor[HTML]{8B0000}{-0.5}}$ & +7.0 & - \\ 
 \cline{2-10}

& \multicolumn{9}{c}{\emph{Parallel Training}} \\
&Naive Parallel-SFT \textit{(mean)} & $51.0^{\textcolor[HTML]{8B0000}{-33.4}}$ & $17.3^{\textcolor[HTML]{8B0000}{-11.0}}$ & $26.7^{\textcolor[HTML]{8B0000}{-16.8}}$ & $25.8^{\textcolor[HTML]{8B0000}{-9.0}}$ & $20.0^{\textcolor[HTML]{8B0000}{-15.0}}$ & $9.7^{\textcolor[HTML]{8B0000}{-6.1}}$ & -11.9 & 62.2 \\
&Naive Parallel-SFT \textit{(sum)} & $52.6^{\textcolor[HTML]{8B0000}{-31.8}}$ & $16.9^{\textcolor[HTML]{8B0000}{-11.4}}$ & $27.1^{\textcolor[HTML]{8B0000}{-16.4}}$ & $24.7^{\textcolor[HTML]{8B0000}{-10.1}}$ & $25.0^{\textcolor[HTML]{8B0000}{-10.0}}$ & $8.8^{\textcolor[HTML]{8B0000}{-7.0}}$ & -11.2 & 64.1 \\
&Naive Parallel-RL \textit{(mean)} & $85.4^{\textcolor[HTML]{8B0000}{-0.2}}$ & $28.5^{\textcolor[HTML]{8B0000}{-2.8}}$ & $45.9^{\textcolor[HTML]{8B0000}{-2.6}}$ & $31.8^{\textcolor[HTML]{8B0000}{-4.1}}$ & $39.0^{\textcolor[HTML]{00008B}{+1.0}}$ & $16.9^{\textcolor[HTML]{8B0000}{-0.9}}$ & +4.2 & 96.3 \\
&Naive Parallel-RL \textit{(sum)} & $85.2^{\textcolor[HTML]{8B0000}{-0.4}}$ & $29.2^{\textcolor[HTML]{8B0000}{-2.1}}$ & $46.5^{\textcolor[HTML]{8B0000}{-2.0}}$ & $32.8^{\textcolor[HTML]{8B0000}{-3.1}}$ & $40.0^{\textcolor[HTML]{00008B}{+2.0}}$ & $16.9^{\textcolor[HTML]{8B0000}{-0.9}}$ & +4.8 & 97.5 \\
&TIES Parallel-RL & $86.0^{\textcolor[HTML]{00008B}{+0.4}}$ & $31.0^{\textcolor[HTML]{8B0000}{-0.3}}$ & $47.0^{\textcolor[HTML]{8B0000}{-1.5}}$ & $34.8^{\textcolor[HTML]{8B0000}{-1.1}}$ & $42.0^{\textcolor[HTML]{00008B}{+4.0}}$ & $16.3^{\textcolor[HTML]{8B0000}{-1.5}}$ & +5.8 & 100.0 \\
&SVD Parallel-RL & $85.4^{\textcolor[HTML]{8B0000}{-0.2}}$ & $31.5^{\textcolor[HTML]{00008B}{+0.2}}$ & $45.8^{\textcolor[HTML]{8B0000}{-2.7}}$ & $32.8^{\textcolor[HTML]{8B0000}{-3.1}}$ & $37.0^{\textcolor[HTML]{8B0000}{-1.0}}$ & $15.6^{\textcolor[HTML]{8B0000}{-2.2}}$ & +4.3 & 96.5 \\
&Adapted Parallel-RL & $\textbf{86.6}^{\textcolor[HTML]{00008B}{+1.0}}$ & $\textbf{32.1}^{\textcolor[HTML]{00008B}{+0.8}}$ & $\textbf{48.3}^{\textcolor[HTML]{8B0000}{-0.2}}$ & $35.4^{\textcolor[HTML]{8B0000}{-0.5}}$ & $\textbf{45.0}^{\textcolor[HTML]{00008B}{+7.0}}$ & $\textbf{17.3}^{\textcolor[HTML]{8B0000}{-0.5}}$ & +7.1 & \textbf{102.9} \\

    \hline

\multirow{18}{*}{\shortstack{DeepSeek-R1- \\ Distill-Qwen- \\ 1.5B (PPO)}}  
& \multicolumn{9}{c}{\emph{Baselines (Full Parameter PPO)}} \\
& Base Model (1.5B) & 82.0 & 26.9 & 34.9 & 32.3 & 31.0 & 15.0 & - & - \\
&Single-Task SFT & 85.8 & 31.5 & 49.1 & 35.4 & 37.0 & 16.3 & +5.5 & - \\
&Single-Task RL & 86.6 & 31.5 & 49.8 & 37.9 & 46.0 & 20.3 & +8.3 & - \\
 \cline{2-10}

& \multicolumn{9}{c}{\emph{Previous Multi Task Training Paradigms}} \\
& Mixed Data SFT  & $85.6^{\textcolor[HTML]{8B0000}{-0.2}}$ & $30.1^{\textcolor[HTML]{8B0000}{-1.4}}$ & $48.5^{\textcolor[HTML]{8B0000}{-0.6}}$ & $34.3^{\textcolor[HTML]{8B0000}{-1.1}}$ & $38.0^{\textcolor[HTML]{00008B}{+1.0}}$ & $17.0^{\textcolor[HTML]{00008B}{+0.7}}$ & +5.2 & - \\
& Multi Stage SFT & $71.0^{\textcolor[HTML]{8B0000}{-14.8}}$ & $21.8^{\textcolor[HTML]{8B0000}{-9.7}}$ & $33.2^{\textcolor[HTML]{8B0000}{-15.9}}$ & $23.4^{\textcolor[HTML]{8B0000}{-12.0}}$ & $11.0^{\textcolor[HTML]{8B0000}{-26.0}}$ & $12.1^{\textcolor[HTML]{8B0000}{-4.2}}$ & -8.3 & - \\ 
& Mixed Data RL & $85.0^{\textcolor[HTML]{8B0000}{-1.6}}$ & $30.0^{\textcolor[HTML]{8B0000}{-1.5}}$ & $48.2^{\textcolor[HTML]{8B0000}{-1.6}}$ & $37.4^{\textcolor[HTML]{8B0000}{-0.5}}$ & $41.0^{\textcolor[HTML]{8B0000}{-5.0}}$ & $16.0^{\textcolor[HTML]{8B0000}{-4.3}}$ & +5.9 & - \\
& Multi Stage RL & $87.8^{\textcolor[HTML]{00008B}{+1.2}}$ & $33.5^{\textcolor[HTML]{00008B}{+2.0}}$ & $53.5^{\textcolor[HTML]{00008B}{+3.7}}$ & $39.4^{\textcolor[HTML]{00008B}{+1.5}}$ & $43.0^{\textcolor[HTML]{8B0000}{-3.0}}$ & $\textbf{21.8}^{\textcolor[HTML]{00008B}{+1.5}}$ & +9.5 & - \\ 
 \cline{2-10}

& \multicolumn{9}{c}{\emph{Parallel Training}} \\
&Naive Parallel-SFT \textit{(mean)} & $63.2^{\textcolor[HTML]{8B0000}{-22.6}}$ & $19.2^{\textcolor[HTML]{8B0000}{-12.3}}$ & $26.9^{\textcolor[HTML]{8B0000}{-22.2}}$ & $25.3^{\textcolor[HTML]{8B0000}{-10.1}}$ & $26.0^{\textcolor[HTML]{8B0000}{-11.0}}$ & $9.3^{\textcolor[HTML]{8B0000}{-7.0}}$ & -8.7 & 66.6 \\
&Naive Parallel-SFT \textit{(sum)} & $61.2^{\textcolor[HTML]{8B0000}{-24.6}}$ & $20.0^{\textcolor[HTML]{8B0000}{-11.5}}$ & $25.9^{\textcolor[HTML]{8B0000}{-23.2}}$ & $26.8^{\textcolor[HTML]{8B0000}{-8.6}}$ & $23.0^{\textcolor[HTML]{8B0000}{-14.0}}$ & $10.1^{\textcolor[HTML]{8B0000}{-6.2}}$ & -9.2 & 65.5 \\
&Naive Parallel-RL \textit{(mean)} & $85.4^{\textcolor[HTML]{8B0000}{-1.2}}$ & $27.3^{\textcolor[HTML]{8B0000}{-4.2}}$ & $49.1^{\textcolor[HTML]{8B0000}{-0.7}}$ & $33.8^{\textcolor[HTML]{8B0000}{-4.1}}$ & $38.0^{\textcolor[HTML]{8B0000}{-8.0}}$ & $16.1^{\textcolor[HTML]{8B0000}{-4.2}}$ & +4.6 & 91.7 \\
&Naive Parallel-RL \textit{(sum)} & $84.8^{\textcolor[HTML]{8B0000}{-1.8}}$ & $28.3^{\textcolor[HTML]{8B0000}{-3.2}}$ & $48.9^{\textcolor[HTML]{8B0000}{-0.9}}$ & $34.3^{\textcolor[HTML]{8B0000}{-3.6}}$ & $40.0^{\textcolor[HTML]{8B0000}{-6.0}}$ & $16.5^{\textcolor[HTML]{8B0000}{-3.8}}$ & +5.1 & 92.9 \\
&TIES Parallel-RL & $86.0^{\textcolor[HTML]{8B0000}{-0.6}}$ & $30.4^{\textcolor[HTML]{8B0000}{-1.1}}$ & $51.4^{\textcolor[HTML]{00008B}{+1.6}}$ & $36.9^{\textcolor[HTML]{8B0000}{-1.0}}$ & $44.0^{\textcolor[HTML]{8B0000}{-2.0}}$ & $18.5^{\textcolor[HTML]{8B0000}{-1.8}}$ & +7.5 & 98.2 \\
&SVD Parallel-RL & $85.4^{\textcolor[HTML]{8B0000}{-1.2}}$ & $29.6^{\textcolor[HTML]{8B0000}{-1.9}}$ & $49.9^{\textcolor[HTML]{00008B}{+0.1}}$ & $35.4^{\textcolor[HTML]{8B0000}{-2.5}}$ & $39.0^{\textcolor[HTML]{8B0000}{-7.0}}$ & $17.5^{\textcolor[HTML]{8B0000}{-2.8}}$ & +5.8 & 94.4 \\
&Adapted Parallel-RL & $\textbf{87.0}^{\textcolor[HTML]{00008B}{+0.4}}$ & $\textbf{32.1}^{\textcolor[HTML]{00008B}{+0.6}}$ & $\textbf{54.4}^{\textcolor[HTML]{00008B}{+4.6}}$ & $\textbf{38.4}^{\textcolor[HTML]{00008B}{+0.5}}$ & $\textbf{45.0}^{\textcolor[HTML]{8B0000}{-1.0}}$ & $19.5^{\textcolor[HTML]{8B0000}{-0.8}}$ & +9.1 & \textbf{101.6} \\

  \bottomrule[1pt]

\end{tabular}
}
\label{tab:all_parallel_main_results}
\end{table*}
\subsection{More Ablation Study Results}
\label{appendix:more_ablation}

In this section, we present more ablation studies on the trained Naive Parallel-RL model to evaluate the impact of excluding specific task updates on the overall multi-task performance. As shown in Table \ref{tab:comprehensive_ablation}, removing the update $\Delta W_i$ of specific task from the Naive Parallel RL model results in a sharp performance drop on that specific task, with an average change of -6.0\% ($\Delta_{Target}$). However, the performance on the remaining tasks remains robust or even slightly improves, with an average change of +0.9\% ($\Delta_{Others}$). This further shows that parameter updates $\Delta W_i$ derived from RL on different tasks exhibit low coherence and high decoupling. The update vector for a given task primarily affects the performance of that task with minimal impact on others. This suggests that beyond improving multi-task training efficiency, the Parallel-RL paradigm enhances the \textbf{modularity and decoupling of task capabilities in the final model}. We envision that future advancements in Parallel-RL algorithms could enable the on-demand composition of model capabilities, allowing for flexible adaptation to specialized real-world environments with specific requirements.

\begin{table*}[!htbp]
\caption{Comprehensive ablation study across different model settings. Superscripts denote the performance gap ($\Delta$) relative to the full \textbf{Parallel-RL} model. \textcolor[HTML]{8B0000}{Red} indicates drop, \textcolor[HTML]{00008B}{Blue} indicates improvement.}
\vspace{0.1in}
\centering
\resizebox{1.0\linewidth}{!}{
\begin{tabular}{clcccccc}
\toprule[1pt]
\multirow{2}{*}{Model Setting} & \multicolumn{1}{c}{\multirow{2}{*}{Method}} & MATH500 & MMLU & KK & LCB & $\Delta_{Target}$ & $\Delta_{Others}$ \\ 
 & & (Math) & (Science) & (Logic) & (Code) & (Impact) & (Side-effect) \\ \hline

\multirow{8}{*}{1.5B LoRA} 
& \multicolumn{7}{c}{\emph{Naive Parallel RL}} \\
& Parallel-RL & 85.2 & 46.5 & 40.0 & 16.9 & - & - \\ \cmidrule{2-8}
& - w/o Math & $82.4^{\textcolor[HTML]{8B0000}{-2.8}}$ & $47.3^{\textcolor[HTML]{00008B}{+0.8}}$ & $38.0^{\textcolor[HTML]{8B0000}{-2.0}}$ & $17.1^{\textcolor[HTML]{00008B}{+0.2}}$ & -2.8 & -0.3 \\
& - w/o Science & $85.0^{\textcolor[HTML]{8B0000}{-0.2}}$ & $38.7^{\textcolor[HTML]{8B0000}{-7.8}}$ & $37.0^{\textcolor[HTML]{8B0000}{-3.0}}$ & $17.5^{\textcolor[HTML]{00008B}{+0.6}}$ & -7.8 & -0.9 \\
& - w/o Logic & $84.6^{\textcolor[HTML]{8B0000}{-0.6}}$ & $47.1^{\textcolor[HTML]{00008B}{+0.6}}$ & $32.0^{\textcolor[HTML]{8B0000}{-8.0}}$ & $16.3^{\textcolor[HTML]{8B0000}{-0.6}}$ & -8.0 & -0.2 \\
& - w/o Code & $85.8^{\textcolor[HTML]{00008B}{+0.6}}$ & $46.3^{\textcolor[HTML]{8B0000}{-0.2}}$ & $39.0^{\textcolor[HTML]{8B0000}{-1.0}}$ & $14.7^{\textcolor[HTML]{8B0000}{-2.2}}$ & -2.2 & -0.2 \\ \cmidrule{2-8}
& \multicolumn{7}{c}{\emph{Baselines}} \\ 
& Single-Task RL & 85.6 & 48.5 & 38.0 & 17.8 & - & - \\
& Base Model & 82.0 & 34.9 & 31.0 & 15.0 & - & - \\ \hline

\multirow{8}{*}{1.5B PPO} 
& \multicolumn{7}{c}{\emph{Naive Parallel RL}} \\
& Parallel-RL & 84.8 & 48.9 & 36.0 & 16.5 & - & - \\ \cmidrule{2-8}
& - w/o Math & $82.8^{\textcolor[HTML]{8B0000}{-2.0}}$ & $48.5^{\textcolor[HTML]{8B0000}{-0.4}}$ & $38.0^{\textcolor[HTML]{00008B}{+2.0}}$ & $18.6^{\textcolor[HTML]{00008B}{+2.1}}$ & -2.0 & +1.2 \\
& - w/o Science & $85.8^{\textcolor[HTML]{00008B}{+1.0}}$ & $35.6^{\textcolor[HTML]{8B0000}{-13.3}}$ & $44.0^{\textcolor[HTML]{00008B}{+8.0}}$ & $18.1^{\textcolor[HTML]{00008B}{+1.6}}$ & -13.3 & +3.5 \\
& - w/o Logic & $84.6^{\textcolor[HTML]{8B0000}{-0.2}}$ & $48.3^{\textcolor[HTML]{8B0000}{-0.6}}$ & $30.0^{\textcolor[HTML]{8B0000}{-6.0}}$ & $16.8^{\textcolor[HTML]{00008B}{+0.3}}$ & -6.0 & -0.2 \\
& - w/o Code & $87.4^{\textcolor[HTML]{00008B}{+2.6}}$ & $50.3^{\textcolor[HTML]{00008B}{+1.4}}$ & $45.0^{\textcolor[HTML]{00008B}{+9.0}}$ & $12.5^{\textcolor[HTML]{8B0000}{-4.0}}$ & -4.0 & +4.3 \\ \cmidrule{2-8}
& \multicolumn{7}{c}{\emph{Baselines}} \\
& Single-Task RL & 86.6 & 49.8 & 46.0 & 20.3 & - & - \\
& Base Model & 82.0 & 34.9 & 31.0 & 15.0 & - & - \\ \hline

\multirow{8}{*}{7B GRPO} 
& \multicolumn{7}{c}{\emph{Naive Parallel RL}} \\
& Parallel-RL & 93.6 & 59.6 & 68.0 & 38.0 & - & - \\ \cmidrule{2-8}
& - w/o Math & $90.2^{\textcolor[HTML]{8B0000}{-3.4}}$ & $59.1^{\textcolor[HTML]{8B0000}{-0.5}}$ & $67.0^{\textcolor[HTML]{8B0000}{-1.0}}$ & $41.0^{\textcolor[HTML]{00008B}{+3.0}}$ & -3.4 & +0.5 \\
& - w/o Science & $94.2^{\textcolor[HTML]{00008B}{+0.6}}$ & $49.7^{\textcolor[HTML]{8B0000}{-9.9}}$ & $68.0^{\textcolor[HTML]{00008B}{+0.0}}$ & $39.5^{\textcolor[HTML]{00008B}{+1.5}}$ & -9.9 & +0.7 \\
& - w/o Logic & $93.8^{\textcolor[HTML]{00008B}{+0.2}}$ & $60.2^{\textcolor[HTML]{00008B}{+0.6}}$ & $60.0^{\textcolor[HTML]{8B0000}{-8.0}}$ & $38.3^{\textcolor[HTML]{00008B}{+0.3}}$ & -8.0 & +0.4 \\
& - w/o Code & $95.2^{\textcolor[HTML]{00008B}{+1.6}}$ & $61.8^{\textcolor[HTML]{00008B}{+2.2}}$ & $69.0^{\textcolor[HTML]{00008B}{+1.0}}$ & $33.9^{\textcolor[HTML]{8B0000}{-4.1}}$ & -4.1 & +1.6 \\ \cmidrule{2-8}
& \multicolumn{7}{c}{\emph{Baselines}} \\
& Single-Task RL & 94.6 & 63.5 & 68.0 & 42.3 & - & - \\
& Base Model & 91.8 & 51.8 & 62.0 & 36.5 & - & - \\

\bottomrule[1pt]
\end{tabular}
}
\label{tab:comprehensive_ablation}
\end{table*}

\subsection{More Visualizations}

In Section \ref{sec:quantify}, we presented a comparative visualization of the score function $S$ distributions for Math and Science tasks trained via RL and SFT, respectively. In this section, we provide additional t-SNE visualizations of score function distributions covering a broader range of tasks. As illustrated in Figure \ref{fig:appendix_score_func_distribution}, the top row displays the distributions for different tasks trained using RL. We observe a clear separation between task clusters with minimal overlap. Conversely, the bottom row shows the score functions $S$ for SFT, where we observe a significant degree of overlap across different tasks. This contrast serves as empirical evidence reflecting the high interference and conflicts of parameter updates inherent to SFT training, as opposed to the incoherence and coexistence nature of RL updates.

\label{appendix:more visualizations}
\begin{figure*}
\centering
  \includegraphics[width=\linewidth]{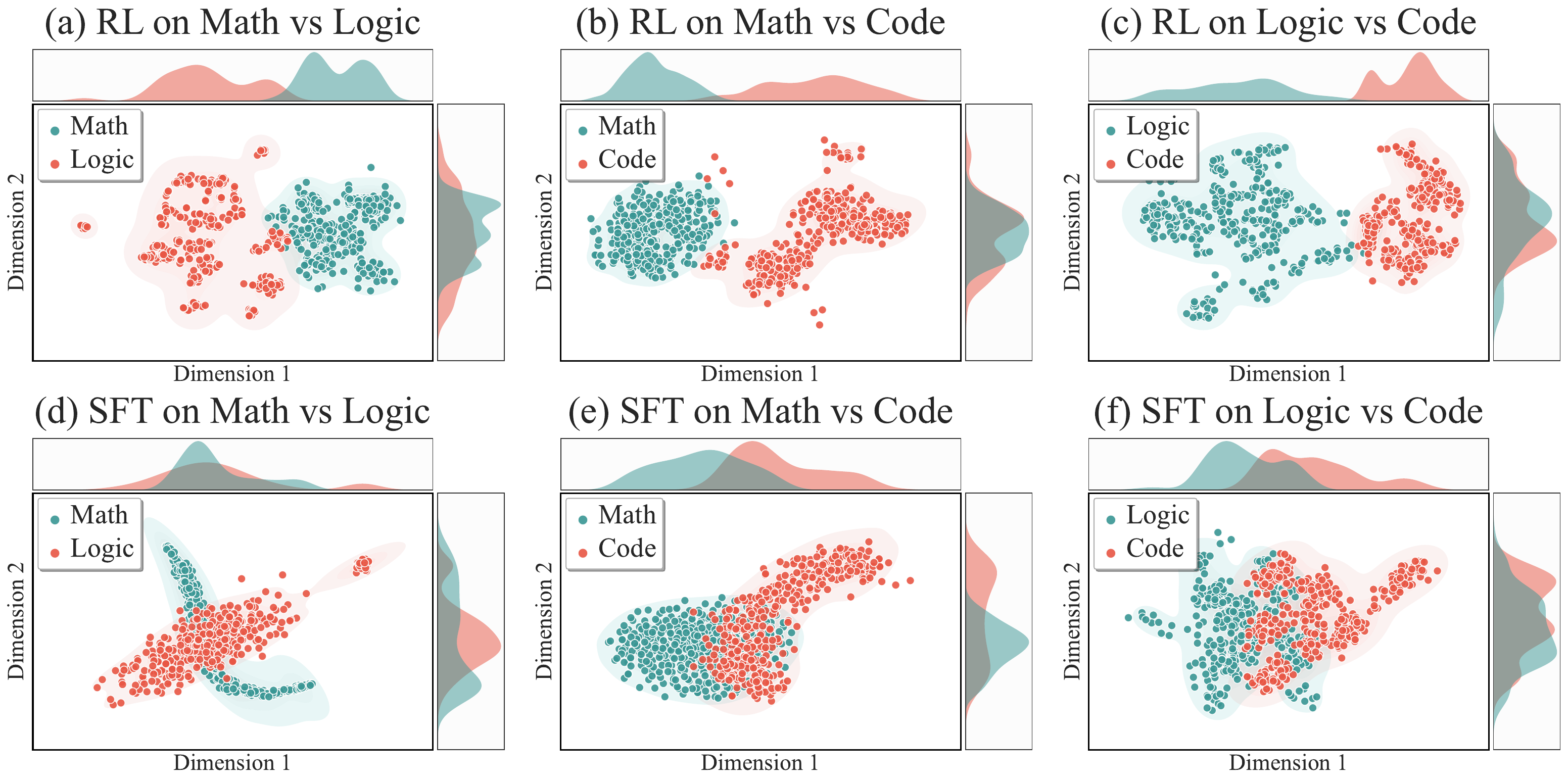}
  \vspace{-5pt}
  \caption {Distribution of the score function $S$ sampled during training by RL and SFT on different tasks (Math, Code and Logic). We use t-SNE to reduce the dimensionality of the high-dimensional score function vectors for visualization.}
  \label{fig:appendix_score_func_distribution}
\end{figure*}

\subsection{Pass@K Analysis}
\label{appendix:pass_at_k}

Accuracy evaluation can be sensitive to the decoding strategy. To provide a more robust evaluation of the stability and reasoning capability boundaries of the Parallel-RL paradigm, we adopt the \textbf{pass@$k$} metric. This metric measures the probability that at least one out of $k$ generated samples for a given problem is correct, serving as a proxy for the model's potential reasoning upper bound. In this section, we conduct a pass@$k$ analysis on the \textbf{Naive Parallel-RL} described in Section \ref{sec:parallel_exp}. Specifically, we select four representative benchmarks for our four tasks: AIME2025 (Math), GPQA Diamond (Science), Knights \& Knaves (Logic), and LiveCodeBench (Code). For each task, we compare three models: the \textbf{Base Model}, the task-specific \textbf{Single-Task RL} model trained on corresponding task, and the \textbf{Naive Parallel-RL} model.

To reduce variance associated with small sample sizes, we employ an unbiased estimation method~\cite{chen2021evaluatinglargelanguagemodels, yue2025doesreinforcementlearningreally}. We do not strictly sample $k$ times. Instead, we generate a larger pool of $n$ samples ($n \ge k$) and compute the unbiased estimator:

\vspace{-10pt}
\begin{equation*}
\text{pass}@k = 1 - \frac{\binom{n-c}{k}}{\binom{n}{k}}
\end{equation*}
\vspace{-5pt}

\noindent where $c$ denotes the number of correct samples among the $n$ generations. In our experiments, we set the total number of samples $n$ to 64, which covers the largest $k$ value evaluated in our benchmarks.

We primarily analyze the pass@$k$ performance for the \textit{DeepSeek-R1-Distill-Qwen-1.5B} and \textit{7B} models trained via GRPO (full-parameter), as well as the 1.5B model trained via LoRA. The results are presented in Figure \ref{fig:appendix_pass_at_k}.

As observed in the figures, although the Parallel-RL model may exhibit a performance drop at Pass@1 compared to the task-specific RL models, the Pass@$k$ curves of the two models tend to \textbf{converge} as the number of samples increases. This convergence indicates that their reasoning boundaries are nearly identical. This phenomenon provides strong evidence for the \textbf{low interference} between RL tasks and the inherent \textbf{stability} of the Parallel-RL paradigm; even simple parameter averaging does not disrupt the underlying reasoning mechanisms learned by individual RL modules.

Furthermore, it is notable that for most tasks, the performance gap between Naive Parallel-RL and Single-Task RL becomes negligible at Pass@16. This observation indirectly validates the feasibility of \textbf{Adapted Parallel-RL}. Since our adaptation phase utilizes $G=16$ rollouts per prompt, the minimal gap at Pass@16 suggests that the model's capability is preserved in the merged parameters. Consequently, a small amount of adaptation data (e.g., 5\%) is sufficient to realign the policy and fully restore performance, as confirmed in our main experiments (Section \ref{sec:parallel_exp}).

\begin{figure*}[t]
\centering
  \includegraphics[width=\linewidth]{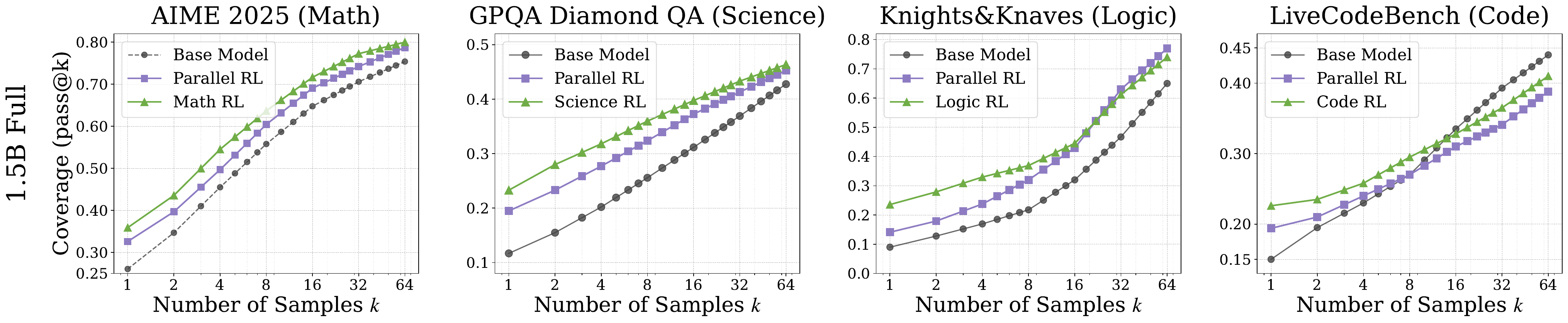}
  
  \vspace{1em} 
  
  \includegraphics[width=\linewidth]{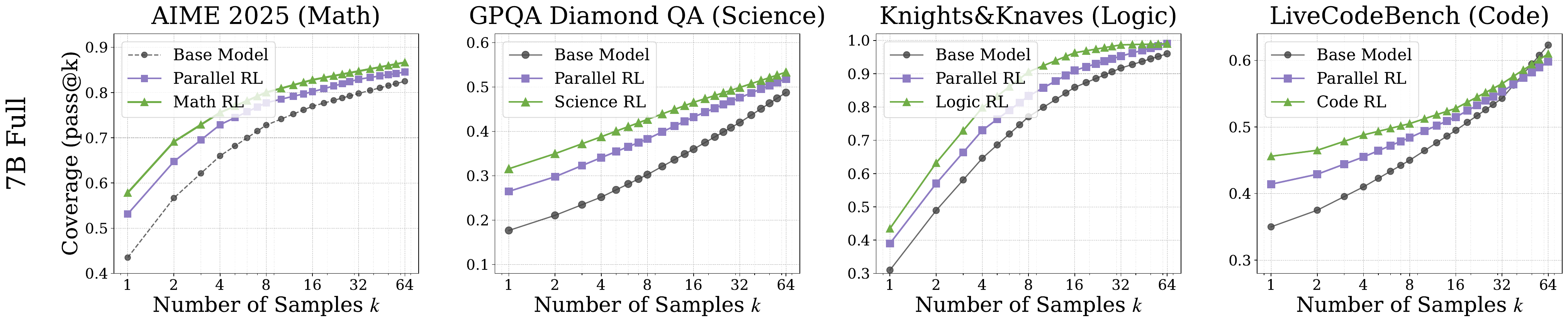}
  

  \vspace{-5pt}
  \caption{Pass@K Analysis of Naive Parallel RL with full parameter GRPO}
  \label{fig:appendix_pass_at_k}
\end{figure*}

\section{Case Study}

In this section, we present several response examples from models trained on different tasks. We also showcase the performance of a multi-task model integrated using a naive parallel RL approach, which involves directly averaging the parameters of individual task models. Specifically, we focus on responses within the math and logic tasks. Our case study includes the base model (DeepSeek-R1-Distill-Qwen-1.5B), the task-specific model (math model and logic model), and the multi-task model (Naive Parallel-RL as described in Section \ref{sec:parallel_exp}).

Figure ~\ref{case:base_model_on_math},\ref{case:logic_model_on_math}, \ref{case:math_model_on_math} and \ref{case:parallel_model_on_math} display the responses of these four models to a test sample from MATH500. Figure \ref{case:base_logic_on_logickk}, \ref{case:math_on_logickk} and \ref{case:parallel_on_logickk} illustrate their performance on a "Knights and Knaves" logic question. As observed, the original base model fails to solve either task. While the math and logic models succeed in their respective domains, they produce errors when applied to untrained tasks. In contrast, the Parallel-RL model demonstrates the capability to provide correct answers for both tasks.

\begin{figure*}
\centering
\begin{tcolorbox}[size=title,opacityfill=0.1]
\noindent
\faUser\ \textbf{Question of MATH500 (Math)}: Solve the following math problem efficiently and clearly.  The last line of your response should be of the following format: 'Therefore, the final answer is: $\boxed{ANSWER}$. I hope it is correct' (without quotes) where ANSWER is just the final number or expression that solves the problem. Think step by step before answering. What is the smallest positive integer $n$ such that all the roots of $z^4 + z^2 + 1 = 0$ are $n^{\text{th}}$ roots of unity? \

\

\faCogs\ \textbf{DeepSeek-R1-Distill-Qwen-1.5B (Base Model)}: \hfill \textcolor[HTML]{D12121}{\faTimesCircle\ \textbf{Incorrect}}

\texttt{<think>}Okay, so I need to find the smallest positive integer $n$ such that all the roots of the equation $z^4 + z^2 + 1 = 0$ are $n$-th roots of unity. Hmm, let me think about how to approach this. First, I remember that the roots of unity are complex numbers that satisfy $z^n = 1$ for some integer $n$. So, each root lies on the unit circle in the complex plane and are equally spaced around it. The equation given is a quartic, which means there are four roots. So, all these roots should lie on the unit circle as well. Let me rewrite the equation: $z^4 + z^2 + 1 = 0$. Hmm, maybe I can factor this equation somehow. Let me see if I can factor it as a quadratic in terms of $z^2$. Let's set $w = z^2$. Then the equation becomes $w^2 + w + 1 = 0$. That's a quadratic equation in $w$. Let me solve for $w$. Using the quadratic formula: $w = [-1 \pm \sqrt{1 - 4}] / 2 = [-1 \pm \sqrt{-3}] / 2$. So, $w$ is $(-1 \pm i\sqrt{3})/2$. These are complex numbers, and they are the primitive 3rd roots of unity because they satisfy $w^3 = 1$. Wait, let me check that. Let me compute $w^3$: Take $w = e^{i2\pi/3}$, which is $(-1 + i\sqrt{3})/2$. Then $w^3 = e^{i2\pi} = 1$. Similarly, the other root is $e^{i4\pi/3}$, so it's also a primitive 3rd root of unity. So, $w = z^2 = e^{i2\pi/3}$ or $e^{i4\pi/3}$. Therefore, to find $z$, I need to take square roots of these. So, $z^2 = e^{i2\pi/3}$ implies $z = e^{i\pi/3}$ or $z = e^{i5\pi/3}$, because when you square $e^{i\pi/3}$, you get $e^{i2\pi/3}$, and squaring $e^{i5\pi/3}$ gives $e^{i10\pi/3}$ which is the same as $e^{i4\pi/3}$ because $10\pi/3$ is $3\pi + \pi/3$, which is equivalent to $\pi/3$ in terms of the unit circle (since $3\pi$ is equivalent to $\pi \pmod{2\pi}$). Wait, no, that's not right. Let me compute $5\pi/3$ squared: $(5\pi/3)^2 = 25\pi^2/9$. That's more than $2\pi$, so I should subtract $2\pi$ to get it within $0$ to $2\pi$. $25\pi/9$ is about $2.777\pi$, so subtract $2\pi$ (which is $18\pi/9$) to get $7\pi/9$. Wait, that can't be right because $5\pi/3$ is 300 degrees, and squaring it would be 600 degrees, which is $600 - 360 = 240$ degrees, which is $4\pi/3$. So, $z = e^{i4\pi/3}$. Similarly, $z^2 = e^{i4\pi/3}$ implies $z = e^{i2\pi/3}$ or $e^{i8\pi/3}$, but $8\pi/3$ is equivalent to $8\pi/3 - 2\pi = 2\pi/3$. So, the roots are $z = e^{i\pi/3}, e^{i5\pi/3}, e^{i4\pi/3}, e^{i2\pi/3}$. Wait, \textbf{[...](omit 3006 tokens)} For example, the 3rd roots of unity have order 3, but perhaps in this case, the equation might have roots that are of higher order, but in this case, they are 3rd roots. So, the minimal $n$ is 3. So, the roots of the original equation are the square roots of $w$, so four roots in total. \textbf{Final Answer}
The smallest positive integer \( n \) is \boxed{3}. \texttt{</think>}
To solve the problem of finding the smallest positive integer \( n \) such that all the roots of the equation \( z^4 + z^2 + 1 = 0 \) are \( n \)-th roots of unity, we start by factoring the given polynomial.
First, we recognize that the equation can be rewritten by substituting \( w = z^2 \), transforming it into a quadratic equation:
\[ w^2 + w + 1 = 0 \]
Solving this quadratic equation using the quadratic formula, we find the roots:
\[ w = \frac{-1 \pm \sqrt{-3}}{2} = \frac{-1 \pm i\sqrt{3}}{2} \]
These roots are the primitive 3rd roots of unity, specifically \( e^{i2\pi/3} \) and \( e^{i4\pi/3} \).
Next, we find the roots of the original equation by solving \( z^2 = e^{i2\pi/3} \) and \( z^2 = e^{i4\pi/3} \). The solutions are:
\[ z = e^{i\pi/3}, e^{i5\pi/3}, e^{i2\pi/3}, e^{i4\pi/3} \]
These roots are all 3rd roots of unity, meaning they satisfy \( z^3 = 1 \). Since none of these roots are 1st or 2nd roots of unity, the smallest positive integer \( n \) such that all roots are \( n \)-th roots of unity is 3.
Thus, the smallest positive integer \( n \) is \textcolor{WrongRed}{\(\boxed{3}\)}. \textbf{(3707 tokens)}

\end{tcolorbox}
\caption{Response from the base model (DeepSeek-R1-Distill-Qwen-1.5B) on a sample problem from MATH500. As shown, the base model fails to solve the problem.}
\label{case:base_model_on_math}
\end{figure*}

\begin{figure*}
\centering
\begin{tcolorbox}[size=title,opacityfill=0.1]
\noindent
\faUser\ \textbf{Question of MATH500 (Math)}: Solve the following math problem efficiently and clearly.  The last line of your response should be of the following format: 'Therefore, the final answer is: $\boxed{ANSWER}$. I hope it is correct' (without quotes) where ANSWER is just the final number or expression that solves the problem. Think step by step before answering. What is the smallest positive integer $n$ such that all the roots of $z^4 + z^2 + 1 = 0$ are $n^{\text{th}}$ roots of unity? \

\

\faSearch\ \textbf{Logic Model}: \hfill \textcolor[HTML]{D12121}{\faTimesCircle\ \textbf{Incorrect}}

\texttt{<think>}Okay, so I've got this problem here: I need to find the smallest positive integer $n$ such that all the roots of the equation $z^4 + z^2 + 1 = 0$ are $n$-th roots of unity. Hmm, I'm a bit rusty on roots of unity, but let me try to recall. First, I remember that the $n$-th roots of unity are the solutions to the equation $z^n = 1$. They are evenly spaced around the unit circle in the complex plane. The general form is $z = e^{2\pi ik/n}$ where $k$ is an integer from $0$ to $n-1$. The equation given is $z^4 + z^2 + 1 = 0$. That's a quartic equation, and I think it can be factored somehow. Maybe it's a quadratic in terms of $z^2$. Let me try substituting $w = z^2$. Then the equation becomes $w^2 + w + 1 = 0$. Oh, right! That's a quadratic equation. Let me solve for $w$. Using the quadratic formula: $w = [-1 \pm \sqrt{1 - 4 \cdot 1 \cdot 1}] / 2$. That gives me $w = [-1 \pm \sqrt{-3}] / 2$. So, the solutions are $w = (-1 + i\sqrt{3})/2$ and $w = (-1 - i\sqrt{3})/2$. These are complex numbers, and I think they can be expressed in polar form. Let me calculate their modulus and argument. The modulus for both is $\sqrt{(-1/2)^2 + (\sqrt{3}/2)^2} = \sqrt{1/4 + 3/4} = \sqrt{1} = 1$. So, both are on the unit circle. What about their arguments? For $w = (-1 + i\sqrt{3})/2$, I think that's a complex number in the second quadrant. The angle is $120^\circ$ because $\cos(120^\circ) = -1/2$ and $\sin(120^\circ) = \sqrt{3}/2$. Similarly, the other solution is $240^\circ$ because $\cos(240^\circ) = -1/2$ and $\sin(240^\circ) = -\sqrt{3}/2$. So, in exponential form, these are $e^{2\pi i \cdot 2/3}$ and $e^{2\pi i \cdot 4/3}$. So $w = e^{2\pi i \cdot 2/3}$ and $w = e^{2\pi i \cdot 4/3}$. But since $w = z^2$, we can write $z^2 = e^{2\pi i \cdot 2/3}$ and $z^2 = e^{2\pi i \cdot 4/3}$. To solve for $z$, we take the square roots of each. So for each $w$, $z$ will be the square roots. Let's compute them. Starting with $w_1 = e^{2\pi i \cdot 2/3}$. The square roots would be $e^{\pi i \cdot 2/3 + \pi ik}$ for $k = 0,1$. So, for $k=0$: $e^{2\pi i \cdot 2/3 \cdot 1/2} = e^{\pi i \cdot 2/3} = e^{i \cdot 120^\circ}$. Similarly, for $k=1$: $e^{\pi i \cdot 2/3 + \pi i} = e^{i \cdot 2\pi/3 + i\pi} = e^{i(2\pi/3 + \pi)} = e^{i \cdot 5\pi/3}$. Similarly, for $w_2 = e^{2\pi i \cdot 4/3}$. The square roots would be $e^{\pi i \cdot 4/3 + \pi ik}$. For $k=0$: $e^{4\pi i/3}$ which is $e^{i \cdot 240^\circ}$. For $k=1$: $e^{4\pi i/3 + \pi i} = e^{7\pi i/3}$ which is equivalent to $e^{i\pi/3}$ because $7\pi/3$ is more than $2\pi$, so subtract $2\pi$ gives $\pi/3$. So, the four roots of the equation $z^4 + z^2 + 1 = 0$ are $z = e^{i\pi/3}$, $e^{i \cdot 2\pi/3}$, $e^{i \cdot 4\pi/3}$, and $e^{i \cdot 5\pi/3}$. Wait, let me make sure. So each of these roots is a 12th root of unity? Because $e^{i\pi/3}$ is the same as $e^{i \cdot 4\pi/3 \cdot 2}$, right? Hmm, no, wait. Let me think. Each of these roots is of the form $e^{i \cdot 2\pi k/12}$ because $2\pi/12 = \pi/6$. So, $e^{i \cdot 2\pi k/12} = e^{i \cdot \pi k/6}$. Let me check: for $k=1$, it's $\pi/6$; $k=2$, $2\pi/6=\pi/3$; $k=3$, $\pi/2$; $k=4$, $2\pi/3$; $k=5$, $5\pi/6$; $k=6$, $\pi$; $k=7$, $7\pi/6$; $k=8$, $4\pi/3$; etc. Looking at our roots, they are $e^{i\pi/3}$ ($k=2$), $e^{i2\pi/3}$ ($k=4$), $e^{i4\pi/3}$ ($k=8$), and $e^{i5\pi/3}$ ($k=10$). \textbf{[...](omit 2458 tokens)} Therefore, the smallest positive integer $n$ is 12. \texttt{</think>} 
To solve the problem, we start by recognizing that the roots of the equation \( z^4 + z^2 + 1 = 0 \) can be found by substituting \( w = z^2 \), transforming the quartic into a quadratic equation \( w^2 + w + 1 = 0 \). Solving this quadratic using the quadratic formula gives the roots \( w = e^{2\pi i \cdot 2/3} \) and \( w = e^{2\pi i \cdot 4/3} \). 
Since \( w = z^2 \), we solve for \( z \) by taking the square roots of each \( w \). This yields four roots: \( z = e^{\pm \pi i / 3} \) and \( z = e^{\pm 5\pi i / 3} \). These roots are \( e^{\pm \pi i / 3} \), \( e^{\pm 5\pi i / 3} \), which correspond to the 12th roots of unity.
To confirm, we factor the quartic as \( (z^2 + z + 1)(z^2 - z + 1) \). The roots of \( z^2 + z + 1 \) are the primitive 3rd roots of unity, and the roots of \( z^2 - z + 1 \) are the primitive 12th roots of unity. Thus, all roots of the original equation are indeed 12th roots of unity.
Since the roots are not all 6th roots of unity (as \( e^{\pi i / 6} \) is not a root of the quartic), the smallest positive integer \( n \) for which all roots are \( n \)-th roots of unity is \textcolor{WrongRed}{\(\boxed{12}\)}. \textbf{(3178 tokens)}

\end{tcolorbox}
\caption{Response from the logic model on a sample problem from MATH500. The logic model is derived from the base model via GRPO training on a logic dataset.}
\label{case:logic_model_on_math}
\end{figure*}

\begin{figure*}
\centering
\begin{tcolorbox}[size=title,opacityfill=0.1]
\noindent
\faUser\ \textbf{Question of MATH500 (Math)}: Solve the following math problem efficiently and clearly.  The last line of your response should be of the following format: 'Therefore, the final answer is: $\boxed{ANSWER}$. I hope it is correct' (without quotes) where ANSWER is just the final number or expression that solves the problem. Think step by step before answering. What is the smallest positive integer $n$ such that all the roots of $z^4 + z^2 + 1 = 0$ are $n^{\text{th}}$ roots of unity? \

\

\faCalculator\ \textbf{Math Model}: \hfill \textcolor[HTML]{2EA121}{\faCheckCircle\ \textbf{Correct}}

\texttt{<think>}Okay, so I have this problem: I need to find the smallest positive integer $n$ such that all the roots of the equation $z^4 + z^2 + 1 = 0$ are $n$-th roots of unity. Hmm, let me try to figure this out step by step. First, I remember that roots of unity are complex numbers that satisfy $z^n = 1$ for some integer $n$. So, each root is $e^{2\pi ik/n}$ where $k$ is an integer from $0$ to $n-1$. These roots are equally spaced around the unit circle in the complex plane. Now, the equation given is $z^4 + z^2 + 1 = 0$. I wonder if I can factor this or simplify it somehow. Let me see. Maybe substituting $w = z^2$ would make it easier. Let's try that. If I let $w = z^2$, then the equation becomes $w^2 + w + 1 = 0$. That's a quadratic equation in terms of $w$. I can solve this using the quadratic formula. The solutions for $w$ would be: $w = [-1 \pm \sqrt{1 - 4 \cdot 1 \cdot 1}] / 2 = [-1 \pm \sqrt{-3}] / 2 = (-1 \pm i\sqrt{3})/2$. So, the solutions for $w$ are $(-1 + i\sqrt{3})/2$ and $(-1 - i\sqrt{3})/2$. These are complex numbers. I recognize them as the complex cube roots of unity, except that the cube roots of unity are $(1, \omega, \omega^2)$ where $\omega = e^{2\pi i/3}$. Wait, but here the solutions are actually the primitive cube roots of unity, because $(-1 \pm i\sqrt{3})/2$ are the non-real cube roots. So, they satisfy $w^3 = 1$, but aren't equal to $1$ themselves. Wait, actually, let me check. Let me compute $(-1 + i\sqrt{3})/2$. Let me denote this as $\omega$. Let's compute $\omega^3$: $\omega = e^{2\pi i/3} = \cos(2\pi/3) + i \sin(2\pi/3) = -1/2 + i\sqrt{3}/2$. So, yes, $\omega$ is a primitive cube root of unity, meaning $\omega^3 = 1$. So, $w = \omega$ and $w = \omega^2$, where $\omega^2$ is the other cube root. So, $w^2 + w + 1 = 0$, so the solutions are the primitive cube roots of unity. Therefore, $w = e^{2\pi i/3}$ and $w = e^{4\pi i/3}$. But since $w = z^2$, that means $z^2 = e^{2\pi i/3}$ or $z^2 = e^{4\pi i/3}$. So, to find $z$, we take the square roots of these. Let me solve $z^2 = e^{2\pi i/3}$. The solutions are $z = e^{\pi i/3}$ and $z = e^{\pi i/3 + \pi i} = e^{4\pi i/3}$. Similarly, for $z^2 = e^{4\pi i/3}$, the solutions are $z = e^{2\pi i/3}$ and $z = e^{2\pi i/3 + \pi i} = e^{5\pi i/3}$. So, the roots of the equation $z^4 + z^2 + 1 = 0$ are $z = e^{\pi i/3}, e^{4\pi i/3}, e^{2\pi i/3}$, and $e^{5\pi i/3}$. Let me list them: 1. $e^{\pi i/3} = \cos(\pi/3) + i \sin(\pi/3) = 1/2 + i\sqrt{3}/2$. 2. $e^{4\pi i/3} = \cos(4\pi/3) + i \sin(4\pi/3) = -1/2 - i\sqrt{3}/2$. 3. $e^{2\pi i/3} = \cos(2\pi/3) + i \sin(2\pi/3) = -1/2 + i\sqrt{3}/2$. 4. $e^{5\pi i/3} = \cos(5\pi/3) + i \sin(5\pi/3) = 1/2 - i\sqrt{3}/2$. So, these are four distinct 12th roots of unity? Wait, no, wait. Let's see. \textbf{[...](omit 9841 tokens)} They are 6th roots, so the minimal $n$ is 6. So, the final answer is 6. \texttt{</think>} 
\textbf{Final Answer}
The smallest positive integer \( n \) is \boxed{6}.
To solve the problem of finding the smallest positive integer \( n \) such that all the roots of \( z^4 + z^2 + 1 = 0 \) are \( n \)-th roots of unity, we start by solving the given equation.
First, we substitute \( w = z^2 \) to transform the equation into a quadratic equation:
\[ w^2 + w + 1 = 0 \]
Solving this quadratic equation using the quadratic formula, we get:
\[ w = \frac{-1 \pm \sqrt{-3}}{2} = \frac{-1 \pm i\sqrt{3}}{2} \]
These solutions are the primitive cube roots of unity, denoted as \( \omega \) and \( \omega^2 \), where \( \omega = e^{2\pi i / 3} \) and \( \omega^2 = e^{4\pi i / 3} \).
Next, we solve for \( z \) by taking the square roots of \( \omega \) and \( \omega^2 \). The square roots of \( \omega \) are \( e^{\pi i / 3} \) and \( e^{4\pi i / 3} \), while the square roots of \( \omega^2 \) are \( e^{2\pi i / 3} \) and \( e^{5\pi i / 3} \).
Thus, the roots of the equation \( z^4 + z^2 + 1 = 0 \) are:
\[ z = e^{\pi i / 3}, e^{2\pi i / 3}, e^{4\pi i / 3}, e^{5\pi i / 3} \]
These roots are all 6th roots of unity because each root can be expressed as \( e^{k\pi i / 3} \) where \( k \) is an integer, and the smallest \( n \) such that all these roots are \( n \)-th roots of unity is 6.
Therefore, the smallest positive integer \( n \) such that all the roots of \( z^4 + z^2 + 1 = 0 \) are \( n \)-th roots of unity is:
\textcolor{CorrectGreen}{$ \boxed{6} $} \textbf{(10667 tokens)}

\end{tcolorbox}
\caption{Response from the math model on a sample problem from MATH500. The math model is derived from the base model via GRPO training on a math dataset.}
\label{case:math_model_on_math}
\end{figure*}

\begin{figure*}
\centering
\begin{tcolorbox}[size=title,opacityfill=0.1]
\noindent
\faUser\ \textbf{Question of MATH500 (Math)}: Solve the following math problem efficiently and clearly.  The last line of your response should be of the following format: 'Therefore, the final answer is: $\boxed{ANSWER}$. I hope it is correct' (without quotes) where ANSWER is just the final number or expression that solves the problem. Think step by step before answering. What is the smallest positive integer $n$ such that all the roots of $z^4 + z^2 + 1 = 0$ are $n^{\text{th}}$ roots of unity? \

\

\faRobot\ \textbf{Parallel-RL Model}: \hfill \textcolor[HTML]{2EA121}{\faCheckCircle\ \textbf{Correct}}

\texttt{<think>}Okay, so I've got this problem here: I need to find the smallest positive integer $n$ such that all the roots of the equation $z^4 + z^2 + 1 = 0$ are $n$-th roots of unity. Hmm, okay. Let's see. First, I remember that roots of unity are complex numbers that satisfy the equation $z^n = 1$. So, they lie on the unit circle in the complex plane and are equally spaced around the circle. The roots of unity are also the solutions to $z^n = 1$. The given equation is $z^4 + z^2 + 1 = 0$. I need to figure out the roots of this equation and then determine the smallest $n$ for which all these roots are $n$-th roots of unity. Let me try to factor the equation or find its roots. Maybe I can factor it as a quadratic in terms of $z^2$. Let me set $w = z^2$, so the equation becomes $w^2 + w + 1 = 0$. That's a quadratic equation, which I can solve using the quadratic formula. The quadratic formula says that for $ax^2 + bx + c = 0$, the roots are $(-b \pm \sqrt{b^2 - 4ac})/(2a)$. Applying that here, $a = 1$, $b = 1$, $c = 1$. So the roots are $(-1 \pm \sqrt{1 - 4})/2 = (-1 \pm \sqrt{-3})/2$. So, $w = (-1 \pm i\sqrt{3})/2$. Therefore, $z^2 = (-1 \pm i\sqrt{3})/2$. So, to find $z$, I need to take the square roots of these two complex numbers. Wait, but I remember that $(-1 + i\sqrt{3})/2$ is actually a complex number on the unit circle. Let me check its modulus. The modulus of a complex number $a + ib$ is $\sqrt{a^2 + b^2}$. So here, $a = -1/2$ and $b = (\sqrt{3})/2$. So modulus squared is $(1/4) + (3/4) = 1$, so modulus is 1. So, it's indeed on the unit circle. Similarly, $(-1 - i\sqrt{3})/2$ also has modulus 1. So, these are both complex numbers with modulus 1, which suggests they are points on the unit circle at angles of 120 degrees and 240 degrees, or $2\pi/3$ and $4\pi/3$ radians, respectively. So, if I can express these roots in exponential form, that might help. Since they're on the unit circle, they can be written as $e^{i\theta}$, where $\theta$ is the angle. So, for $(-1 + i\sqrt{3})/2$, $\theta$ is $2\pi/3$, and for $(-1 - i\sqrt{3})/2$, $\theta$ is $4\pi/3$. Therefore, the solutions for $z^2$ are $e^{i2\pi/3}$ and $e^{i4\pi/3}$. So, taking square roots, the solutions for $z$ would be the square roots of these. Wait, but when you take the square root of $e^{i\theta}$, you get $e^{i\theta/2}$ and $e^{i(\theta/2 + \pi)}$. So, let's compute that. For the first root, $z^2 = e^{i2\pi/3}$, so $z = e^{i\pi/3}$ and $e^{i(\pi/3 + \pi)} = e^{i4\pi/3}$. Similarly, for the second root, $z^2 = e^{i4\pi/3}$, so $z = e^{i2\pi/3}$ and $e^{i(2\pi/3 + \pi)} = e^{i5\pi/3}$. Wait, \textbf{[...](omit 5291 tokens)} Therefore, the smallest positive integer $n$ is 6. \textbf{Final Answer} The smallest positive integer \( n \) is \boxed{6}.\texttt{</think>}
To find the smallest positive integer \( n \) such that all the roots of \( z^4 + z^2 + 1 = 0 \) are \( n \)-th roots of unity, we start by factoring the given polynomial.
First, we substitute \( w = z^2 \) to transform the equation into a quadratic:
\[ w^2 + w + 1 = 0 \]
Solving this quadratic equation using the quadratic formula, we get:
\[ w = \frac{-1 \pm \sqrt{-3}}{2} = \frac{-1 \pm i\sqrt{3}}{2} \]
These roots are complex numbers on the unit circle, specifically:
$ w = e^{i2\pi/3} \quad \text{and} \quad w = e^{i4\pi/3} $
Next, we find the roots of the original equation by solving for \( z \):

1. For \( z^2 = e^{i2\pi/3} \), the roots are:
   $ z = e^{i\pi/3} \quad \text{and} \quad z = e^{i4\pi/3} $
   
2. For \( z^2 = e^{i4\pi/3} \), the roots are:
   $ z = e^{i2\pi/3} \quad \text{and} \quad z = e^{i5\pi/3} $
   
These roots are the 6th roots of unity, as they are at angles \( \pi/3, 2\pi/3, 4\pi/3, \) and \( 5\pi/3 \). These angles are all multiples of \( \pi/3 \), which are angles of the 6th roots of unity.
The 6th roots of unity are \( e^{i2\pi k/6} \) for \( k = 0, 1, 2, 3, 4, 5 \). The roots of the given polynomial are a subset of these, specifically excluding 1 and -1. Therefore, all roots are 6th roots of unity.
Thus, the smallest positive integer \( n \) such that all the roots are \( n \)-th roots of unity is:
\textcolor{CorrectGreen}{$ \boxed{6} $} \textbf{(6080 tokens)}

\end{tcolorbox}
\caption{Response from the Parallel-RL model on a sample problem from MATH500. This model uses \textbf{Naive Parallel-RL} (shown in Section \ref{sec:parallel_exp}), which is obtained by directly averaging the models trained on \textbf{four tasks}.}
\label{case:parallel_model_on_math}
\end{figure*}

\begin{figure*}
\centering
\begin{tcolorbox}[size=title,opacityfill=0.1]
\noindent
\faUser\ \textbf{Question of KK (Logic)}: A very special island is inhabited only by knights and knaves. Knights always tell the truth, and knaves always lie. You meet 2 inhabitants: Owen, and Penelope. Owen asserted: ``Penelope is a knight if and only if Penelope is a knave''. Penelope noted, ``Owen is not a knave''. So who is a knight and who is a knave? \\

\faCogs\ \textbf{DeepSeek-R1-Distill-Qwen-1.5B (Base Model)}: \hfill \textcolor[HTML]{D12121}{\faTimesCircle\ \textbf{Incorrect}}

\texttt{<think>}Okay, so I have this logic puzzle about knights and knaves. Knights always tell the truth, and knaves always lie. I need to figure out who is who between Owen and Penelope based on their statements.First, let's break down Owen's statement: ``Penelope is a knight if and only if Penelope is a knave." The ``if and only if" part is a bit tricky because it means both parts of the statement need to be true for the whole statement to be true. So, either Penelope is a knight and also a knave, or she's neither. But wait, knights are always truthful, and knaves always lie. So Penelope can't be both a knight and a knave at the same time. That makes Penelope's statement a bit confusing.Now, Penelope's statement is: ``Owen is not a knave." Since knights tell the truth and knaves lie, if Penelope is a knight, her statement must be true, meaning Owen is not a knave, so Owen must be a knight too. But if Penelope is a knave, her statement is false, which would mean that Owen is a knave. However, if Owen is a knave, then Penelope's statement is a lie, which aligns with Penelope being a knave. Wait, but if Owen is a knight, then his statement must be true. So if Penelope is also a knight, her statement ``Owen is not a knave" would be true because Owen is indeed a knight. That seems consistent. Alternatively, if Penelope is a knave, her statement would be false, meaning Owen is a knave. But if Owen is a knave, then his statement ``Penelope is a knight if and only if Penelope is a knave" would be false because Penelope is a knave and a knight at the same time, which isn't possible. So that would lead to a contradiction.Therefore, the only consistent scenario is that both Owen and Penelope are knights. That way, Penelope's statement is true, and Owen's statement holds because he's a knight.\texttt{</think>} Penelope and Owen are both knights. This is because Penelope's statement that Owen is not a knave is true, and Owen's statement that Penelope is a knight if and only if Penelope is a knave is also true since both are knights. {\textcolor{WrongRed}{\texttt{<answer>} (1) Owen is a knight.  (2) Penelope is a knight. \texttt{</answer>}}} \textbf{(523 tokens)}

\

\faSearch\ \textbf{Logic Model}: \hfill \textcolor[HTML]{2EA121}{\faCheckCircle\ \textbf{Correct}}

\texttt{<think>} Okay, so I'm trying to solve this logic puzzle where there are two inhabitants on an island: Owen and Penelope. They're either knights, who always tell the truth, or knaves, who always lie. I need to figure out who is who based on what they said. First, let's break down what each statement means. Owen said, ``Penelope is a knight if and only if Penelope is a knave." That's a bit of a mouthful, but let's parse it. The ``if and only if" means that both parts of the statement are connected by an equivalence—so if one is true, the other must be true, and if one is false, the other must be false. So, if Penelope is a knight, she tells the truth, so her statement must be true. That would mean Penelope is both a knight and a knave, which doesn't make sense because knights and knaves are distinct categories. Wait, that's a problem. So if Penelope is a knight, her statement that ``Penelope is a knight if and only if Penelope is a knave" would have to be true. But that would require Penelope to be both a knight and a knave, which is impossible. Therefore, Penelope cannot be a knight. That means Penelope must be a knave, and since knaves lie, her statement is false. Now, let's look at Penelope's statement: ``Owen is not a knave." Since Penelope is a knave, her statement is false. So the opposite of ``Owen is not a knave" must be true. The opposite of ``not a knave" is ``knave," so Penelope's statement being false means ``Owen is a knave." Therefore, Owen is a knave. To recap: - If Penelope is a knave, her statement ``Owen is not a knave" is false, meaning Owen is a knave. - Since Owen is a knave, his statement ``Penelope is a knight if and only if Penelope is a knave" is false, which fits because knaves always lie. So, both Penelope and Owen are knaves. \texttt{</think>} Penelope is a knave, and Owen is also a knave. {\textcolor{CorrectGreen}{\texttt{<answer>} (1)Penelope is a knave.  (2) Owen is a knave. \texttt{</answer>}}} \textbf{(505 tokens)}

\end{tcolorbox}
\caption{Comparison of responses from the base model (DeepSeek-R1-Distill-Qwen-1.5B) and the logic model on a sample problem from Knights and Knaves. The logic model is derived from the base model via GRPO training on a logic dataset. As shown, the base model fails to solve the problem, while the logic model provides the correct answer.}
\label{case:base_logic_on_logickk}
\end{figure*}

\begin{figure*}
\centering
\begin{tcolorbox}[size=title,opacityfill=0.1]
\noindent
\faUser\ \textbf{Question of KK (Logic)}: A very special island is inhabited only by knights and knaves. Knights always tell the truth, and knaves always lie. You meet 2 inhabitants: Owen, and Penelope. Owen asserted: ``Penelope is a knight if and only if Penelope is a knave''. Penelope noted, ``Owen is not a knave''. So who is a knight and who is a knave? \\

\faCalculator\ \textbf{Math Model}: \hfill \textcolor[HTML]{D12121}{\faTimesCircle\ \textbf{Incorrect}}

\texttt{<think>}Okay, so I've got this logic puzzle about knights and knaves. Knights always tell the truth, and knaves always lie. There are two people involved: Owen and Penelope. Owen says, ``Penelope is a knight if and only if Penelope is a knave.'' Penelope responds, ``Owen is not a knave.'' I need to figure out who is a knight and who is a knave. First, I'll try to understand what each statement means. The first statement by Owen uses a biconditional (``if and only if''), which means both parts must be true or both must be false. So, if Penelope is a knight, then Penelope is a knight, and Penelope is a knave. Wait, that can't be right because Penelope can't be both a knight and a knave at the same time. That seems contradictory. On the other hand, if Penelope is a knave, then Penelope is not a knight, and Penelope is a knight. Again, that's a contradiction because a knave can't be both not a knight and a knight simultaneously. Hmm, this is confusing. Maybe I made a mistake in interpreting the statements. Let me break it down step by step. Let's consider the two possibilities: Penelope is a knight or Penelope is a knave. 1. **Case 1: Penelope is a knight.**   - Since Penelope is a knight, she tells the truth. Her statement, ``Owen is not a knave,'' means Owen is a knight. But wait, if Owen is a knight, then his statement must be true. His statement is a biconditional: Penelope is a knight if and only if Penelope is a knave. If Penelope is a knight, then the statement would be ``Knight if and only if Knave,'' which is a contradiction because a knight cannot be a knave. Therefore, this case leads to a contradiction. 2. \textbf{[...](omit 257 tokens)} So, there's a problem here. I think I messed up the logical analysis. Let me try again. If Penelope is a knave, her statement is false. Her statement is ``Owen is not a knave.'' The negation of ``Owen is not a knave'' is ``Owen is a knave.'' So, Owen is a knave. Now, Owen's statement is ``Penelope is a knight if and only if Penelope is a knave.'' Both are knaves, so the biconditional is true. But since Owen is a knave, his statement must be false. That can't happen because a true statement can't be a lie. Therefore, the only consistent scenario is that Penelope is a knight and Owen is a knave. Wait, but earlier that led to a contradiction. Let me think again. If Penelope is a knight, her statement is true. Her statement is ``Owen is a knight if and only if Penelope is a knave.'' Since Penelope is a knight, the statement is ``Owen is a knight if and only if Penelope is a knave,'' which is ``Owen is a knight if and only if not a knight.'' That's a contradiction because a knight can't be a knave. So Penelope can't be a knight. Therefore, Penelope must be a knave, and Owen must be a knight. But then Owen, being a knight, tells the truth about Penelope's status. His statement is ``Penelope is a knight if and only if Penelope is a knave,'' which is a biconditional. Since Penelope is a knave, the statement is false because Penelope is a knave but not a knight. But as a knight, Owen must tell the truth, so his statement must be true. This is a contradiction. This is confusing. Maybe there's no solution, but logically, Penelope must be a knave because her statement leads to a contradiction if she were a knight. So the only consistent solution is that Penelope is a knave, and Owen is also a knave, but that leads to a contradiction in Owen's statement. Therefore, the only solution is Penelope is a knave, and Owen is a knight, but that contradicts Owen's truthful statement.  I think I need to conclude that Penelope is a knave, and Owen is a knight, but that leads to a contradiction. Alternatively, Penelope is a knight, but that leads to a contradiction. Therefore, there must be a mistake in my reasoning. Wait, perhaps if Penelope is a knight, her statement is true, which would mean both Penelope and Owen are knaves, but that contradicts. If Penelope is a knave, her statement is false, meaning Penelope is a knight, which is also a contradiction. Therefore, there must be an error in the problem setup, but logically, Penelope must be a knave, and Owen is a knight, but that contradicts. Alternatively, Penelope is a knight, and Owen is a knave, but that also contradicts. This is perplexing.\texttt{</think>} {\textcolor{WrongRed}{\texttt{<answer>} Penelope is a knave, and Owen is a knight. \texttt{</answer>}}} \textbf{(1399 tokens)}

\end{tcolorbox}
\caption{Response from the math model on a sample problem from Knights and Knaves. The math model is derived from the base model via GRPO training on a math dataset.}
\label{case:math_on_logickk}
\end{figure*}

\begin{figure*}
\centering
\begin{tcolorbox}[size=title,opacityfill=0.1]
 
\faUser\ \textbf{Question of KK (Logic)}: A very special island is inhabited only by knights and knaves. Knights always tell the truth, and knaves always lie. You meet 2 inhabitants: Owen, and Penelope. Owen asserted: ``Penelope is a knight if and only if Penelope is a knave''. Penelope noted, ``Owen is not a knave''. So who is a knight and who is a knave? \\

\faRobot\ \textbf{Parallel-RL Model}: \hfill \textcolor[HTML]{2EA121}{\faCheckCircle\ \textbf{Correct}}

\texttt{<think>}Okay, so I've got this logic puzzle involving knights and knaves. Knights always tell the truth, and knaves always lie. I need to figure out who is who between Owen and Penelope. Let's break it down step by step. First, Owen made a statement: ``Penelope is a knight if and only if Penelope is a knave.'' That's a bit of a mouthful. The ``if and only if'' part means that both parts of the statement are connected by a biconditional, so they must be both true or both false for the statement to hold. Penelope then said, ``Owen is not a knave.'' So Owen is claiming that Owen isn't a knave. Since knaves always lie, if Penelope is a knave, her statement would be false, meaning Owen is a knave. But wait, if Penelope is a knave, that would mean Owen is a knave, which would make Penelope's statement false, which aligns with knaves always lying. Hmm, maybe I should approach this more systematically. Let's consider the two possibilities: Penelope is a knight or a knave. 1. \textbf{Case 1: Penelope is a knight.}    - Since knights tell the truth, Penelope's statement ``Owen is not a knave'' must be true. So, Owen is not a knave. That means Owen is a knight.    - Now, let's look at Owen's statement. He said ``Penelope is a knight if and only if Penelope is a knave.'' If both parts are true, that would mean Penelope is both a knight and a knave, which isn't possible because each inhabitant is either a knight or a knave, not both.    - Alternatively, if both parts are false, then Penelope is neither a knight nor a knave, which also contradicts the fact that each inhabitant is either one or the other.    - Therefore, Penelope being a knight leads to a contradiction because Owen's statement can't be satisfied without a contradiction. So this case can't be true. 2. \textbf{Case 2: Penelope is a knave.}   - As knaves always lie, Penelope's statement ``Owen is not a knave'' is false. So, the opposite is true: Owen is a knave.   - Now, let's check Owen's statement again. He said ``Penelope is a knight if and only if Penelope is a knave.'' If both parts are true, Penelope would have to be both a knight and a knave, which isn't allowed. But since Penelope is a knave, her statement is false, so the biconditional is false. That means either Penelope is a knight and a knave (which is impossible), or Penelope is neither a knight nor a knave, which also isn't possible. Wait, that doesn't seem right. Let me clarify: For a biconditional to be false, at least one part must be false. So, either Penelope is not a knight or Penelope is not a knave. Since we've established Penelope is a knave, the second part is false. Therefore, the biconditional is false, which is consistent with Owen being a knave, who always lies. So this case holds without contradiction. So, putting it all together: - Penelope is a knave.- Owen is also a knave because Penelope's lie makes him one. \texttt{</think>} {\textcolor{CorrectGreen}{\texttt{<answer>} (1) Penelope is a knave.  (2) Owen is a knave. \texttt{</answer>}}} \textbf{(756 tokens)}

\end{tcolorbox}
\caption{Response from the Parallel-RL model on a sample problem from Knights and Knaves. This model uses \textbf{Naive Parallel-RL} (shown in Section \ref{sec:parallel_exp}), which is obtained by directly averaging the models trained on \textbf{four tasks}.}
\label{case:parallel_on_logickk}
\end{figure*}

\clearpage

\section{Prompt Templates}
\label{appendix:prompt}

In this section, we provide the specific prompt templates used for both training and evaluation across the four representative tasks discussed in the main text.

\textbf{Training Prompts.}
The prompt templates used for training the Math and Science tasks are presented in Figure~\ref{fig:math_sci_train_prompt}. For the Logic task, we follow the configuration established in Logic-RL, with the specific prompt shown in Figure~\ref{fig:logic_prompt}. Finally, for the Code task, we adopt the prompt template displayed in Figure~\ref{fig:code_prompt}, which aligns with the settings used in DeepCoder.

\textbf{Evaluation Prompts.}
For evaluation, we utilize the Lighteval framework and largely adhere to the prompt templates provided in its official repository. For Math benchmarks, Figure~\ref{fig:aime_prompt} and Figure~\ref{fig:math500_prompt} illustrate the templates used for evaluating AIME and MATH500, respectively, following the Lighteval standard. For Science benchmarks, the prompt templates for the MMLU and GPQA are shown in Figure~\ref{fig:mmlu_prompt} and Figure~\ref{fig:gpqa_prompt}, also consistent with the Lighteval repository. For Logic and Code tasks, the evaluation prompts are the same as those used during training.

\begin{figure}[h]
  \centering
  \begin{subfigure}[b]{0.8\textwidth}
    \includegraphics[width=\textwidth]{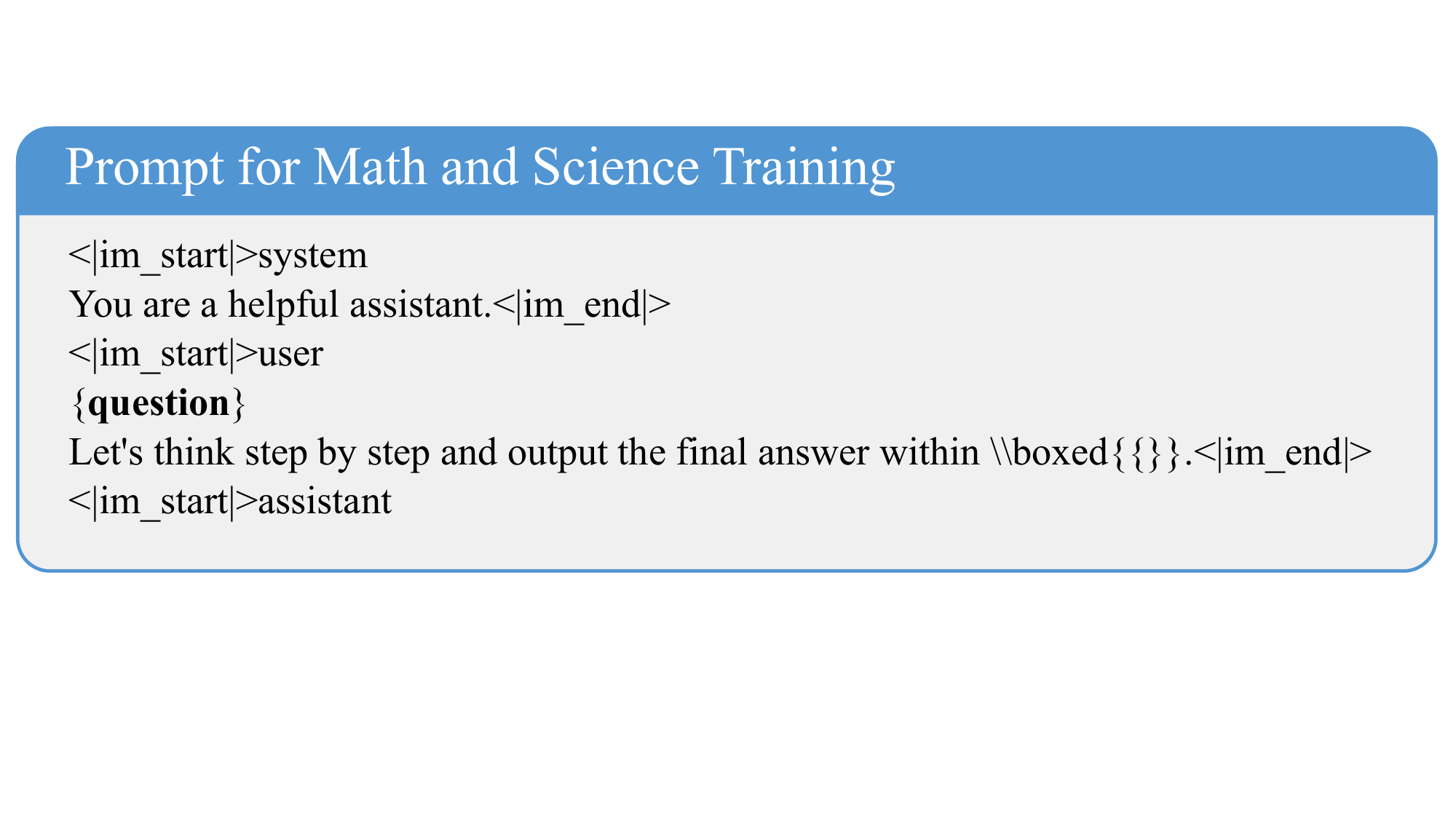}
  \end{subfigure}
\caption{
Prompt for \textbf{Math and Science Task Training}. We follow the prompt template settings in deepscaler.
}
\label{fig:math_sci_train_prompt}
\end{figure}

\begin{figure}[h]
  \vspace{-5pt}
  \centering
  \begin{subfigure}[b]{0.8\textwidth}
    \includegraphics[width=\textwidth]{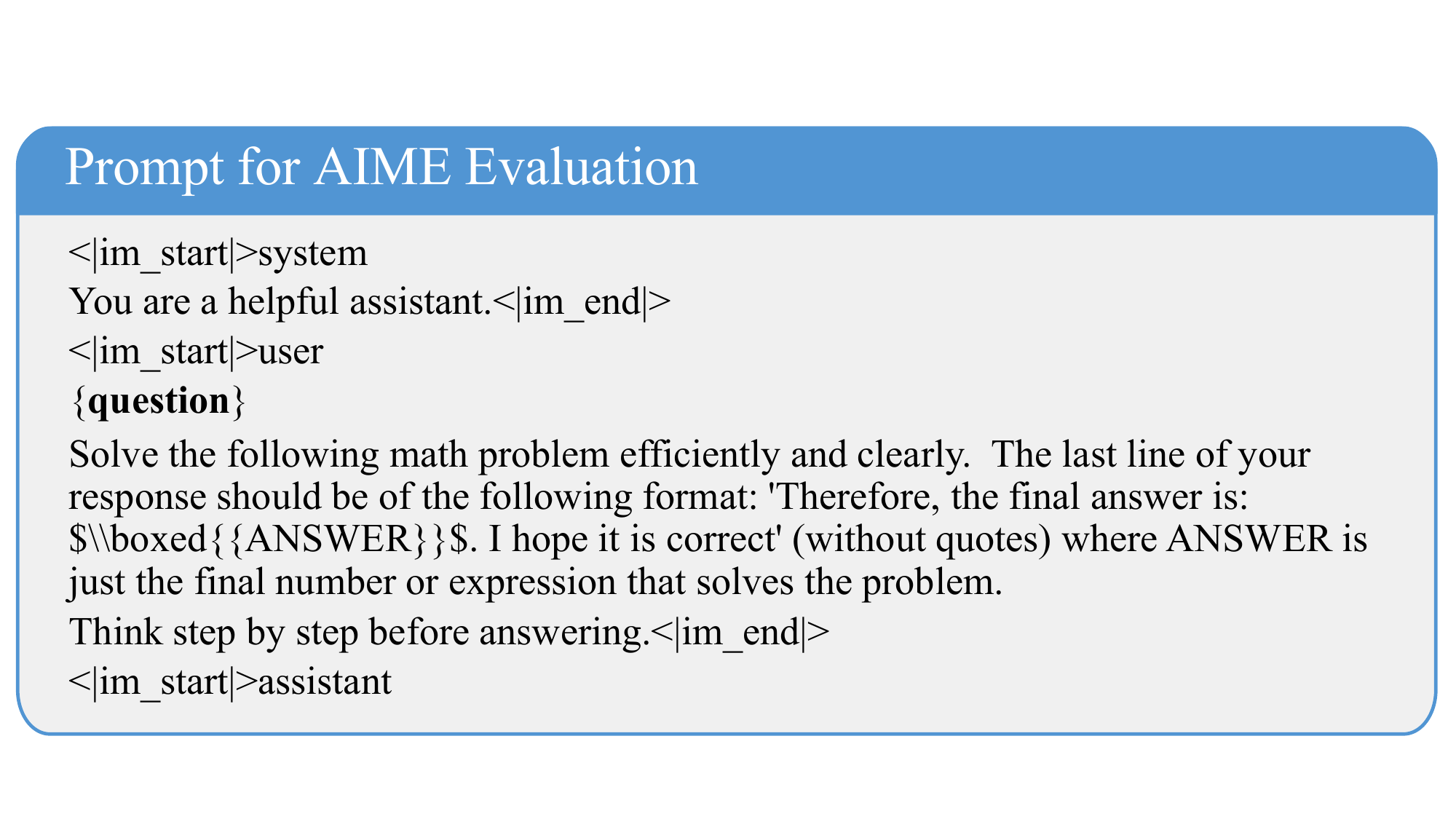}
  \end{subfigure}
\caption{
Prompt for \textbf{AIME Evaluation}. We follow the prompt template settings in Lighteval.
}
\label{fig:aime_prompt}
\end{figure}

\begin{figure}[h]
  \vspace{-5pt}
  \centering
  \begin{subfigure}[b]{0.8\textwidth}
    \includegraphics[width=\textwidth]{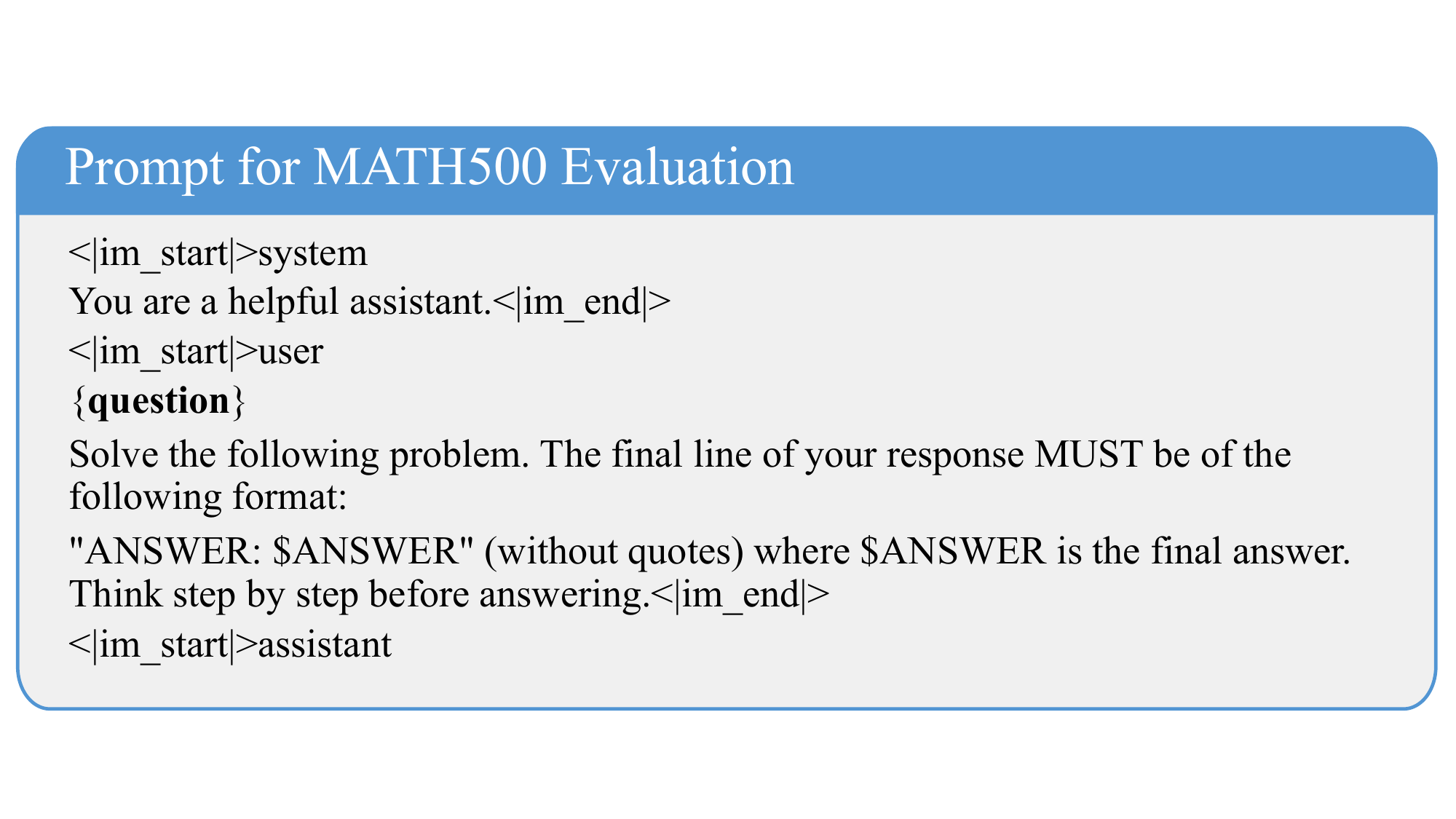}
  \end{subfigure}
\caption{
Prompt for \textbf{MATH500 Evaluation}. We follow the prompt template settings in Lighteval.
}
\label{fig:math500_prompt}
\end{figure}

\begin{figure}[h]
  \vspace{-5pt}
  \centering
  \begin{subfigure}[b]{0.8\textwidth}
    \includegraphics[width=\textwidth]{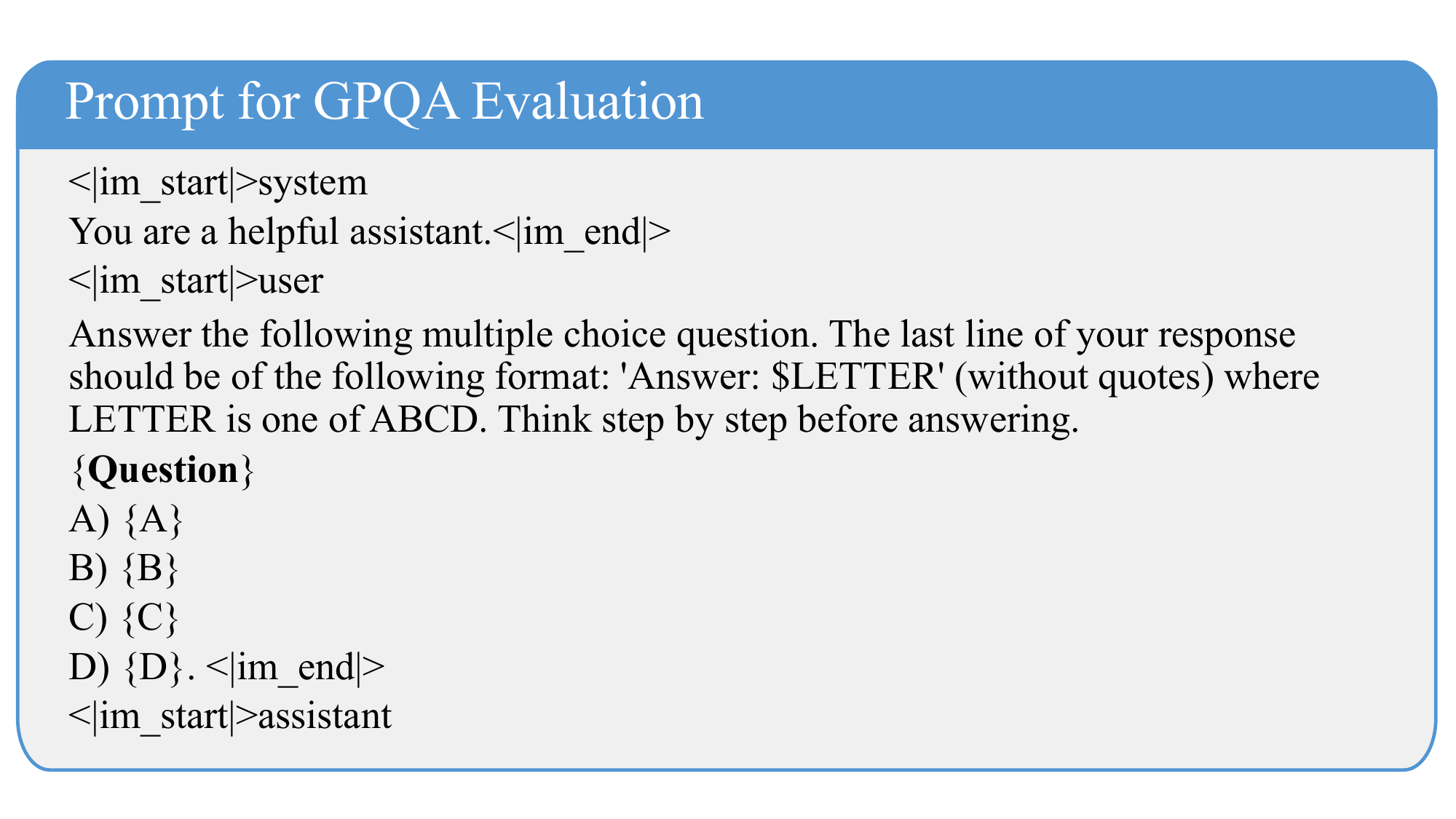}
  \end{subfigure}
\caption{
Prompt for \textbf{GPQA Evaluation}. We follow the prompt template settings in Lighteval.
}
\label{fig:gpqa_prompt}
\end{figure}

\begin{figure}[h]
  \vspace{-5pt}
  \centering
  \begin{subfigure}[b]{0.8\textwidth}
    \includegraphics[width=\textwidth]{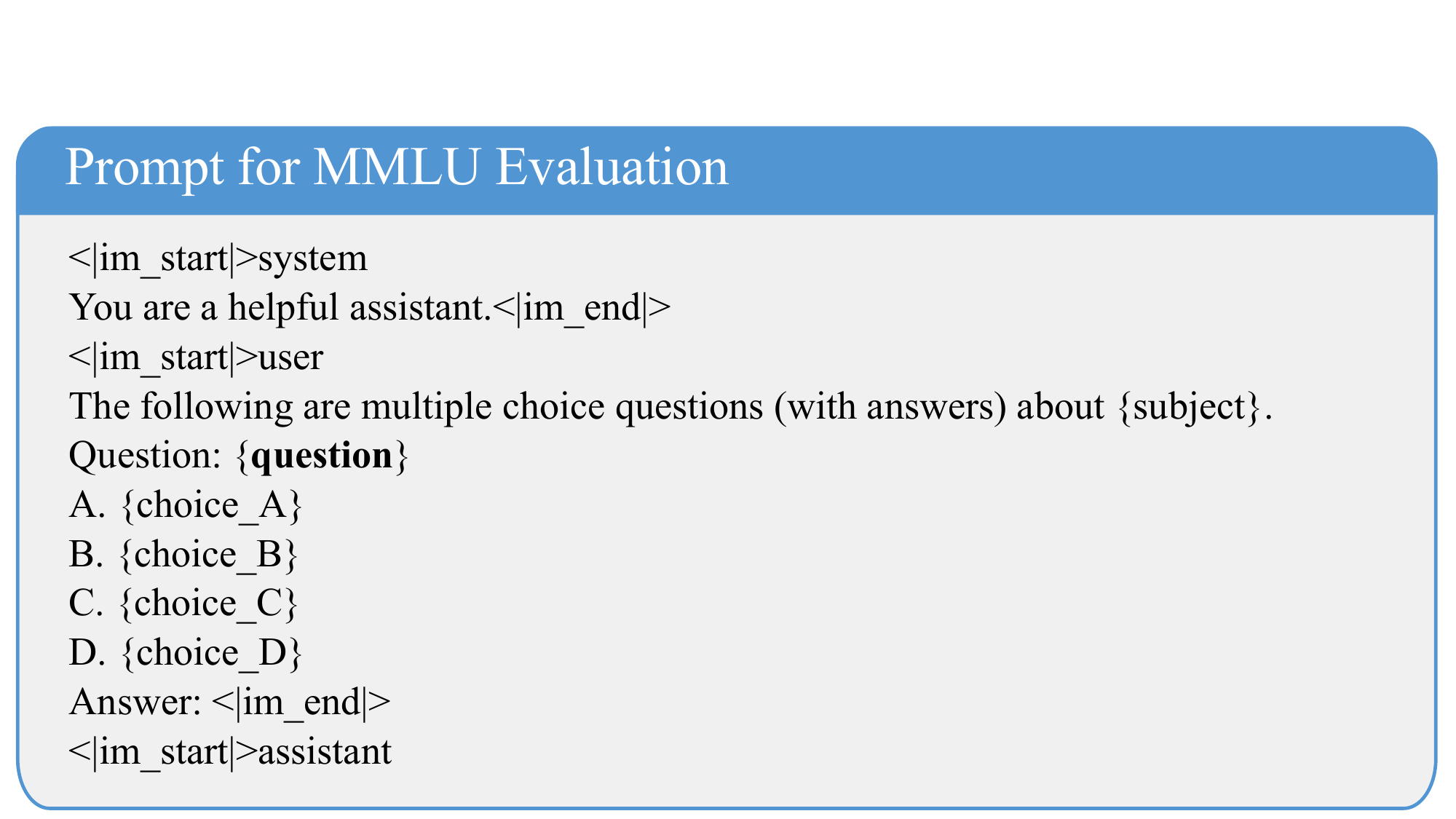}
  \end{subfigure}
\caption{
Prompt for \textbf{MMLU Evaluation}. We follow the prompt template settings in Lighteval.
}
\label{fig:mmlu_prompt}
\end{figure}

\begin{figure}[h]
\vspace{-5pt}
  \centering
  \begin{subfigure}[b]{0.8\textwidth}
    \includegraphics[width=\textwidth]{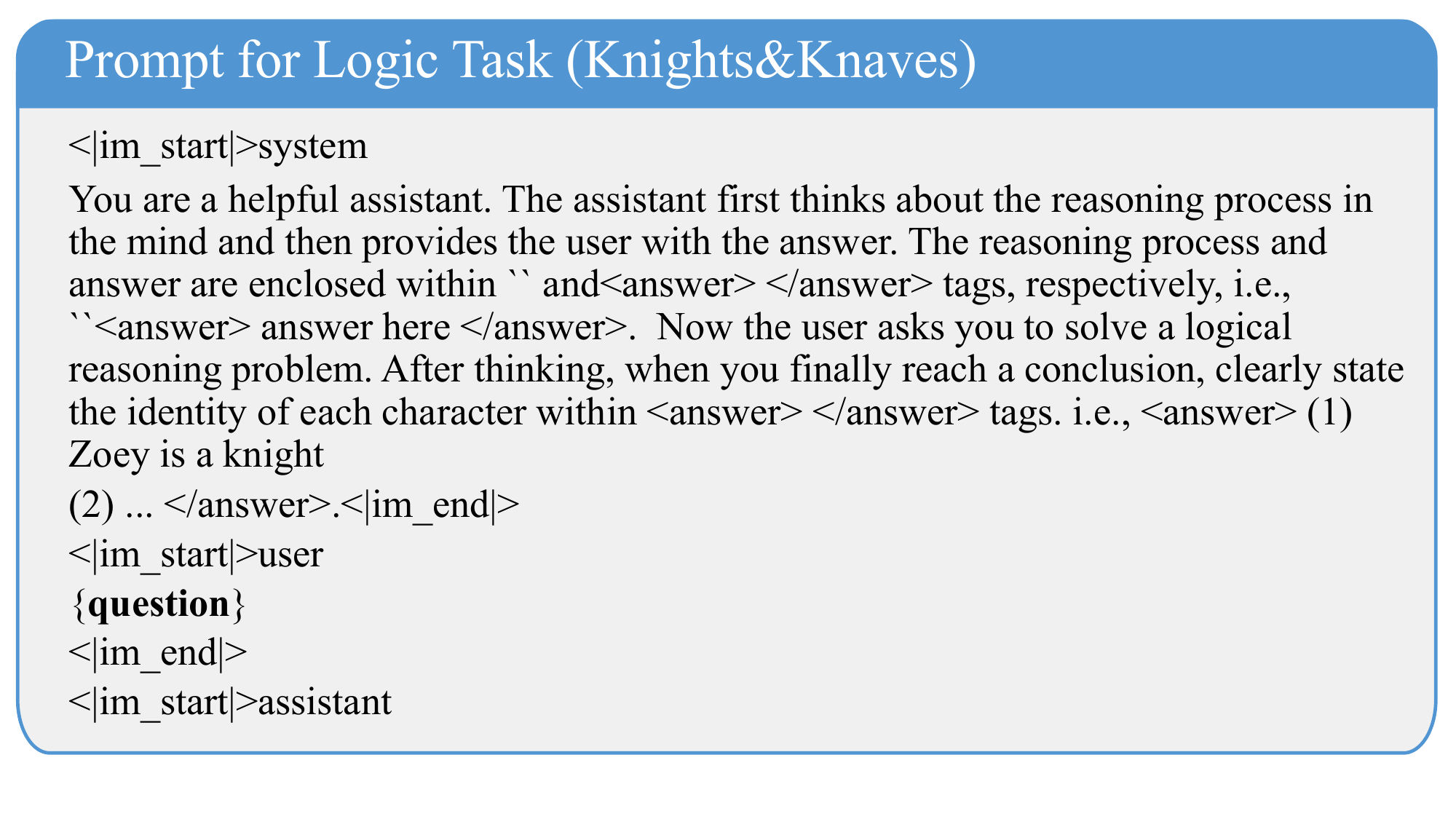}
  \end{subfigure}
\caption{
Prompt for \textbf{Logic Task Training and Evaluation}. We follow the prompt template settings in LogicRL.
}
\label{fig:logic_prompt}
\end{figure}

\begin{figure}[h]
\vspace{-5pt}
  \centering
  \begin{subfigure}[b]{0.8\textwidth}
    \includegraphics[width=\textwidth]{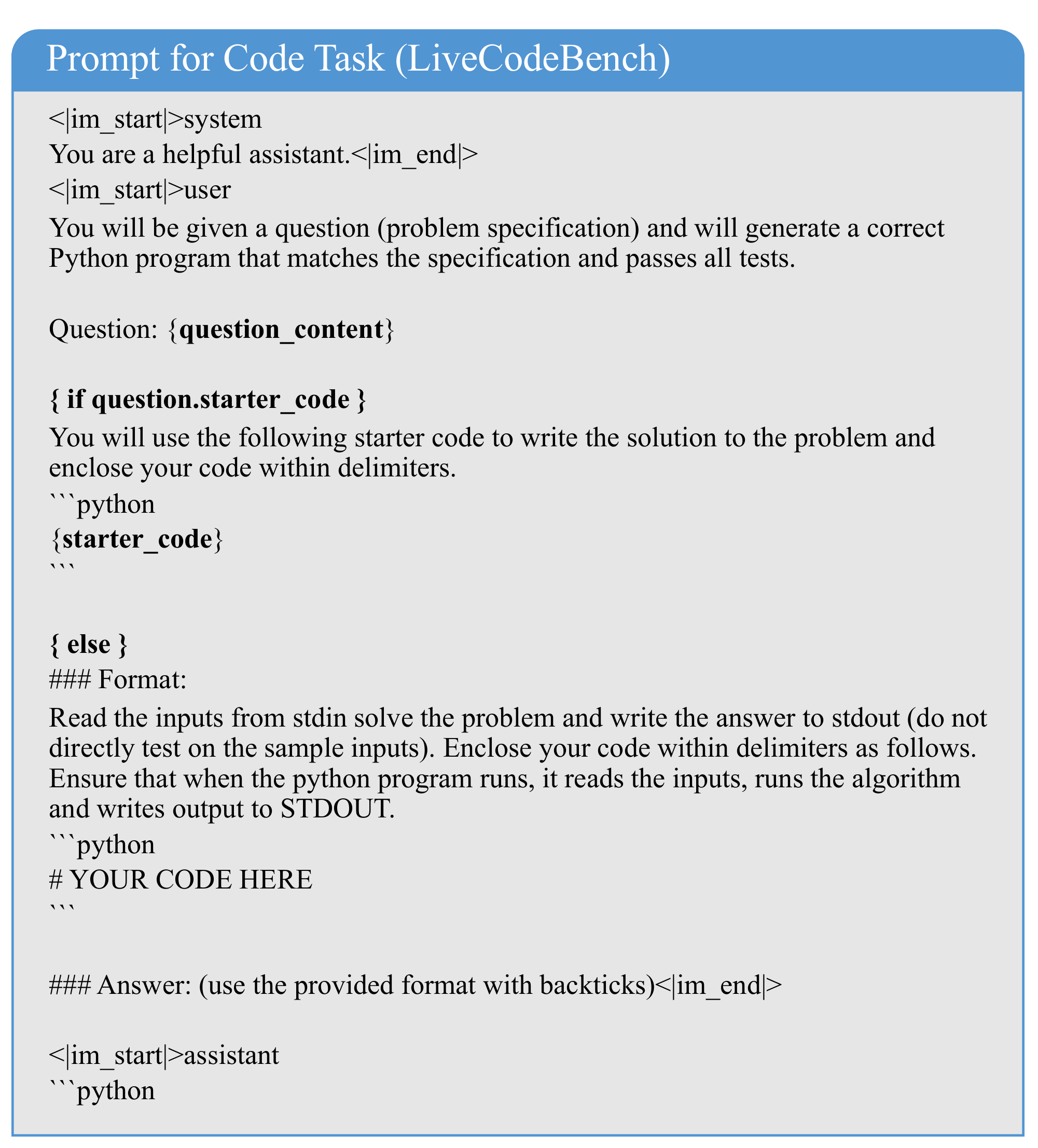}
  \end{subfigure}
\caption{
Prompt for \textbf{Code Task Training and Evaluation}. We follow the prompt template settings in LiveCodeBench and deepcoder.
}
\label{fig:code_prompt}
\end{figure}

\end{document}